\documentclass[journal]{IEEEtai}

\usepackage{amsmath,amssymb,amsfonts,amsthm,bm,mathtools}
\usepackage{algorithm}
\usepackage[noend]{algpseudocode}
\usepackage{booktabs}
\usepackage{multirow}
\usepackage{array}
\usepackage{makecell}
\usepackage{graphicx}
\usepackage{adjustbox}
\usepackage{xcolor}
\usepackage{textcomp}
\usepackage[T1]{fontenc}
\usepackage[figuresright]{rotating}
\usepackage{cite}
\usepackage{url}

\usepackage{tikz}
\usetikzlibrary{shapes.geometric, arrows.meta, positioning, fit, backgrounds, calc, decorations.pathreplacing, patterns}

\usepackage{hyperref}
\hypersetup{colorlinks=true,linkcolor=blue,citecolor=blue,urlcolor=blue}

\extrafloats{100}

\definecolor{dstInk}    {HTML}{1F2A37}
\definecolor{dstLine}   {HTML}{4B5563}
\definecolor{dstSoft}   {HTML}{E5E7EB}
\definecolor{dstTemp}   {HTML}{DCE7F4}
\definecolor{dstTempB}  {HTML}{6F8FB6}
\definecolor{dstSpec}   {HTML}{E3E1EE}
\definecolor{dstSpecB}  {HTML}{7A6FA6}
\definecolor{dstFuse}   {HTML}{D9E7DE}
\definecolor{dstFuseB}  {HTML}{5E8B73}
\definecolor{dstHead}   {HTML}{F1E7D6}
\definecolor{dstHeadB}  {HTML}{A88B55}
\definecolor{dstPre}    {HTML}{F3F4F6}
\definecolor{dstAccent} {HTML}{B45454}
\definecolor{dstSpeedup}{HTML}{2F6F4E}
\definecolor{dstStreamA} {HTML}{DCE7F4}
\definecolor{dstStreamB} {HTML}{E3E1EE}
\definecolor{dstStreamAB}{HTML}{7A6FA6}

\newtheorem{theorem}{Theorem}

\newtheorem{proposition}{Proposition}
\newtheorem{corollary}{Corollary}
\theoremstyle{remark}
\newtheorem{remark}{Remark}
\theoremstyle{plain}

\makeatletter
\@ifundefined{IEEEImpStatement}{%
    {\par\noindent\bfseries\itshape Impact Statement:\ \normalfont\bfseries}%
    {\par\medskip}}{}
\makeatother

\providecommand{\cmark}{\textcolor{dstFuseB}{\checkmark}}
\providecommand{\pmark}{\textcolor{dstHeadB}{$\circ$}}
\providecommand{\xmark}{\textcolor{dstAccent}{$\times$}}

\newcommand{\mc}[3]{%
\makecell[c]{#1\\[-1.2pt]{\fontsize{6}{6.5}\selectfont[#2, #3]}}}
\newcommand{\mcb}[3]{%
\makecell[c]{\textbf{#1}\\[-1.2pt]{\fontsize{6}{6.5}\selectfont[#2, #3]}}}

\begin{document}

\title{DSTNet: Dynamic Spectral Trajectory Network for Causal Multi-Horizon Financial Forecasting}

\author{Aashish Bohra, Lokenra Vishwakarma%
\thanks{A. Bohra is with the Department of Computer Science and Engineering, Indian Institute of Technology Jodhpur, Jodhpur, Rajasthan 342030, India (e-mail: bohra.1@iitj.ac.in).}
\thanks{L. Vishwakarma is with the School of Computing and Data Sciences, FLAME University, Pune Maharashtra, India (e-mail: lokendra.vishwakarma@flame.edu.in).}}

\maketitle

\begin{abstract}
Wavelet-based financial forecasters typically use the transform only to denoise, or reduce it to a single spectral snapshot at the forecast origin, and the convolution that produces the coefficients is usually bilateral, so it can read past the forecast origin. DSTNet instead retains the recent evolution of filter-bank magnitudes as a causal Dynamic Spectral Trajectory, built from seven trailing technical indicators over a twenty-day lookback with a one-sided Morlet-derived filter bank and an explicit burn-in for the left-boundary transient. A factorized Scale-Temporal Spectral Transformer attends along the time and filter-bank axes separately, a learned gate fuses the spectral branch with a CNN-BiLSTM, and horizon-specific gates emit one, three, five, and ten day forecasts in a single pass. We evaluate seven equity indices and gold under a common expanding-window protocol and an untouched one-year hold-out, against nine learned baselines and a random-walk persistence benchmark. Under MAE and MAPE, persistence is the strongest of the ten fixed competitors in 29 of the 32 series-horizon cells and DSTNet is the only model below it in every cell, by 0.7 to 0.9 percent at one day and 3.4 to 4.5 percent at ten days. At one day, paired testing favours DSTNet against the weaker learned baselines but is inconclusive against persistence and the strongest learned forecasters. A downstream allocation diagnostic does not support an equity-timing advantage on any of the seven indices.
\end{abstract}


\begin{IEEEkeywords}
Time-frequency analysis, causal complex bandpass filtering, multi-horizon forecasting, factorized attention, financial time series, deep learning.
\end{IEEEkeywords}

\section{Introduction}
\label{sec:intro}
\IEEEPARstart{P}{redicting} equity indices at multiple horizons in a single forward pass requires one model to represent target dynamics that may differ across forecast scales. Information that is useful at one horizon need not receive the same emphasis at another. This motivates a representation that preserves how spectral structure evolves across the observed window and a decoder that can modulate the fused representation separately for each forecast horizon. We treat the relation between forecast horizon and spectral reliance as an empirical question rather than assuming a fixed frequency-to-horizon correspondence~\cite{corsi2009simple, genccay2002introduction}. The distinction we draw is between a spectral state and a spectral process. A static filter-bank slice records where coefficient magnitudes are concentrated at the forecast origin, but it does not retain how that configuration developed across the preceding window. Two windows can therefore share a similar final spectral state while following different recent spectral paths~\cite{torrence1998,ang2012regime,rua2009international}.

Recent transformer forecasters have made impressive progress on the second half of this problem~\cite{informer2021,autoformer2021,patchtst2023,itransformer2024,timemixer2024}. The first half has fared less well. Wavelet-based hybrids exist~\cite{ziolkowski2026enhancing,li2026importance}. Bilateral wavelet preprocessing can introduce look-ahead leakage when coefficients are computed using observations beyond the forecast origin or when preprocessing is performed on the full series before chronological splitting. Such leakage can inflate reported forecasting accuracy. The few works that do use causal wavelets typically consume only a single time-slice of the result, discarding the temporal evolution of coefficient magnitudes that is the very feature making wavelets attractive in the first place. The proposed DSTNet addresses both halves with one architecture. The contributions are as follows:

\begin{enumerate}
    \item We introduce the Dynamic Spectral Trajectory (DST) tensor, $\mathbf{D}^{(i)} \in \mathbb{R}^{w \times S \times M}$, which represents the evolution of causal spectral responses over the complete $w$-step lookback window across $S$ fixed FIR scales and $M$ technical indicators. Unlike static spectral representations that retain only a single temporal snapshot, DST explicitly preserves the scale-temporal evolution of each input sample, enabling the forecasting model to exploit changes in spectral structure over time. The contribution of this trajectory representation is isolated through the No-DST ablation.
    \item We develop a strictly causal end-to-end time--frequency processing pipeline designed to prevent future-information leakage during forecasting. The pipeline combines a one-sided FIR wavelet denoiser with thresholds estimated exclusively from the training data and a strictly causal Morlet filter bank with explicit burn-in exclusion. Proposition~\ref{thm:causality} establishes that the spectral transformation depends only on observations available at or before the forecast origin, while Corollary~\ref{lem:burnin} derives the required burn-in constant. This provides an explicit causality guarantee for the complete spectral representation rather than relying solely on chronological train--test partitioning.
    \item We propose a factorized Scale--Temporal Spectral Transformer (STST) that separately models temporal dependencies and cross-scale interactions within the DST representation before integrating the resulting streams through a cross-stream MLP. This factorization reduces the attention-score complexity from $\mathcal{O}((wS)^2d)$ for joint scale--time attention to $\mathcal{O}((w^2+S^2)d)$. Theorem~\ref{thm:complexity} formally derives this reduction and distinguishes the equal-width theoretical ratio from the reductions obtained under the configured embedding widths and at the complete-model level. A joint-STST ablation quantifies the empirical trade-off introduced by the factorized attention structure.
\end{enumerate}
The rest of the paper proceeds as follows. Section~\ref{sec:related} reviews prior work. Section~\ref{sec:methodology} presents DSTNet, with formal correctness arguments deferred to Section~\ref{sec:theory}. Section~\ref{sec:experiments} reports the empirical evaluation. Section~\ref{sec:conclusion} concludes.

\section{Related Work}\label{sec:related}
We position DSTNet against the wavelet and deep-learning hybrids built for equity series, reviewed in four threads and followed by a feature comparison.

\subsection{Time-Frequency and Wavelet Methods in Financial Forecasting}\label{ssec:rw-wavelet}
Multiresolution analysis through the discrete wavelet transform rests on the work of Mallat~\cite{mallat1989} and Daubechies~\cite{daubechies1992}, and the shrinkage denoisers that finance later borrowed were formalised by Donoho and Johnstone~\cite{donoho1994ideal,donoho1995}. The continuous wavelet transform~\cite{torrence1998} buys a finer time-frequency picture at the price of redundancy. Financial applications followed quickly, because price series are non-stationary and multi-scale by nature, and a wavelet basis exposes both properties at once~\cite{minu2010,sezer2020}. Two habits recur across two decades of this work. First, the wavelet step is used as a denoiser and nothing more. The cleaned series is handed to an off-the-shelf network, and the explicit spectrum is thrown away. Ziolkowski et al.~\cite{ziolkowski2026enhancing} are a recent and clean instance of this pattern, pairing a discrete wavelet transform with an LSTM and reporting gains over the raw-input recurrent baseline. Li et al.~\cite{li2026importance} push the same idea harder, applying a level-one stationary wavelet transform to multivariate inputs before a multi-layer bidirectional LSTM, with large error reductions over the undenoised recurrent model on two US equities. The denoiser-only design is effective, and it is also a ceiling. WaVeFuse~\cite{bohra2026wavefuse} extends this direction by pairing channel-wise wavelet denoising with learned vertical attention fusion for regime-adaptive index forecasting, but it does not construct the causal spectral-trajectory representation developed here. Recent forecasting research has also moved directly toward frequency-aware and multi-resolution representations. FEDformer introduces frequency-enhanced decomposition within a Transformer~\cite{zhou2022fedformer}, while FreTS and FilterNet learn forecasting structure directly through frequency-domain operations~\cite{yi2023frequency,yi2024filternet}. Multi-scale structure is addressed from different architectural perspectives by NHITS, MTST, Pathformer, and WaveletMixer~\cite{challu2023nhits,zhang2024multi,chen2024pathformer,zhang2025waveletmixer}. These studies strengthen the motivation for scale-aware forecasting but do not remove the need to control temporal causality when the spectral representation itself is constructed at each forecast origin.

The reviewed wavelet-finance studies do not explicitly address whether their transform is causal at the forecast origin (Section~\ref{sec:intro}). We do not estimate the resulting bias in prior work, which would require rerunning each method under a causal transform. We instead remove the bilateral leakage path and prove that the implemented front end is causal.

\subsection{Decomposition-Based Hybrids for Equities}\label{ssec:rw-decomp}
A large and still-growing line attacks non-stationarity by splitting the raw series into frequency-specific subseries before any network sees it. Rezaei et al.~\cite{rezaei2021stock} run CEEMD and EMD on closing prices of four major indices, extract local features from each intrinsic mode function with a shallow 1D convolution, forecast each with an LSTM, and recombine linearly, which reduces error against CEEMD-LSTM but stays shallow and price-only. Ge~\cite{ge2025enhancing} decomposes OHLCV across eight indices with multivariate empirical mode decomposition and tunes the LSTM with an Aquila optimiser, reporting strong fit and resilience through the COVID-19 and Russia-Ukraine windows, at a heavy population-based search cost. Yu et al.~\cite{yu2025intelligent} select variational mode decomposition parameters by minimising sample entropy with a genetic algorithm, then forecast one-to-ten-step closing prices with a temporal convolutional network, clearing the modified Diebold-Mariano test at the one-percent level while remaining univariate. Gong and Xing~\cite{gong2024predicting} go to two stages, applying ICEEMDAN and then a particle-swarm-tuned VMD to the highest-frequency mode, before a BiLSTM-SAM-TCN ensemble predicts daily high and low indices with low percentage error. The common mechanism is sound and the common limitation is consistent. Decomposition is applied to raw price, not to the technical indicators that carry momentum and volatility structure, the per-scale streams are aggregated rather than processed channel by channel, and the recombination is a fixed linear or concatenation step rather than a learned data-dependent gate. None of these methods builds a causal spectral trajectory, and none carries a leakage guarantee. Related work has also shown that explicit treatment of distributional non-stationarity can materially affect forecasting behaviour, including through de-stationary attention and stationarity correction mechanisms~\cite{liu2022non,ma2024u}.

\subsection{Hybrid Deep Architectures for Equity Index Forecasting}\label{ssec:rw-hybrid}
A parallel line drops explicit upfront decomposition and instead stacks complementary neural blocks. Bhandari et al.~\cite{bhandari2022predicting} show that a single-layer LSTM on Haar-denoised, indicator-augmented S\&P~500 data can beat deeper variants, with significance during crisis windows, which is a useful reminder that depth is not the deciding factor. Thakkar and Chaudhari~\cite{thakkar2022information} co-optimise LSTM topology and a binary indicator-selection mask with an information-fusion genetic algorithm across nine datasets, with sizeable error and fit gains over a standard genetic baseline, though the topology is fixed and the indicator choice is binary rather than multi-scale. Tian et al.~\cite{tian2025bimt} fuse a bidirectional LSTM, a modified Transformer encoder, and a TCN decoder over short closing-price windows, with ablations confirming that the three blocks contribute jointly. Ji et al.~\cite{ji2024galformer} propose Galformer, a Transformer with non-autoregressive generative decoding and a hybrid mean-squared-error-plus-trend loss, which speeds multi-step inference by more than two orders of magnitude over Informer and lifts directional accuracy. Zubair and Huang~\cite{zubair2026enhanced} build an encoder-decoder around a BiGRU core with an autoencoder feature stage, attention weighting, and skip connections, with large percentage-error reductions across nine cross-market assets. MASTER~\cite{master2024} takes a different and finance-native route, modelling momentary cross-stock correlations through a market-guided Transformer. What unites this group is that fusion, where it exists at all, is static. Blocks are concatenated, summed, or combined through a fixed attention pattern. The model cannot reweight its branches sample by sample. Our vertical attention gate learns that weighting, and Section~\ref{ssec:ablations} reports how much it varies in practice. VertiFuseX~\cite{bohra2026vertifusex} moves in the same direction, fusing multiple temporal streams for generalisable financial forecasting, and DSTNet extends that line by introducing a causal spectral trajectory as a complementary representation and processing it through factorized scale-temporal attention before fusion. The broader financial machine-learning literature likewise shows that nonlinear recurrent, graph-based, and high-dimensional learning systems can extract useful predictive structure from financial data, while also placing strong emphasis on genuine out-of-sample evaluation~\cite{fischer2018deep,cheng2022financial,gu2020empirical}.

\subsection{Multi-Horizon and Trading-Oriented Forecasting in Finance}\label{ssec:rw-multihorizon}
Multi-horizon prediction comes in two flavours. Recursive schemes feed a forecast back as input and let errors compound. Direct schemes emit every horizon in one pass. The general literature settled this debate years ago in favour of direct heads, with N-BEATS~\cite{nbeats2020} popularising deep residual direct forecasting, the Temporal Fusion Transformer~\cite{lim2021tft} adding attention and covariate encoders, and DLinear~\cite{zeng2023dlinear} showing that a per-horizon linear map is a stubbornly strong baseline. Finance has been slower to follow. Most equity work still forecasts one step ahead, and the studies that do reach for multiple horizons often do so recursively or with a shared head. The exceptions matter for us. Yu et al.~\cite{yu2025intelligent} and Galformer~\cite{ji2024galformer} produce genuine multi-step outputs, and Beniwal et al.~\cite{beniwal2024forecasting} study multistep daily forecasting across global indices for long-horizon decisions. These studies generally use a shared decoder or conventional per-horizon heads. DSTNet's Horizon-Adaptive Decoder (HAD) instead applies an independent learned feature-modulation gate before each horizon head (Section~\ref{ssec:had}). Wang et al.~\cite{wang2025multifactor} round out the recent picture with a multifactor deep-learning model that widens the input set rather than the horizon set, a useful contrast to a model whose novelty is on the output side.

\subsection{Feature Comparison}\label{ssec:rw-table}
Table~\ref{tab:related-features} places thirteen representative finance models against DSTNet along five design dimensions, with each model's defining strength and primary gap stated so the comparison rewards genuine contributions rather than a binary checklist. The picture is not that prior work is weak. Several entries are strong on individual axes. Rather, among the methods in Table~\ref{tab:related-features} we did not identify one that combines spectral processing on technical indicators, an explicit causal guarantee, learned fusion, and direct multi-horizon forecasting under one training objective. The non-equity column records evaluation scope, not a methodological contribution.

\begin{table*}[!t]
\caption{Comparison of representative finance forecasting models against DSTNet. Symbols: \cmark\ full, \pmark\ partial, \xmark\ absent. The final two columns state each model's defining strength and its primary gap relative to DSTNet.}
\label{tab:related-features}
\centering
\footnotesize
\renewcommand{\arraystretch}{1.2}
\setlength{\tabcolsep}{4pt}
\begin{adjustbox}{max width=\textwidth}
\begin{tabular}{@{}l c c l c c >{\raggedright\arraybackslash}p{3.6cm} >{\raggedright\arraybackslash}p{3.6cm}@{}}
\toprule
\textbf{Model} & \makecell{\textbf{Spectral}\\\textbf{on TIs}} & \makecell{\textbf{Causal}\\\textbf{guarantee}} & \textbf{Fusion} & \makecell{\textbf{Multi-}\\\textbf{horizon}} & \makecell{\textbf{Non-equity}\\\textbf{eval.}} & \textbf{Defining strength} & \textbf{Primary gap vs.\ DSTNet} \\
\midrule
Rezaei et al.\ (2021)~\cite{rezaei2021stock}        & \xmark & \xmark & linear     & \xmark & \xmark & Per-IMF CNN features from CEEMD/EMD          & Raw-price decomposition, static aggregation \\
Bhandari et al.\ (2022)~\cite{bhandari2022predicting}& \xmark & \xmark & none       & \xmark & \xmark & Shallow LSTM beats deep on Haar-denoised input & Snapshot only, no spectral trajectory \\
Thakkar \& Chaudhari (2022)~\cite{thakkar2022information}& \xmark & \xmark & none    & \xmark & \xmark & GA co-selection of topology and indicators   & Binary indicator mask, no multi-scale \\
Galformer (2024)~\cite{ji2024galformer}             & \xmark & \xmark & none       & \cmark & \xmark & Non-autoregressive generative decoding        & No denoising or spectral encoding \\
Gong \& Xing (2024)~\cite{gong2024predicting}       & \xmark & \xmark & linear     & \xmark & \xmark & Two-stage ICEEMDAN then PSO-VMD              & Price-only, static recombination \\
MASTER (2024)~\cite{master2024}                     & \xmark & \xmark & static attn & \xmark & \xmark & Market-guided cross-stock correlation         & No wavelet path, single horizon \\
Ge (2025)~\cite{ge2025enhancing}                    & \xmark & \xmark & linear     & \xmark & \xmark & MEMD with Aquila-tuned LSTM                  & Decomposes prices not indicators \\
Yu et al.\ (2025)~\cite{yu2025intelligent}          & \xmark & \pmark & linear     & \cmark & \xmark & Entropy-guided VMD then TCN                  & Univariate, no causal proof \\
BiMT (2025)~\cite{tian2025bimt}                     & \xmark & \xmark & joint train & \xmark & \xmark & BiLSTM, Transformer and TCN co-trained       & No explicit adaptive gate \\
Ziolkowski et al.\ (2026)~\cite{ziolkowski2026enhancing}& \xmark & \xmark & none      & \xmark & \xmark & Clean DWT-LSTM denoiser                       & Spectrum discarded after denoising \\
Wang et al.\ (2025)~\cite{wang2025multifactor}      & \xmark & \xmark & none       & \xmark & \xmark & Widened multifactor input set                & Novelty on input not output side \\
Li et al.\ (2026)~\cite{li2026importance}           & \pmark & \xmark & none       & \xmark & \xmark & Level-1 SWT on multivariate input            & Single branch, two stocks only \\
Zubair \& Huang (2026)~\cite{zubair2026enhanced}    & \xmark & \xmark & attn+skip  & \cmark & \xmark & BiGRU encoder-decoder with attention          & No wavelet denoising or scales \\
\midrule
\textbf{DSTNet (ours)} & \cmark & \cmark & \makecell[l]{learned\\gate} & \cmark & \cmark & Causal DST trajectory tensor with factorized STST & Not applicable \\
\bottomrule
\end{tabular}
\end{adjustbox}
\end{table*}

\section{Methodology}\label{sec:methodology}
DSTNet decomposes multi-horizon equity index forecasting into four operations that act on physically distinct representations of the same underlying price series. We denoise the OHLCV channels with a causal undecimated Symlet-4 residual multiresolution denoiser, derive a strictly causal time-frequency representation via a Morlet-derived complex FIR filter bank, lift that representation into a per-sample Dynamic Spectral Trajectory tensor $\mathbf{D}^{(i)}$, and route the result through a dual-branch network whose temporal and spectral streams are combined by a learned vertical gate before a per-horizon decoder emits simultaneous forecasts at $\mathcal{H} = \{1, 3, 5, 10\}$ trading days.

Fig.~\ref{fig:dstnet-concept} contrasts the DST tensor with the static-snapshot representation used by several reviewed wavelet hybrids, and Fig.~\ref{fig:dstnet-arch} gives the full pipeline. Formal correctness arguments and complexity analyses are stated as theorems and proved in Section~\ref{sec:theory}. Algorithmic realisations appear in Appendix~\ref{app:algorithms}, with Algorithms~\ref{alg:preproc} and~\ref{alg:cwwt} covering preprocessing and the causal filter bank, and Algorithms~\ref{alg:temporal}, \ref{alg:stst}, and~\ref{alg:vaf-had} covering the two branches and the decoder. We use $T$ for the length of a denoised series, $w = 20$ for the lookback window, $\mathcal{S} = \{1, 2, 3, 5, 8, 13, 21, 32\}$ with $S = |\mathcal{S}| = 8$ for the fixed FIR scale parameters, $M = 7$ for the number of technical indicators, and $d_{\text{fus}} = 64$ for the fusion-vector dimension. The network target at every horizon is a cumulative log return, and every price-level metric in Section~\ref{sec:experiments} is computed after reconstruction to the price scale. Section~\ref{ssec:metrics} gives the exact convention.

\begin{figure*}[!t]
\centering
\begin{adjustbox}{max width=\textwidth}
\begin{tikzpicture}[
    >={Stealth[length=2mm,width=1.6mm]},
    font=\footnotesize,
    every node/.style={inner sep=2pt, outer sep=0pt, text=dstInk},]

\begin{scope}[shift={(0,0)}]
  \pgfmathsetmacro{\W}{4.0}
  \pgfmathsetmacro{\H}{2.6}
  \pgfmathsetmacro{\Dx}{1.05}
  \pgfmathsetmacro{\Dy}{0.75}

  \fill[dstSpec, opacity=0.55]
        (0,0) -- (\W,0) -- (\W,\H) -- (0,\H) -- cycle;
  \fill[dstSpec!70!dstSpecB, opacity=0.55]
        (\W,0) -- (\W+\Dx,\Dy) -- (\W+\Dx,\H+\Dy) -- (\W,\H) -- cycle;
  \fill[dstSpec!50!white, opacity=0.85]
        (0,\H) -- (\W,\H) -- (\W+\Dx,\H+\Dy) -- (\Dx,\H+\Dy) -- cycle;

  \draw[dstSpecB, line width=0.55pt]
        (0,0) -- (\W,0) -- (\W,\H) -- (0,\H) -- cycle;
  \draw[dstSpecB, line width=0.55pt]
        (\W,0) -- (\W+\Dx,\Dy) -- (\W+\Dx,\H+\Dy) -- (\W,\H);
  \draw[dstSpecB, line width=0.55pt]
        (0,\H) -- (\Dx,\H+\Dy) -- (\W+\Dx,\H+\Dy);

  \foreach \t in {0.4,0.8,...,3.6}
      \draw[dstSpecB!50, line width=0.18pt] (\t,0) -- (\t,\H);
  \foreach \s in {0.26,0.52,...,2.34}
      \draw[dstSpecB!50, line width=0.18pt] (0,\s) -- (\W,\s);

  \fill[dstAccent, opacity=0.55]
        (\W-0.20,0) -- (\W,0) -- (\W,\H) -- (\W-0.20,\H) -- cycle;
  \fill[dstAccent!75!black, opacity=0.55]
        (\W,0) -- (\W+\Dx,\Dy) -- (\W+\Dx,\H+\Dy) -- (\W,\H) -- cycle;
  \draw[dstAccent!70!black, line width=0.5pt]
        (\W-0.20,0) -- (\W,0) -- (\W+\Dx,\Dy) -- (\W+\Dx,\H+\Dy) -- (\W,\H) -- (\W-0.20,\H) -- cycle;

  \draw[->, dstInk, line width=0.4pt] (0,-0.30) -- (\W,-0.30)
        node[midway, below, font=\scriptsize] {time $w{=}20$};
  \draw[->, dstInk, line width=0.4pt] (-0.25,0) -- (-0.25,\H)
        node[midway, left, font=\scriptsize, rotate=90, anchor=south] {scales $S{=}8$};
  \draw[->, dstInk, line width=0.4pt]
        (\W+0.22,-0.12) -- (\W+\Dx+0.22,\Dy-0.12)
        node[midway, sloped, below, font=\scriptsize] {indicators $M{=}7$};

  \node[anchor=south west, font=\small\bfseries] at (-0.3,\H+\Dy+0.55)
       {(a)\ \ Dynamic Spectral Trajectory (DST)};

  \node[anchor=west, font=\scriptsize, text=dstSpecB]
       at (\W+\Dx+0.55,\H*0.78)
       {\textbf{DSTNet input}\,: full $(w,S,M)$ trajectory};
  \node[anchor=west, font=\scriptsize, text=dstAccent]
       at (\W+\Dx+0.55,\H*0.30)
       {\textbf{Prior work}\,: only $\mathbf{D}^{(i)}[w,:,:]$ slice};

  \draw[dstSpecB, line width=0.4pt]
        (\W+\Dx+0.5,\H*0.78) -- ++(-0.15,0)
        -- ($(\Dx,\H+\Dy)!0.5!(\W+\Dx,\H+\Dy)$);
  \draw[dstAccent, line width=0.4pt]
        (\W+\Dx+0.5,\H*0.30) -- ++(-0.15,0)
        -- (\W,\H*0.30);

  \node[anchor=north west, draw=dstLine, line width=0.3pt, rounded corners=1.5pt,
        fill=white, inner sep=1.8mm, font=\scriptsize]
        at (-0.3,-0.85)
        {\textbf{Data-processing inequality}\,: $\,I(Y_h\,;\mathbf{D}^{(i)}) \,\geq\, I(Y_h\,;\,C^{(i)}_{\text{static}})$};
\end{scope}

\begin{scope}[shift={(11.0,-0.45)}]
  \pgfmathsetmacro{\Aw}{3.5}

  \draw[dstSpecB!60, line width=0.4pt, fill=dstSpec!50]
        (0,0) rectangle (\Aw,\Aw);
  \foreach \i in {0.35,0.7,...,3.15}
      \draw[dstSpecB!35, line width=0.15pt] (\i,0) -- (\i,\Aw);
  \foreach \i in {0.35,0.7,...,3.15}
      \draw[dstSpecB!35, line width=0.15pt] (0,\i) -- (\Aw,\i);
  \node[anchor=north west, font=\scriptsize, text=dstSpecB]
        at (0.05,\Aw-0.05) {$wS\!\times\!wS = 160\!\times\!160$};
  \node[anchor=south, font=\scriptsize\bfseries, text=dstSpecB]
        at (\Aw/2,-0.55) {Joint attention};
  \node[anchor=south, font=\scriptsize, text=dstSpecB]
        at (\Aw/2,-0.95) {$\Theta((wS)^2 d) = \Theta(25{,}600\,d)$};

  \pgfmathsetmacro{\Bw}{0.85}
  \pgfmathsetmacro{\Cw}{1.4}

  \begin{scope}[shift={(\Aw+1.5,0.6)}]
    \draw[dstSpecB, line width=0.5pt, fill=dstTemp!70]
          (0,0) rectangle (\Cw,\Cw);
    \foreach \i in {0.17,0.34,0.51,0.68}
        \draw[dstSpecB!50, line width=0.15pt] (\i,0) -- (\i,\Cw);
    \foreach \i in {0.17,0.34,0.51,0.68}
        \draw[dstSpecB!50, line width=0.15pt] (0,\i) -- (\Cw,\i);
    \node[anchor=south, font=\scriptsize, text=dstSpecB]
          at (\Cw/2,\Cw+0.05) {Stream A};
    \node[anchor=north, font=\scriptsize, text=dstSpecB]
          at (\Cw/2,-0.05) {$w\!\times\!w$};

    \draw[dstSpecB, line width=0.5pt, fill=dstSpec!85]
          (\Cw+0.6,0) rectangle (\Cw+0.6+\Bw,\Bw);
    \foreach \i in {0.18,0.36,...,0.72}
        \draw[dstSpecB!50, line width=0.15pt]
            (\Cw+0.6+\i,0) -- (\Cw+0.6+\i,\Bw);
    \foreach \i in {0.18,0.36,...,0.72}
        \draw[dstSpecB!50, line width=0.15pt]
            (\Cw+0.6,\i) -- (\Cw+0.6+\Bw,\i);
    \node[anchor=south, font=\scriptsize, text=dstSpecB]
          at (\Cw+0.6+\Bw/2,\Bw+0.05) {Stream B};
    \node[anchor=north, font=\scriptsize, text=dstSpecB]
          at (\Cw+0.6+\Bw/2,-0.05) {$S\!\times\!S$};

    \pgfmathsetmacro{\Fcx}{(\Cw + 0.6 + \Bw)/2 - 0.1}
    \node[anchor=south, font=\scriptsize\bfseries, text=dstSpecB]
        at ({\Fcx},-1.05) {Factorized STST};
    \node[anchor=south, font=\scriptsize, text=dstSpecB]
        at ({\Fcx},-1.45) {$\Theta((w^2{+}S^2)d)=\Theta(464\,d)$};
  \end{scope}

  \draw[->, dstSpeedup, line width=1pt]
        (\Aw+0.15,\Aw/2) -- (\Aw+1.4,\Aw/2);
  \node[anchor=south, font=\scriptsize, text=dstSpeedup]
        at (\Aw+0.78,\Aw/2+0.05) {factorize};

  \node[draw=dstSpeedup, line width=0.5pt, rounded corners=2pt,
        fill=white, inner sep=2mm, font=\scriptsize, text=dstSpeedup,
        anchor=west] at (\Aw+1.5,\Aw+0.3)
        {\textbf{Theorem~\ref{thm:complexity}}: $C_{\text{joint}}/C_{\text{fact}} \approx \mathbf{55.2\times}$};

  \node[anchor=south west, font=\small\bfseries] at (-0.1,\Aw+0.85)
       {(b)\ \ Factorized attention along physical axes};
\end{scope}
\end{tikzpicture}%
\end{adjustbox}
\caption{Conceptual contributions of DSTNet. (a)~The Dynamic Spectral Trajectory tensor $\mathbf{D}^{(i)}\!\in\!\mathbb{R}^{w\times S\times M}$ retains the full evolution of filter-bank magnitudes across the lookback window. The static-snapshot representation used by several reviewed wavelet hybrids is the single time-slice $\mathbf{D}^{(i)}[w,:,:]$ highlighted in red. Because this slice is a deterministic projection of the full tensor, the data-processing inequality gives the displayed information ordering. (b)~Factorized Scale-Temporal Spectral Transformer (STST) replaces a $160{\times}160$ joint attention matrix by two small attentions of size $w{\times}w$ and $S{\times}S$, yielding the $\approx 55.2\times$ equal-width per-layer attention-score ratio quantified by Theorem~\ref{thm:complexity}. This ratio does not represent total model FLOPs or wall-clock speedup.}
\label{fig:dstnet-concept}
\end{figure*}

\begin{figure*}[ht]
\centering
\resizebox{\textwidth}{!}{%
\begin{tikzpicture}[
    >={Stealth[length=2mm,width=1.6mm]},
    font=\footnotesize,
    every node/.style={inner sep=2pt, outer sep=0pt, text=dstInk},
    layer/.style    ={rectangle, rounded corners=2pt, draw=dstLine, line width=0.45pt,
                      fill=white, minimum height=8mm, minimum width=18mm,
                      align=center, font=\scriptsize},
    tlayer/.style   ={layer, fill=dstTemp,  draw=dstTempB},
    slayer/.style   ={layer, fill=dstSpec,  draw=dstSpecB},
    flayer/.style   ={layer, fill=dstFuse,  draw=dstFuseB},
    hlayer/.style   ={layer, fill=dstHead,  draw=dstHeadB},
    inp/.style      ={layer, fill=dstSoft,  draw=dstLine, minimum width=22mm,
                      minimum height=10mm},
    pre/.style      ={layer, fill=dstPre,   draw=dstLine, dashed,
                      minimum width=23mm, minimum height=10mm},
    op/.style       ={circle, draw=dstLine, fill=white, line width=0.4pt,
                      minimum size=4.6mm, inner sep=0pt, font=\scriptsize},
    gate/.style     ={rectangle, rounded corners=1pt, draw=dstLine, fill=white,
                      minimum height=4.6mm, minimum width=8mm, font=\scriptsize},
    flow/.style     ={->, draw=dstLine, line width=0.45pt},
    sflow/.style    ={->, draw=dstLine, line width=0.4pt, dashed},
    wire/.style     ={draw=dstLine, line width=0.45pt},
    mystyle/.style  ={font=\scriptsize\itshape, text=dstLine},
    lane/.style     ={rounded corners=3pt, line width=0.3pt},
]

\node[pre] (raw) at (0,6.0)
      {Raw OHLCV\\$X\!\in\!\mathbb{R}^{T\times 5}$};
\node[pre, right=6mm of raw]  (dwt)
      {Causal Symlet-4 denoiser\\\scriptsize $L{=}2$, MAD soft (Eq.~\ref{eq:mad-threshold})};
\node[pre, right=6mm of dwt]  (ind)
      {7 indicators\\\scriptsize RSI/Stoch/CCI/OBV\\\scriptsize ATR/W\%R/ROC};
\node[pre, right=6mm of ind]  (cwt)
      {Causal Morlet FIR bank\\\scriptsize $\omega_0{=}6$, $S{=}8$};
\node[pre, right=6mm of cwt]  (burn)
      {Burn-in exclusion\\\scriptsize $B{=}\lfloor 6 s_{\max}\rfloor{=}192$};

\draw[flow] (raw)  -- (dwt);
\draw[flow] (dwt)  -- (ind);
\draw[flow] (ind)  -- (cwt);
\draw[flow] (cwt)  -- (burn);

\node[draw=dstSpeedup, line width=0.4pt, rounded corners=1.5pt,
      fill=white, inner sep=1.4mm, font=\scriptsize, text=dstSpeedup,
      anchor=west] at ($(burn.east)+(4mm,0)$)
      {\textbf{Proposition~\ref{thm:causality}}: zero future leakage};

\node[inp] (inOHLCV) at (0,4.2)
      {Denoised OHLCV\\$X^{(i)}_{\text{temp}}\!\in\!\mathbb{R}^{20\times 5}$};
\node[tlayer, right=4mm of inOHLCV]  (tConv)  {Conv1D\\$64,\,k{=}3$};
\node[tlayer, right=4mm of tConv]    (tL1)    {BiLSTM\,(32)\\\scriptsize ret.\ seq.};
\node[tlayer, right=4mm of tL1]      (tL2)    {BiLSTM\\(64)};
\node[tlayer, right=4mm of tL2]      (tDrop)  {Dropout\\$0.3$};
\node[tlayer, right=4mm of tDrop]    (tDense) {Dense\,(64)};
\node[layer,  right=5mm of tDense, fill=white,
              draw=dstTempB, line width=0.6pt] (f1)
      {$f_1^{(i)}\!\in\!\mathbb{R}^{64}$};

\draw[flow] (inOHLCV) -- (tConv);
\draw[flow] (tConv)   -- (tL1);
\draw[flow] (tL1)     -- (tL2);
\draw[flow] (tL2)     -- (tDrop);
\draw[flow] (tDrop)   -- (tDense);
\draw[flow] (tDense)  -- (f1);

\node[mystyle, above=0.3mm of tConv] {$(B,20,64)$};

\draw[flow, dashed] (dwt.south) to[out=-90,in=90] (inOHLCV.north);

\node[inp] (inDST) at (0,1.1)
      {DST tensor\\$\mathcal{D}^{(i)}\!\in\!\mathbb{R}^{20\times 8\times 7}$};

\node[slayer] (aRes)  at (3.1,2.0) {reshape\\$(20,56)$};
\node[slayer, right=3mm of aRes]  (aMHA1) {MHA\\\scriptsize 4h,\,$d_k{=}32$};
\node[slayer, right=3mm of aMHA1] (aMHA2) {MHA\\\scriptsize 4h,\,$d_k{=}32$};
\node[slayer, right=3mm of aMHA2] (aPool) {Attn.\\pool};
\node[layer,  right=3mm of aPool, fill=white,
              draw=dstSpecB, line width=0.55pt] (hts)
      {$h_{ts}^{(i)}\!\in\!\mathbb{R}^{56}$};

\node[slayer] (bRes)  at (3.1,0.2) {permute$+$reshape\\$(8,140)$};
\node[slayer, right=3mm of bRes]  (bMHA1) {MHA\\\scriptsize 4h,\,$d_k{=}16$};
\node[slayer, right=3mm of bMHA1] (bMHA2) {MHA\\\scriptsize 4h,\,$d_k{=}16$};
\node[slayer, right=3mm of bMHA2] (bPool) {Attn.\\pool};
\node[layer,  right=3mm of bPool, fill=white,
              draw=dstSpecB, line width=0.55pt] (hss)
      {$h_{ss}^{(i)}\!\in\!\mathbb{R}^{140}$};

\draw[flow] (inDST.east) to[out=0,in=180] (aRes.west);
\draw[flow] (inDST.east) to[out=0,in=180] (bRes.west);

\draw[flow] (aRes)  -- (aMHA1);
\draw[flow] (aMHA1) -- (aMHA2);
\draw[flow] (aMHA2) -- (aPool);
\draw[flow] (aPool) -- (hts);

\draw[flow] (bRes)  -- (bMHA1);
\draw[flow] (bMHA1) -- (bMHA2);
\draw[flow] (bMHA2) -- (bPool);
\draw[flow] (bPool) -- (hss);

\draw[sflow] (burn.south) to[out=-90,in=90] (inDST.north);

\node[font=\scriptsize\itshape, text=dstSpecB,
      left=1.5mm of aRes] {Stream A};
\node[font=\scriptsize\itshape, text=dstSpecB,
      left=1.5mm of bRes] {Stream B};

\node[op] (cat) at ($(hts.east)!0.5!(hss.east) + (0.9,0)$) {$\Vert$};
\draw[flow] (hts.east) to[out=0,in=90]  (cat.north);
\draw[flow] (hss.east) to[out=0,in=-90] (cat.south);

\node[slayer, right=4mm of cat, minimum width=18mm] (mlp1)
      {Dense\,256\\ReLU};
\node[slayer, right=3mm of mlp1, minimum width=14mm] (mlp2) {Drop\,$0.3$};
\node[slayer, right=3mm of mlp2, minimum width=18mm] (mlp3)
      {Dense\,128\\ReLU};
\node[slayer, right=3mm of mlp3, minimum width=14mm] (mlp4) {Drop\,$0.3$};
\node[slayer, right=3mm of mlp4, minimum width=18mm] (mlp5) {Dense\,64};
\node[layer, right=4mm of mlp5, fill=white,
      draw=dstSpecB, line width=0.6pt] (f2)
      {$f_2^{(i)}\!\in\!\mathbb{R}^{64}$};

\draw[flow] (cat.east) -- (mlp1.west);
\draw[flow] (mlp1) -- (mlp2);
\draw[flow] (mlp2) -- (mlp3);
\draw[flow] (mlp3) -- (mlp4);
\draw[flow] (mlp4) -- (mlp5);
\draw[flow] (mlp5) -- (f2);

\node[font=\scriptsize\itshape, text=dstSpecB,
      above=0.3mm of mlp3] {cross-stream MLP\,(Eq.~\ref{eq:cross}--\ref{eq:fv2})};

\node[flayer, minimum width=46mm, minimum height=20mm] (vaf)
      at (8.5,-4) {};
\node[below=2.5mm of vaf.south, font=\scriptsize\bfseries, text=dstFuseB]
      {Vertical Attention Fusion (Eq.~\ref{eq:vaf-stack}--\ref{eq:vaf-fused})};

\node[op]   (stack) at ($(vaf.west)+(8mm,0)$)  {st};
\node[gate] (sm)    at ($(stack)+(11mm,0)$)    {$\sigma$};
\node[op]   (sum)   at ($(sm)+(11mm,0)$)       {$+$};

\draw[flow] (stack) -- (sm);
\draw[flow] (sm)    -- (sum);

\node[font=\scriptsize, above=0.2mm of sm]    {softmax};
\node[font=\scriptsize, below=0.2mm of stack] {2 tokens};

\draw[flow] (f1.south) to[out=-90,in=90,looseness=0.6]
            ($(stack.west)+(-0.35,0mm)$) -- (stack.west);
\draw[flow] (f2.south) to[out=-90,in=90,looseness=0.6]
            ($(stack.north)+(0,1mm)$) -- (stack.north);

\node[layer, right=5mm of vaf, fill=white,
      draw=dstFuseB, line width=0.6pt] (hfused)
      {$h_{\text{fus}}^{(i)}\!\in\!\mathbb{R}^{64}$};
\draw[flow] (sum.east) -- (vaf.east) -- (hfused.west);

\coordinate (hadStart) at ($(hfused.east)+(9mm,0)$);

\draw[wire] (hfused.east) -- (hadStart);
\fill[dstLine] (hadStart) circle (0.8pt);

\foreach \i/\h/\dy in {1/1/22.5, 2/3/7.5, 3/5/-7.5, 4/10/-22.5} {

    \node[gate, minimum width=11mm, minimum height=6mm,
          draw=dstHeadB, line width=0.5pt]
          (gt\i) at ($(hadStart)+(24mm,\dy mm)$) {$\tanh$};

    \coordinate (sp\i) at ($(gt\i.west)+(-4mm,0)$);

    \node[op, draw=dstHeadB, right=9mm of gt\i] (g\i) {$\odot$};

    \node[hlayer, minimum width=24mm, minimum height=6.5mm, right=5mm of g\i]
          (mA\i) {Dense\,32, ReLU\\[-1pt]\scriptsize Drop\,0.15};
    \node[hlayer, minimum width=12mm, minimum height=6.5mm, right=3mm of mA\i]
          (mB\i) {Dense\,1};

    \node[layer, draw=dstHeadB, line width=0.6pt, fill=white,
          minimum width=10mm, minimum height=6.5mm, right=4mm of mB\i]
          (yh\i) {$\hat{y}_{\h}^{(i)}$};

    \draw[wire] (hadStart) to[out=0,in=180] (sp\i);
    \draw[flow] (sp\i) -- (gt\i.west);
    \fill[dstHeadB] (sp\i) circle (0.7pt);

    \draw[flow] (gt\i.east) -- node[above=-0.5pt, font=\scriptsize]{$g_{\h}^{(i)}$} (g\i.west);

    \draw[flow] (sp\i) to[out=-90,in=-90,looseness=1.0] (g\i.south);

    \draw[flow] (g\i.east)   -- (mA\i.west);
    \draw[flow] (mA\i.east)  -- (mB\i.west);
    \draw[flow] (mB\i.east)  -- (yh\i.west);
}

\coordinate (cmid) at ($(yh2.east)!0.5!(yh3.east)$);
\node[layer, draw=dstHeadB, line width=0.6pt, fill=white,
      minimum width=36mm, minimum height=10mm, anchor=west]
      (yout) at ($(cmid)+(16mm,0)$)
      {$\hat{\mathbf{y}}^{(i)}
      =(\hat{y}_1,\hat{y}_3,\hat{y}_5,\hat{y}_{10})$};

\foreach \i in {1,2,3,4} {
    \draw[flow] (yh\i.east) to[out=0,in=180] (yout.west);
}

\begin{pgfonlayer}{background}
  \node[lane, draw=dstLine, fill=dstPre,
        fit=(raw)(burn), inner sep=2.2mm] (lanePre) {};
  \node[anchor=north west, font=\scriptsize\bfseries, text=dstLine]
        at ([xshift=2mm,yshift=4mm]lanePre.north west)
        {Pre-processing (outside the trainable network)};

  \node[lane, draw=dstTempB, fill=dstTemp!35,
        fit=(tConv)(f1), inner sep=2.2mm] (laneT) {};
  \node[anchor=north west, font=\scriptsize\bfseries, text=dstTempB]
        at ([xshift=18mm,yshift=3.5mm]laneT.north west)
        {Temporal Branch (CNN-BiLSTM)};

  \node[lane, draw=dstSpecB, fill=dstSpec!35,
        fit=(aRes)(bRes)(hts)(hss)(cat)(mlp1)(mlp5)(f2),
        inner sep=2.2mm] (laneS) {};
  \node[anchor=north west, font=\scriptsize\bfseries, text=dstSpecB]
        at ([xshift=2mm,yshift=3.5mm]laneS.north west)
        {Spectral Branch \,--\, Factorised STST};

  \node[lane, draw=dstFuseB, fill=dstFuse!40,
        fit=(vaf)(hfused), inner sep=2.2mm] (laneF) {};

  \node[draw=dstHeadB, dotted, line width=0.9pt, rounded corners=14pt,
        fill=none,
        fit=(hadStart)(gt1)(gt4)(yh1)(yh4)(yout), inner sep=5mm] (laneH) {};
  \node[anchor=west, fill=white, inner sep=1.5mm,
        font=\scriptsize\bfseries, text=dstHeadB]
        at ([xshift=10mm]laneH.north west)
        {Horizon-Adaptive Decoder (HAD): independent feature gate per horizon};
\end{pgfonlayer}

\node[draw=dstLine, line width=0.3pt, rounded corners=2pt, fill=white,
      anchor=north west, inner sep=1.8mm, font=\scriptsize]
      at ($(laneF.south west)+(-5,2.5)$)
{
\begin{tabular}{@{}ll@{}}
$\odot$     & elementwise multiply \\
$+$         & elementwise addition \\
$\Vert$     & concatenation \\
\textit{st} & stack as 2 tokens \\
$\sigma$    & softmax \\
$\tanh$     & gate activation \\
res$+$LN    & residual + layer norm \\
MHA         & multi-head attention \\
\end{tabular}};

\end{tikzpicture}%
}%
\caption{Full DSTNet pipeline. Pre-processing (dashed lane) is provably causal (Proposition~\ref{thm:causality}) and excludes a $B{=}192$-sample burn-in. The temporal branch (CNN-BiLSTM) and the factorised Scale-Temporal Spectral Transformer (STST: Stream~A reshapes $\mathcal{D}^{(i)}$ into $w$ time-tokens of dimension $SM{=}56$, while Stream~B permutes into $S$ filter-bank tokens of dimension $wM{=}140$) produce 64-dim summaries $f_1$ and $f_2$. The Vertical Attention Fusion gate combines them into $h_{\text{fus}}$. The Horizon-Adaptive Decoder applies an independent $\tanh$ feature gate $g_h$ per horizon, yielding simultaneous forecasts at $h\in\{1,3,5,10\}$ trading days in a single forward pass.}
\label{fig:dstnet-arch}
\end{figure*}



\subsection{Data and Preprocessing}\label{ssec:data}
Let $\mathbf{X} \in \mathbb{R}^{T \times 5}$ denote a raw OHLCV series with columns $(\text{Open},\text{High},\text{Low},\text{Close},\text{Volume})$. Microstructure noise corrupts both the temporal-branch input and the indicators that feed the spectral branch, so we denoise each channel independently with a causal undecimated Symlet-4 residual multiresolution denoiser of depth $L = 2$. The level-$\ell$ approximation filter $g_\ell$ is the Symlet-4 scaling filter with $2^{\ell-1} - 1$ zeros inserted between adjacent taps, the standard undecimated upsampling, applied as a causal FIR convolution with left padding only.
Writing $\mathbf{a}_0 = \mathbf{X}[:, c]$ for a raw channel, the cascade computes $\mathbf{a}_\ell = \operatorname{CausalFIR}(g_\ell, \mathbf{a}_{\ell-1})$ and defines the level-$\ell$ detail band as the residual $\mathbf{d}_\ell = \mathbf{a}_{\ell-1} - \mathbf{a}_\ell$. The residual construction telescopes, so the pre-threshold identity
\begin{equation*}
\mathbf{a}_0
\;=\;
\mathbf{a}_L + \sum_{\ell=1}^{L} \mathbf{d}_\ell
\end{equation*}
holds exactly at every index. Denoising soft-thresholds the residual bands with the universal threshold of \cite{donoho1994ideal}, estimated on the training split. For the training-split restriction $\mathbf{d}$ of the level-$\ell$ detail band, with $T_d$ its length, the threshold is
\begin{equation}
\tau_{\ell}
\;=\;
\frac{\operatorname{median}(|\mathbf{d} - \operatorname{median}(\mathbf{d})|)}{0.6745}
\sqrt{2 \log T_d}.
\label{eq:mad-threshold}
\end{equation}
Writing $S_{\tau}(d) = \operatorname{sign}(d) \cdot \max(|d| - \tau, 0)$ for the soft-threshold operator applied elementwise, the denoised channel is the post-threshold reconstruction
\begin{equation*}
\widetilde{\mathbf{X}}[:, c]
\;=\;
\mathbf{a}_L + \sum_{\ell=1}^{L} S_{\tau_\ell}(\mathbf{d}_\ell),
\end{equation*}
which is not an identity, since thresholding removes energy from the detail bands by design. Every stage is causal. The filters $g_\ell$ read only current and past samples, the residual subtraction is pointwise, and the reconstruction is a sum of causally computed bands, so the denoised value at time $t$ never depends on a sample from $t+1$ onward. The noise scale and the threshold $\tau_{\ell}$ of \eqref{eq:mad-threshold} are estimated within each walk-forward training fold and refitted on the complete pre-hold-out training history before the final test transform, then held fixed, which closes the second leakage path that a per-series threshold would open.

The principal preprocessing constants are fixed rather than tuned per dataset. Symlet-4 is chosen for its near-symmetry and four vanishing moments, while its short support keeps the causal FIR realisation compact. The carrier parameter $\omega_{0}=6$ follows the conventional Morlet prototype. The set $\mathcal{S}$ comprises eight approximately log-spaced FIR scale parameters that control carrier, envelope width, and support length. Section~\ref{ssec:cwwt} explains why they are treated as fixed filter-bank indices rather than as physical periods. This sparse bank cuts the filter-axis token count by a factor of four against a dense thirty-two-parameter bank, while the largest parameter of 32 gives the burn-in $B=\lfloor6\cdot32\rfloor=192$ samples. The lookback $w=20$ is one trading month, long enough to contain a regime transition and short enough to keep the longest-horizon test set from collapsing to a handful of windows. None of these constants is re-tuned across datasets. After denoising we compute seven momentum and volume indicators on the denoised series. The set is RSI-10, Stochastic \%K, CCI-14, OBV, ATR-14, Williams \%R, and ROC-12. We collect them column-wise into the indicator matrix $\mathbf{Z} \in \mathbb{R}^{T \times M}$ with $M = 7$, then Z-score each column using statistics computed only on the training split to avoid look-ahead leakage. Algorithm~\ref{alg:preproc} states the full procedure.

\subsection{Causal Morlet-Derived Complex FIR Filter Bank}\label{ssec:cwwt}
Standard continuous wavelet transform (CWT) implementations in the deep-learning literature compute \emph{bilateral} convolutions over the entire signal. This breaks causality for forecasting, because the coefficient at time $t$ then depends on samples $x[\tau]$ with $\tau > t$. We replace the bilateral CWT with a strictly causal Morlet filter bank realised as a finite impulse response. For a positive scale $s \in \mathbb{N}$ and Morlet central frequency $\omega_0 = 6$ we define the discrete causal kernel
\begin{equation}
h_s[k]
\;=\;
\frac{\pi^{-1/4}}{\sqrt{s}}\,
\exp\!\left(\frac{i\,\omega_0\,k}{s}\right)
\exp\!\left(-\frac{k^2}{2 s^2}\right),
\label{eq:morlet-fir}
\end{equation}
where $k \in \{0, 1, \ldots, K_s\}$ with effective support $K_s = \lfloor 6 s \rfloor$. The causal complex filter-bank coefficient at time $t$, scale $s$, and indicator channel $m$ is
\begin{equation}
C[t,s,m]
\;=\;
\sum_{k=0}^{K_s} h_s[k]\,Z[t-k,m],
\label{eq:cwwt}
\end{equation}
with the convention $Z[t,m]=0$ for $t<0$.
The implementation uses \texttt{scipy.signal.lfilter}, so the sum contains only non-negative lags. Consequently, $C[t,s,m]$ depends only on observations available at or before time $t$. Proposition~\ref{thm:causality} formalises this property. Restricting the Morlet-derived kernel to non-negative lags breaks the zero-mean admissibility condition of the conventional Morlet wavelet. We therefore describe the implemented operator as a causal Morlet-derived complex FIR filter bank, rather than as a continuous wavelet transform, and use it only as a time-frequency feature extractor. We make no invertibility or exact CWT-equivalence claim. There is also a discrete-frequency qualification. Carrier frequencies are defined modulo $2\pi$, and at $s=1$ the nominal carrier $\omega_0/s=6$ radians/sample exceeds the Nyquist limit $\pi$ and folds to $6-2\pi\approx-0.283$ radians/sample. The one-sided truncation further changes the conventional Morlet response. Consequently, the eight channels are not assigned a clean daily-to-monthly pseudo-frequency ladder, and the model's empirical results are interpreted for this fixed implemented filter bank rather than for conventional Morlet scale bands. We retain the coefficient magnitude as the real-valued spectral feature:
\begin{equation}
W[t,s,m]
\;=\;
\bigl|C[t,s,m]\bigr|
\;\in\;
\mathbb{R}_{\geq 0}.
\label{eq:filter-magnitude}
\end{equation}
The coefficient phase may contain additional information, but we do not evaluate it as a separate design factor. The resulting spectral-magnitude tensor is
$\mathbf{W}\in\mathbb{R}_{\geq 0}^{T\times S\times M}$.

\begin{figure*}[!t]
\centering
\resizebox{\textwidth}{!}{%
\begin{tikzpicture}[
    >={Stealth[length=2mm,width=1.6mm]},
    font=\footnotesize,
    every node/.style={inner sep=2pt, outer sep=0pt, text=dstInk},
]

\begin{scope}[shift={(0,0)}]
  \draw[dstLine, line width=0.5pt, ->] (-0.4,0) -- (8.6,0)
        node[right, font=\scriptsize] {$n$};
  \fill[dstAccent!18] (4.0,0) rectangle (6.6,2.05);
  \draw[dstAccent, line width=0.7pt, dashed] (4.0,-0.4) -- (4.0,2.7);
  \node[font=\scriptsize, text=dstAccent, anchor=south]
       at (4.0,2.7) {prediction time $t$};
  \foreach \n/\h in {%
    0.0/0.55, 0.4/0.85, 0.8/0.45, 1.2/1.00, 1.6/0.75, 2.0/1.15,
    2.4/0.65, 2.8/0.95, 3.2/1.25, 3.6/0.85, 4.0/1.20, 4.4/0.90,
    4.8/1.10, 5.2/0.80, 5.6/1.00, 6.0/0.70, 6.4/0.95, 6.8/0.85,
    7.2/0.75, 7.6/0.90} {
    \draw[dstLine, line width=0.4pt] (\n,0) -- (\n,\h);
    \fill[dstLine] (\n,\h) circle (0.045);
  }
  \draw[dstAccent!85, line width=0.8pt, samples=140, domain=-2.4:2.4, smooth]
        plot (\x+4.0, {1.6*exp(-0.5*\x*\x)});
  \begin{scope}
    \clip plot[domain=0:2.4, samples=80] (\x+4.0,{1.6*exp(-0.5*\x*\x)})
          -- (6.4,0) -- (4.0,0) -- cycle;
    \fill[pattern=north east lines, pattern color=dstAccent!75]
          (4.0,0) rectangle (6.5,2.0);
  \end{scope}
  \begin{scope}
    \clip plot[domain=-2.4:0, samples=80] (\x+4.0,{1.6*exp(-0.5*\x*\x)})
          -- (4.0,0) -- (1.6,0) -- cycle;
    \fill[dstAccent!18] (1.6,0) rectangle (4.0,2.0);
  \end{scope}
  \node[font=\scriptsize\bfseries, text=dstAccent, anchor=center]
       at (5.4, 1.95) {FUTURE LEAKAGE};
  \draw[->, dstAccent, line width=0.5pt] (5.4,1.78) -- (5.4,1.5);
  \node[font=\scriptsize, text=dstLine, anchor=center]
       at (2.8,-0.36) {past samples};
  \node[font=\scriptsize, text=dstAccent, anchor=center]
       at (5.2,-0.36) {future samples};
  \node[anchor=south west, font=\small\bfseries, text=dstAccent]
       at (-0.4,3.15) {(a)\ \ Bilateral CWT (typical prior work)};
  \node[draw=dstAccent, line width=0.5pt, rounded corners=2pt,
        fill=white, inner sep=2mm, font=\scriptsize, text=dstAccent,
        anchor=north west] at (-0.4,-0.75)
       {$C^{\text{bil}}[t,s,m]
         \,=\!\!\!\displaystyle\sum_{k=-K_s}^{+K_s}\!\!\! h_s[k]\,Z[t-k,m]$};
  \node[anchor=north west, font=\scriptsize, text=dstAccent]
       at (-0.6,-2.10)
       {Forbidden in forecasting: $Z[t{+}1],\ldots,Z[t{+}K_s]$ are read.};
\end{scope}

\begin{scope}[shift={(11.0,0)}]
  \draw[dstLine, line width=0.5pt, ->] (-0.4,0) -- (8.6,0)
        node[right, font=\scriptsize] {$n$};
  \fill[dstSpeedup!13] (1.6,0) rectangle (4.0,2.05);
  \draw[dstSpeedup, line width=0.7pt, dashed] (4.0,-0.4) -- (4.0,2.7);
  \node[font=\scriptsize, text=dstSpeedup, anchor=south]
       at (4.0,2.7) {prediction time $t$};
  \foreach \n/\h in {%
    0.0/0.55, 0.4/0.85, 0.8/0.45, 1.2/1.00, 1.6/0.75, 2.0/1.15,
    2.4/0.65, 2.8/0.95, 3.2/1.25, 3.6/0.85, 4.0/1.20, 4.4/0.90,
    4.8/1.10, 5.2/0.80, 5.6/1.00, 6.0/0.70, 6.4/0.95, 6.8/0.85,
    7.2/0.75, 7.6/0.90} {
    \draw[dstLine, line width=0.4pt] (\n,0) -- (\n,\h);
    \fill[dstLine] (\n,\h) circle (0.045);
  }
  \draw[dstSpeedup, line width=0.85pt, samples=80, domain=0:2.4, smooth]
        plot ({4.0-\x}, {1.6*exp(-0.5*\x*\x)});
  \begin{scope}
    \clip plot[domain=0:2.4, samples=80] ({4.0-\x},{1.6*exp(-0.5*\x*\x)})
          -- (4.0,0) -- (1.6,0) -- cycle;
    \fill[dstSpeedup!28] (1.6,0) rectangle (4.0,2.0);
  \end{scope}
  \node[font=\tiny, text=dstSpeedup!85!black, anchor=south] at (4.00,1.32) {$k{=}0$};
  \node[font=\tiny, text=dstSpeedup!85!black, anchor=south] at (3.60,1.05) {$k{=}1$};
  \node[font=\tiny, text=dstSpeedup!85!black, anchor=south] at (3.20,0.75) {$k{=}2$};
  \node[font=\tiny, text=dstSpeedup!85!black, anchor=south] at (2.05,0.40) {$k{\to}K_s$};
  \draw[dstSpeedup, decorate, decoration={brace, amplitude=2pt, mirror}]
        (1.6,-0.10) -- (4.0,-0.10);
  \node[font=\scriptsize, text=dstSpeedup, anchor=north]
       at (2.8,-0.20) {kernel support $\{0,1,\ldots,K_s\}$};
  \node[font=\scriptsize, text=dstLine, anchor=center]
       at (5.6,-0.36) {(future never read)};
  \node[anchor=south west, font=\small\bfseries, text=dstSpeedup]
       at (-0.4,3.15) {(b)\ \ Causal Morlet FIR bank (DSTNet)};
  \node[draw=dstSpeedup, line width=0.5pt, rounded corners=2pt,
        fill=white, inner sep=2mm, font=\scriptsize, text=dstSpeedup,
        anchor=north west] at (-0.4,-0.75)
        {$C[t,s,m]
         \,=\,\displaystyle\sum_{k=0}^{K_s} h_s[k]\, Z[t-k,m]$
        \quad (Eq.~\eqref{eq:cwwt})};
  \node[draw=dstSpeedup, line width=0.5pt, rounded corners=2pt,
        fill=dstSpeedup!10, inner sep=1.6mm, font=\scriptsize, text=dstSpeedup,
        anchor=north west] at (-0.595,-1.95)
       {\textbf{Proposition~\ref{thm:causality}}: zero future leakage,
        \texttt{lfilter} sums only $k\!\geq\!0$.};
\end{scope}

\begin{scope}[shift={(0,-5.4)}]
  \draw[dstLine, line width=0.5pt, ->] (-0.3,0) -- (19.4,0)
        node[right, font=\scriptsize] {$n$};
  \draw[dstLine, line width=0.5pt] (0,0) -- (0,0.05);
  \node[font=\scriptsize, anchor=north] at (0,-0.05) {$0$};
  \fill[dstAccent!22] (0,0) rectangle (4.0,1.45);
  \draw[dstAccent, line width=0.5pt, dashed] (4.0,-0.10) -- (4.0,1.55);
  \node[font=\scriptsize, text=dstAccent, anchor=north]
       at (4.0,-0.05) {$B = \lfloor 6\,s_{\max}\rfloor = 192$};
  \fill[dstSpeedup!13] (4.0,0) rectangle (19.0,1.45);
  \draw[dstLine, line width=0.5pt] (19.0,0) -- (19.0,0.05);
  \node[font=\scriptsize, anchor=north] at (19.0,-0.05) {$T$};
  \node[font=\scriptsize\bfseries, text=dstAccent!85!black, anchor=center]
       at (2.0,0.78) {burn-in (zero-pad contaminated)};
  \node[font=\scriptsize\bfseries, text=dstSpeedup, anchor=center]
       at (11.5,0.78)
       {valid sequences (DSTNet train + val + test live here)};
  \foreach \x in {0.5,1.2,2.0,2.8,3.4} {
    \draw[dstAccent!80, line width=0.4pt] (\x,1.45) -- (\x,1.65);
    \fill[dstAccent!85] (\x,1.70) circle (0.06);
  }
  \node[font=\scriptsize, text=dstAccent!85!black, anchor=south]
       at (1.9,1.78) {$W[t,s,m]$ depends on zero-padded pre-history};
  \foreach \x in {5.0,7.5,10.5,13.5,16.5} {
    \draw[dstSpeedup, line width=0.4pt] (\x,1.45) -- (\x,1.65);
    \fill[dstSpeedup] (\x,1.70) circle (0.06);
  }
  \node[font=\scriptsize, text=dstSpeedup, anchor=south]
       at (10.5,1.78) {$W[t,s,m]$ supported by observed history};
  \node[anchor=south west, font=\small\bfseries, text=dstInk]
       at (-0.3, 2.55) {(c)\ \ Burn-in exclusion (Cor.~\ref{lem:burnin})};
  \node[draw=dstLine, line width=0.4pt, rounded corners=2pt,
        fill=white, inner sep=2mm, font=\scriptsize, anchor=north east]
        at (19.0, 2.85)
        {Kernel support $\lfloor 6 s\rfloor$ samples
         $\;\Rightarrow\;$ require $t-K_s \geq 0$
         $\;\Rightarrow\;$ smallest valid $t = \lfloor 6\, s_{\max}\rfloor$.};
  \node[draw=dstSoft, line width=0.4pt, rounded corners=2pt,
        fill=dstSoft!50, inner sep=1.5mm, font=\scriptsize,
        anchor=south west, text=dstInk] at (0,-1.5)
        {Cost: 192 samples discarded per series.};
\end{scope}

\end{tikzpicture}%
}%
\caption{Causal Morlet-derived complex FIR filtering versus bilateral continuous wavelet analysis, illustrating the issues addressed by Proposition~\ref{thm:causality} and Corollary~\ref{lem:burnin}.
\textbf{(a)}~A bilateral coefficient at time $t$ uses observations on both sides of $t$, so its future-side terms leak information past the forecast origin.
\textbf{(b)}~DSTNet's causal Morlet-derived FIR bank sums only over non-negative lags $k\in\{0,1,\ldots,K_s\}$, as defined in \eqref{eq:cwwt}. Proposition~\ref{thm:causality} establishes that no observation after time $t$ contributes to the complex coefficient $C[t,s,m]$ or, consequently, to its magnitude $W[t,s,m]=|C[t,s,m]|$.
\textbf{(c)}~For $t<K_s$, some FIR taps depend on the assumed zero prehistory. DSTNet excludes the first $B=\lfloor6s_{\max}\rfloor=192$ indices so that every retained coefficient is supported entirely by observed historical samples. This burn-in removes the left-boundary initialization transient. It does not assert equivalence between the causal filter and a bilateral CWT.}
\label{fig:causal-cwt}
\end{figure*}

\subsection{Dynamic Spectral Trajectory (DST) Tensor}\label{ssec:dst}
Fig.~\ref{fig:causal-cwt} contrasts this construction with bilateral wavelet analysis and marks the burn-in region that Corollary~\ref{lem:burnin} derives. The spectral-magnitude tensor $\mathbf{W}$ is a function of three indices. Several reviewed wavelet-based hybrids consume it as a static snapshot, retaining only the slice $\mathbf{C}_{\text{static}}^{(i)} = \mathbf{W}[t_i, :, :] \in \mathbb{R}^{S \times M}$ at the prediction time $t_i$. This collapses the temporal evolution of filter-bank magnitude into a single time slice and discards the path information that distinguishes a freshly entered regime from a long-established one. DSTNet retains the full evolution. For the $i$-th sequence we define the Dynamic Spectral Trajectory tensor
\begin{equation}
\begin{split}
\mathbf{D}^{(i)}
\;=\;
\bigl[\,&\mathbf{W}[t_i - w + 1, :, :],\,
       \mathbf{W}[t_i - w + 2, :, :],\\
       &\ldots,\,
       \mathbf{W}[t_i, :, :]\,\bigr]
\;\in\;
\mathbb{R}^{w \times S \times M},
\end{split}
\label{eq:dst}
\end{equation}
and stack the $N$ sequences into a four-dimensional batch $\mathbf{D} \in \mathbb{R}^{N \times w \times S \times M}$ with shape $(N, 20, 8, 7)$. The static snapshot is the deterministic final-slice extraction $\mathbf{D}^{(i)}[w, :, :]$, so by the data processing inequality~\cite{cover2006} it cannot carry more information about the target than the full tensor, with equality exactly when the target is conditionally independent of the spectral history given the current spectral state. That ordering is weak by construction and implies nothing about what a finite encoder trained on finite data will extract, so we treat it as motivation and leave the empirical question to the No-DST ablation of Section~\ref{ssec:ablations}.
The loop index $i$ in Algorithm~\ref{alg:dst-builder} relates to the in-window origin by $i = t_i + 1$, so the window $\mathbf{W}[i-w:i]$ of the algorithm equals the slice $\mathbf{W}[t_i-w+1 : t_i]$ of \eqref{eq:dst}.

Two construction details matter. First, we enforce the burn-in bound $t_i - w + 1 \geq B = 192$ in every sequence builder, so every entry of every DST tensor depends only on samples that are themselves uncontaminated by left-boundary zero padding. Second, the DST tensor is constructed on \emph{normalised} indicators, meaning the channel-wise filter-bank magnitudes are directly comparable across $M$. Algorithm~\ref{alg:dst-builder} states the procedure with explicit loop bounds.

\subsection{Temporal Branch (CNN-BiLSTM)}\label{ssec:temporal-branch}
The temporal branch consumes the denoised OHLCV window $\mathbf{X}_{\text{temp}}^{(i)} \in \mathbb{R}^{w \times 5}$ and produces a fixed-dimensional summary $\mathbf{f}^{(i)}_1 \in \mathbb{R}^{d_{\text{fus}}}$. The architecture is conventional. A one-dimensional convolution with 64 filters, kernel size 3, ReLU activation, and \texttt{same} padding extracts local price patterns. Two stacked bidirectional LSTM layers with hidden sizes 32 and 64 then propagate information forward and backward across the window. The final BiLSTM returns a single context vector, which dropout (rate 0.3) regularises before a dense projection to $d_{\text{fus}} = 64$ with ReLU activation:
\begin{equation}
\mathbf{f}^{(i)}_1
=
\operatorname{ReLU}\!\bigl(\mathbf{W}_{\text{fv1}}\,\bm{\phi}_{\text{BiLSTM}}\bigl(\bm{\phi}_{\text{Conv}}(\mathbf{X}_{\text{temp}}^{(i)})\bigr) + \mathbf{b}_{\text{fv1}}\bigr).
\label{eq:fv1}
\end{equation}
We use no pooling between the convolution and the recurrence, so the BiLSTM stack receives all $w = 20$ daily positions and performs the temporal aggregation itself. Bidirectionality is confined to the observed lookback window. The backward pass traverses that window in reverse, reads no sample dated after the forecast origin, and therefore introduces no look-ahead.

\subsection{Spectral Branch: Factorized STST}\label{ssec:stst}
Fig.~\ref{fig:dstnet-stst-mlp} shows the two-stream layout described below. The spectral branch is the core technical contribution. It consumes $\mathbf{D}^{(i)} \in \mathbb{R}^{w \times S \times M}$ and produces a fused spectral summary $\mathbf{f}^{(i)}_2 \in \mathbb{R}^{d_{\text{fus}}}$. A naive joint self-attention over the flattened DST tensor would treat each $(t, s)$ pair as an independent token, giving $w S = 160$ tokens of feature dimension $M$ and per-layer cost $\mathcal{O}((wS)^2 d) = \mathcal{O}(160^2 d)$. This is computationally unattractive at our default sizes and, more importantly, it gives the attention mechanism no inductive bias toward the two distinct axes of variation along which financial spectra actually evolve. We instead apply two factorized attention streams in parallel.

\subsubsection{Stream A (temporal-spectral)}
We reshape the DST into $w$ tokens of dimension $SM = 56$, so each token represents the full spectral state at one time step:
\begin{equation}
\mathbf{T}^{(i)}_{\text{A}}
\;=\;
\operatorname{Reshape}\bigl(\mathbf{D}^{(i)},\,(w,\, S M)\bigr)
\;\in\;
\mathbb{R}^{20 \times 56}.
\label{eq:streamA-reshape}
\end{equation}
Two stacked multi-head self-attention blocks (4 heads, key dimension 32, attention dropout 0.1) process $\mathbf{T}^{(i)}_{\text{A}}$ with residual connections and pre-layer normalisation. A learned single-token attention pool collapses the result to a vector $\mathbf{h}^{(i)}_{\text{ts}} \in \mathbb{R}^{56}$.

\begin{figure*}[!t]
\centering
\resizebox{0.95\textwidth}{!}{%
\begin{tikzpicture}[
    >={Stealth[length=2mm,width=1.6mm]},
    every node/.style={inner sep=2pt, outer sep=0pt, text=dstInk},
    layer/.style    ={rectangle, rounded corners=2pt, draw=dstLine, line width=0.45pt,
                      fill=white, minimum height=8mm, minimum width=18mm,
                      align=center},
    slayer/.style   ={layer, fill=dstSpec,  draw=dstSpecB},
    salayer/.style  ={slayer, fill=dstStreamA},
    sblayer/.style  ={slayer, fill=dstStreamB},
    inp/.style      ={layer, fill=dstSoft,  draw=dstLine, minimum width=22mm,
                      minimum height=10mm},
    op/.style       ={circle, draw=dstLine, fill=white, line width=0.4pt,
                      minimum size=4.6mm, inner sep=0pt},
    flow/.style     ={->, draw=dstLine, line width=0.45pt},
    mystyle/.style  ={font=\itshape, text=dstLine},
    lane/.style     ={rounded corners=3pt, line width=0.3pt},
]

\node[inp] (inDST) at (0,1.1)
      {DST tensor\\$\mathbf{D}^{(i)}\!\in\!\mathbb{R}^{20\times 8\times 7}$};

\node[salayer] (aRes)  at (3.1,2.0) {reshape\\$(20,56)$};
\node[salayer, right=3mm of aRes]  (aMHA1) {MHA\\\scriptsize 4h,\,$d_k{=}32$};
\node[salayer, right=3mm of aMHA1] (aMHA2) {MHA\\\scriptsize 4h,\,$d_k{=}32$};
\node[salayer, right=3mm of aMHA2] (aPool) {Attn.\\pool};
\node[layer,  right=3mm of aPool, fill=white,
              draw=dstSpecB, line width=0.55pt] (hts)
      {$h_{ts}^{(i)}\!\in\!\mathbb{R}^{56}$};

\node[sblayer] (bRes)  at (3.1,0.2) {permute$+$reshape\\$(8,140)$};
\node[sblayer, right=3mm of bRes]  (bMHA1) {MHA\\\scriptsize 4h,\,$d_k{=}16$};
\node[sblayer, right=3mm of bMHA1] (bMHA2) {MHA\\\scriptsize 4h,\,$d_k{=}16$};
\node[sblayer, right=3mm of bMHA2] (bPool) {Attn.\\pool};
\node[layer,  right=3mm of bPool, fill=white,
              draw=dstSpecB, line width=0.55pt] (hss)
      {$h_{ss}^{(i)}\!\in\!\mathbb{R}^{140}$};

\draw[flow] (inDST.east) -- ++(3mm,0) |- (aRes.west);
\draw[flow] (inDST.east) -- ++(3mm,0) |- (bRes.west);

\draw[flow] (aRes)  -- (aMHA1);
\draw[flow] (aMHA1) -- (aMHA2);
\draw[flow] (aMHA2) -- (aPool);
\draw[flow] (aPool) -- (hts);

\draw[flow] (bRes)  -- (bMHA1);
\draw[flow] (bMHA1) -- (bMHA2);
\draw[flow] (bMHA2) -- (bPool);
\draw[flow] (bPool) -- (hss);

\node[mystyle, above=0.2mm of aMHA1.north east, xshift=-9mm] {res$+$LN};
\node[mystyle, above=0.2mm of aMHA2.north east, xshift=-9mm] {res$+$LN};
\node[mystyle, below=0.2mm of bMHA1.south west, xshift=9mm]  {res$+$LN};
\node[mystyle, below=0.2mm of bMHA2.south west, xshift=9mm]  {res$+$LN};

\node[mystyle, text=dstLine!70, below=1.5mm of aMHA1.south, xshift=-1mm]
      {Attn.\ dropout\,0.1};
\node[mystyle, text=dstLine!70, above=1.5mm of bMHA2.north, xshift=1mm]
      {Attn.\ dropout\,0.1};

\node[font=\itshape, text=dstSpecB, left=7mm of aRes] {Stream A};
\node[font=\itshape, text=dstSpecB, left=3mm of bRes] {Stream B};

\node[op, double, double distance=0.4pt, double=dstSpecB!50,
      font=\large\bfseries] (cat)
      at ($(hts.east)!0.5!(hss.east) + (0.9,0)$) {$\Vert$};
\draw[flow] (hts.east) -| (cat.north);
\draw[flow] (hss.east) -| (cat.south);

\node[slayer, right=4mm of cat, minimum width=22mm] (mlp1)
      {Dense\,256\\ReLU};
\node[slayer, right=3mm of mlp1, minimum width=14mm] (mlp2) {Drop\,$0.3$};
\node[slayer, right=3mm of mlp2, minimum width=22mm] (mlp3)
      {Dense\,128\\ReLU};
\node[slayer, right=3mm of mlp3, minimum width=14mm] (mlp4) {Drop\,$0.3$};
\node[slayer, right=3mm of mlp4, minimum width=18mm] (mlp5) {Dense\,64};
\node[layer, right=4mm of mlp5, fill=white,
      draw=dstSpecB, line width=0.6pt] (f2)
      {$f_2^{(i)}\!\in\!\mathbb{R}^{64}$};

\draw[flow] (cat.east) -- (mlp1.west);
\draw[flow] (mlp1) -- (mlp2);
\draw[flow] (mlp2) -- (mlp3);
\draw[flow] (mlp3) -- (mlp4);
\draw[flow] (mlp4) -- (mlp5);
\draw[flow] (mlp5) -- (f2);

\node[font=\itshape, text=dstSpecB,
      above=0.3mm of mlp3] {cross-stream MLP\,(Eqs.~\ref{eq:cross}--\ref{eq:fv2})};

\begin{pgfonlayer}{background}
  \node[lane, draw=dstSpecB, fill=dstSpec!70,
        fit=(inDST)(aRes)(bRes)(hts)(hss)(cat)(mlp1)(mlp5)(f2),
        inner sep=2.2mm] (laneS) {};
  \node[anchor=north west, font=\bfseries, text=dstSpecB]
        at ([xshift=2mm,yshift=5mm]laneS.north west)
        {Spectral Branch \,--\, factorized STST};
\end{pgfonlayer}

\end{tikzpicture}%
}%
\caption{Factorized STST module and cross-stream fusion.
Stream~A (blue tint) reshapes $\mathbf{D}^{(i)}$ into $w$ time-tokens of dimension $SM{=}56$, while Stream~B (purple tint) permutes into $S$ scale-tokens of dimension $wM{=}140$. Each stream uses two multi-head attention layers with residual connections, layer normalisation, and attention dropout~0.1. Concatenated embeddings are joined by a cross-stream MLP \eqref{eq:cross}--\eqref{eq:fv2} to produce the 64-dimensional spectral summary $f_2^{(i)}$. Theorem~\ref{thm:complexity} quantifies the attention-core cost of this factorization.}
\label{fig:dstnet-stst-mlp}
\end{figure*}

\subsubsection{Stream B (scale-spectral)}
We permute and reshape into $S$ tokens of dimension $w M = 140$, so each token represents the full temporal trajectory at one wavelet scale:
\begin{equation}
\mathbf{T}^{(i)}_{\text{B}}
=
\operatorname{Reshape}\!\bigl(\operatorname{Permute}_{(2,1,3)}(\mathbf{D}^{(i)}),\,(S,\, w M)\bigr)
\in
\mathbb{R}^{8 \times 140}.
\label{eq:streamB-reshape}
\end{equation}
Two stacked multi-head self-attention blocks (4 heads, key dimension 16, attention dropout 0.1) and a learned attention pool then yield $\mathbf{h}^{(i)}_{\text{ss}} \in \mathbb{R}^{140}$.

\subsubsection{Cross-stream integration}
The two stream outputs are concatenated and passed through a two-layer non-linear projection:
\begin{equation}
\mathbf{c}^{(i)}
=
\operatorname{Drop}\!\Bigl(
\operatorname{Dense}_{128}\!\bigl(
\operatorname{Drop}\!\bigl(
\operatorname{Dense}_{256}([\mathbf{h}^{(i)}_{\text{ts}},\, \mathbf{h}^{(i)}_{\text{ss}}])
\bigr)\bigr)
\Bigr),
\label{eq:cross}
\end{equation}
followed by a final dense projection to the fusion dimension,
\begin{equation}
\mathbf{f}^{(i)}_2
\;=\;
\operatorname{ReLU}\bigl(\mathbf{W}_{\text{fv2}}\,\mathbf{c}^{(i)} + \mathbf{b}_{\text{fv2}}\bigr)
\;\in\;
\mathbb{R}^{d_{\text{fus}}}.
\label{eq:fv2}
\end{equation}
Theorem~\ref{thm:complexity} quantifies the attention-core saving of this design. The saving has a price. Neither stream forms a $(t,s)$ pairwise score, so direct attention between the spectral state at scale 5 four days ago and the state at scale 21 today is unavailable inside either stream. Remark~\ref{rem:tradeoff} explains why the cross-stream MLP cannot fully recover it, and the joint-STST ablation of Section~\ref{ssec:ablations} measures the cost on our data.

\subsection{Vertical Attention Fusion (VAF)}\label{ssec:vaf}
The temporal summary $\mathbf{f}^{(i)}_1$ and the spectral summary $\mathbf{f}^{(i)}_2$ live in the same fusion space $\mathbb{R}^{d_{\text{fus}}}$ but encode complementary inductive biases. We combine them with a learned two-token attention gate. We stack the two vectors as a sequence of length two,
\begin{equation}
\mathbf{S}^{(i)}
\;=\;
\bigl[\mathbf{f}^{(i)}_1,\,\mathbf{f}^{(i)}_2\bigr] \in \mathbb{R}^{2 \times d_{\text{fus}}},
\label{eq:vaf-stack}
\end{equation}
score each row with a shared linear scorer $a^{(i)}_k = \mathbf{w}_{\text{vaf}}^\top \mathbf{S}^{(i)}_k + b_{\text{vaf}}$ for $k \in \{1, 2\}$, where the scalar $b_{\text{vaf}}$ enters both scores identically and therefore cancels under the two-token softmax, so the fusion weights depend on $\mathbf{w}_{\text{vaf}}$ alone. Normalising the two scores gives
\begin{equation}
(\alpha^{(i)}_{\text{temp}},\, \alpha^{(i)}_{\text{spec}})
\;=\;
\operatorname{Softmax}\!\bigl(a^{(i)}_1,\, a^{(i)}_2\bigr),
\label{eq:vaf-weights}
\end{equation}
and emit the fused vector
\begin{equation}
\mathbf{h}^{(i)}_{\text{fus}}
\;=\;
\alpha^{(i)}_{\text{temp}}\,\mathbf{f}^{(i)}_1 + \alpha^{(i)}_{\text{spec}}\,\mathbf{f}^{(i)}_2
\;\in\;
\mathbb{R}^{d_{\text{fus}}}.
\label{eq:vaf-fused}
\end{equation}
The VAF gate is shared across horizons. Whether a separate gate per horizon would be useful is an architectural question that connects directly to the decoder we describe next.

\subsection{Horizon-Adaptive Decoder (HAD)}\label{ssec:had}
DSTNet emits forecasts at $H = |\mathcal{H}|$ horizons in a single forward pass. The decoder is given the capacity to emphasise different coordinates of $\mathbf{h}^{(i)}_{\text{fus}}$ at different horizons, without prescribing which coordinates any horizon should use. We implement that capacity through a per-horizon gating mechanism. For each $h \in \mathcal{H}$ we learn an independent gating layer with $\tanh$ activation,
\begin{equation}
\mathbf{g}^{(i)}_h
\;=\;
\tanh\!\bigl(\mathbf{W}_{\text{g},h}\,\mathbf{h}^{(i)}_{\text{fus}} + \mathbf{b}_{\text{g},h}\bigr)
\;\in\;
[-1, 1]^{d_{\text{fus}}},
\label{eq:had-gate}
\end{equation}
and use it to gate the fused vector elementwise,
\begin{equation}
\mathbf{s}^{(i)}_h
\;=\;
\mathbf{h}^{(i)}_{\text{fus}} \odot \mathbf{g}^{(i)}_h
\;\in\;
\mathbb{R}^{d_{\text{fus}}}.
\label{eq:had-select}
\end{equation}
A small per-horizon head, comprising a 32-unit ReLU layer with dropout (rate 0.15) followed by a scalar output, then maps $\mathbf{s}^{(i)}_h$ to the prediction
\begin{equation}
\hat{y}^{(i)}_h
=
\mathbf{w}_{\text{out},h}^\top\,\operatorname{ReLU}(\mathbf{W}_{\text{p},h}\,\mathbf{s}^{(i)}_h + \mathbf{b}_{\text{p},h}) + b_{\text{out},h}.
\label{eq:had-out}
\end{equation}
Stacking across horizons produces the multi-horizon output $\hat{\mathbf{y}}^{(i)} \in \mathbb{R}^H$. Each $\hat{y}^{(i)}_h$ is the predicted cumulative log return $\hat{r}_{t_i,h}$ of \eqref{eq:target-return}, and price-level metrics use the reconstruction of \eqref{eq:price-reconstruction}. All four heads are trained jointly under a single Huber loss (with $\delta = 0.1$) averaged across horizons.

\subsubsection{Scope of the decoder}
The gate acts on the post-VAF vector $\mathbf{h}^{(i)}_{\text{fus}}$, not on $\mathbf{f}^{(i)}_1$ and $\mathbf{f}^{(i)}_2$ separately, so HAD performs horizon-conditioned elementwise modulation inside the fused feature space. Its diagonal $\tanh$ map is not a projection, because gate entries need not be zero, and it cannot re-weight the temporal against the spectral branch per horizon, because the branch identities are already mixed. Whether horizon-specific branch reliance exists is therefore not identifiable here. Section~\ref{ssec:ablations} reports a descriptive gate-variation analysis and Section~\ref{ssec:limitations} states what it does not establish.

\section{Theoretical Foundations}
\label{sec:theory}

\begin{proposition}[Causality of the Morlet-derived complex FIR filter bank]
\label{thm:causality}
Let $h_s[k]$ be the finite impulse response defined in
\eqref{eq:morlet-fir}, with
$K_s=\lfloor6s\rfloor$, and let $Z[t,m]=0$ for $t<0$.
The complex coefficient
\begin{align}
C[t,s,m]
&=
\sum_{k=0}^{K_s} h_s[k]\,Z[t-k,m] \nonumber\\
&=
\sum_{\tau=t-K_s}^{t}h_s[t-\tau]\,Z[\tau,m]
\label{eq:thm-causal-eq}
\end{align}
depends only on observations $\{Z[\tau,m]:\tau\leq t\}$.
Consequently, neither $C[t,s,m]$ nor its magnitude
$W[t,s,m]=|C[t,s,m]|$ depends on an observation after time $t$.
\end{proposition}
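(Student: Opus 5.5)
The plan is to argue directly from the finite-sum definition \eqref{eq:cwwt}, in three short steps: reindex the sum, read off its support, and then pass to the magnitude.

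\emph{Reindexing.} First I verify the second equality in \eqref{eq:thm-causal-eq}. Substitute $\tau=t-k$. As $k$ runs over $\{0,1,\ldots,K_s\}$, $\tau$ runs bijectively over $\{t-K_s,\ldots,t\}$, and $h_s[k]=h_s[t-\tau]$. Hence $C[t,s,m]=\sum_{\tau=t-K_s}^{t}h_s[t-\tau]Z[\tau,m]$. Because $K_s=\lfloor 6s\rfloor\geq 0$ for $s\in\mathbb{N}$, the range is nonempty and finite, so there are no convergence issues.

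\emph{Dependence set.} Next I make ``depends only on'' precise as a functional statement. Fix $t,s,m$, and let $Z,Z'$ be two indicator sequences with $Z[\tau,m]=Z'[\tau,m]$ for all $\tau\leq t$; they may differ arbitrarily for $\tau>t$. Every index in the reindexed sum satisfies $\tau\leq t$. Indices with $\tau<0$ carry the value $0$ under both sequences by the zero-prehistory convention. Since the kernel weights $h_s[t-\tau]$ are fixed by \eqref{eq:morlet-fir} and do not depend on data, the two sums agree term by term, so $C[t,s,m]$ computed from $Z$ equals that computed from $Z'$. Equivalently, $C[t,s,m]$ is the composition of a linear map with the coordinate projection onto $\{Z[\tau,m]:t-K_s\leq\tau\leq t\}$, which is contained in $\{Z[\tau,m]:\tau\leq t\}$.

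\emph{Magnitude.} The consequence follows because $W[t,s,m]=|C[t,s,m]|$ is a deterministic function of $C[t,s,m]$ alone, so any two inputs that agree up to time $t$ yield equal magnitudes. I would add a remark that the implementation with \texttt{lfilter}, taking numerator coefficients $h_s$ and denominator $1$, evaluates exactly this one-sided convolution with zero initial conditions, so the statement transfers to the code. The causality of $Z$ itself with respect to the raw series comes from the causal denoiser and the training-only normalisation of Section~\ref{ssec:data}. That point is not needed for the proposition as stated, which concerns $Z$.

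There is no real obstacle here. The only step needing care is stating ``depends only on'' rigorously, via the agreement-of-prefixes formulation, and checking that the boundary convention $Z[t,m]=0$ for $t<0$ introduces no hidden dependence on the future.
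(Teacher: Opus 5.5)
Your proposal is correct and follows the paper's proof. Both substitute $\tau=t-k$, observe that every index in the sum satisfies $\tau\le t$, and note that $W[t,s,m]=|C[t,s,m]|$ inherits the property because the modulus is a pointwise function of $C[t,s,m]$; your prefix-agreement formulation of ``depends only on'' is a more explicit version of the same argument rather than a different route.
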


\begin{proof}
Substituting $\tau=t-k$ into \eqref{eq:cwwt} gives
\eqref{eq:thm-causal-eq}. Because
$k\in\{0,1,\ldots,K_s\}$, every corresponding index satisfies
$\tau=t-k\leq t$. No observation with index greater than $t$
appears in the sum. Since taking the modulus is a pointwise
operation on $C[t,s,m]$, the same causality property holds for
$W[t,s,m]=|C[t,s,m]|$.
\end{proof}

\begin{corollary}[Left-boundary burn-in length]
\label{lem:burnin}
Under the initialization convention $Z[\tau,m]=0$ for $\tau<0$,
the coefficient $C[t,s,m]$ uses at least one zero-padded
prehistory value whenever $t<K_s=\lfloor6s\rfloor$. The smallest
index at which every FIR tap is supported by an observed
historical sample for every $s\in\mathcal{S}$ is
\begin{equation}
B
\;=\;
\max_{s\in\mathcal{S}}K_s
\;=\;
\max_{s\in\mathcal{S}}\lfloor6s\rfloor.
\label{eq:burnin}
\end{equation}
For $\mathcal{S}=\{1,2,3,5,8,13,21,32\}$, this gives $B=192$.
The magnitude $W[t,s,m]=|C[t,s,m]|$ inherits the same
left-boundary dependence.
\end{corollary}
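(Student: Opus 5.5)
We work directly from the reindexed form \eqref{eq:thm-causal-eq} established in Proposition~\ref{thm:causality}, which writes $C[t,s,m]$ as a sum over $\tau\in\{t-K_s,\ldots,t\}$ with weights $h_s[t-\tau]$. The corollary then reduces to an index count: we must locate exactly when this window reaches negative indices, where the convention $Z[\tau,m]=0$ substitutes a fabricated zero for an unobserved value.

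First, fix $s$ and note that the smallest index in the window is $\tau=t-K_s$, attained at the tap $k=K_s$. If $t<K_s$, then $t-K_s<0$. The tap $k=K_s$ therefore reads $Z[t-K_s,m]$, which is a zero-padded prehistory value. This gives the first claim. Conversely, if $t\ge K_s$, every $\tau=t-k$ with $0\le k\le K_s$ satisfies $\tau\ge t-K_s\ge 0$, so all taps read observed samples. Hence, for each fixed $s$, the set of fully supported indices is exactly $\{t:t\ge K_s\}$.

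Second, we intersect over $s\in\mathcal{S}$. An index is fully supported for every scale if and only if $t\ge K_s$ for all $s$, that is, $t\ge\max_{s}K_s$. The smallest such index is therefore $B=\max_{s\in\mathcal{S}}\lfloor 6s\rfloor$, which is \eqref{eq:burnin}. Because $s\mapsto\lfloor 6s\rfloor$ is nondecreasing, the maximum is attained at $s_{\max}=32$, giving $B=192$. The magnitude statement follows exactly as in Proposition~\ref{thm:causality}: $W[t,s,m]=|C[t,s,m]|$ is a pointwise function of $C[t,s,m]$, so it reads the same samples, including any padded ones.

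The main subtlety is the phrase ``uses'' a zero-padded value. Literally, a padded term contributes $h_s[K_s]\cdot 0=0$, so one could argue it is not used. We resolve this by interpreting dependence structurally: the computed coefficient differs from the one that the true, unobserved prehistory would produce. That difference is nonzero because $h_s[K_s]\neq 0$, since the Gaussian and exponential factors in \eqref{eq:morlet-fir} never vanish. Stating this nonvanishing explicitly is the one step beyond bookkeeping, and it also shows that the bound is tight rather than merely sufficient.
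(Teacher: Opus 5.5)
Your proposal is correct and follows the paper's proof essentially step for step. The earliest index read at scale $s$ is $t-K_s$, full support holds if and only if $t\ge K_s$, intersecting over $\mathcal{S}$ gives $t\ge\max_s K_s$, and $s_{\max}=32$ yields $B=192$. Your extra observation that $|h_s[K_s]|=\pi^{-1/4}s^{-1/2}e^{-K_s^2/(2s^2)}>0$ goes beyond the paper: it shows the padded tap carries nonzero weight. Strictly, though, it establishes nontrivial dependence on the prehistory rather than a nonzero discrepancy for every realisation, since the discrepancy vanishes, for example, when the true prehistory is zero.
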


\begin{proof}
At time $t$, the earliest input index used by the scale-$s$
filter is $t-K_s$. Every tap is supported by an observed input
if and only if $t-K_s\geq0$, equivalently $t\geq K_s$.
Satisfying this condition simultaneously for all
$s\in\mathcal{S}$ requires
$t\geq\max_{s\in\mathcal{S}}K_s$. Because the largest scale is
$s=32$, the resulting burn-in is
$B=\lfloor6\cdot32\rfloor=192$.
\end{proof}

\begin{theorem}[Factorized attention complexity]
\label{thm:complexity}
Consider joint self-attention over the flattened DST tensor
$\mathbf{D}^{(i)}\in\mathbb{R}^{w\times S\times M}$, represented as
$wS$ tokens and projected to query, key, and value width $d_J$.
The quadratic attention core, comprising score construction
$\mathbf{Q}\mathbf{K}^{\top}$ and attention--value multiplication
$\mathbf{A}\mathbf{V}$, has per-layer cost
\begin{equation}
\mathcal{C}_{\mathrm{joint}}
\;=\;
\Theta\!\bigl((wS)^2d_J\bigr).
\label{eq:cjoint}
\end{equation}
The factorized STST applies temporal attention to $w$ tokens at
projected width $d_A$ and scale attention to $S$ tokens at projected
width $d_B$. The combined quadratic-core cost of one corresponding
temporal--scale layer pair is
\begin{equation}
\mathcal{C}_{\mathrm{fact}}
\;=\;
\Theta\!\bigl(w^2d_A+S^2d_B\bigr).
\label{eq:cfact}
\end{equation}
At a common projected width $d_J=d_A=d_B=d$, the per-layer ratio is
\begin{equation}
\frac{\mathcal{C}_{\mathrm{joint}}}
     {\mathcal{C}_{\mathrm{fact}}}
\;=\;
\frac{w^2S^2}{w^2+S^2}.
\label{eq:speedup}
\end{equation}
For $w=20$ and $S=8$, this gives
\begin{equation}
\frac{20^2\cdot8^2}{20^2+8^2}
\;=\;
\frac{25{,}600}{464}
\;\approx\;
55.2.
\end{equation}
The factor $55.2$ is therefore an equal-width, per-layer ratio for
the quadratic attention core. It is not a ratio of total model FLOPs
or wall-clock time. The implemented widths are $d_J=64$ for the
joint-STST ablation, $d_A=128$ for Stream~A, and $d_B=64$ for
Stream~B. At these configured widths, one joint layer compared with
one temporal--scale layer pair gives
\begin{equation}
\frac{(wS)^2d_J}{w^2d_A+S^2d_B}
\;=\;
\frac{160^2\cdot64}
     {20^2\cdot128+8^2\cdot64}
\;\approx\;
29.6.
\label{eq:configured-attention-ratio}
\end{equation}
Because the implemented factorized module contains two attention
layers in each stream whereas the joint-STST ablation contains one
joint attention layer, the corresponding aggregate quadratic-core
ratio is approximately $14.8$ before projection, feed-forward,
pooling, preprocessing, and other model costs are included.

Projection costs also differ. A joint layer projects $wS$ tokens
with input dimension $M$ at width $d_J$, for cost proportional to
$wSMd_J$. A corresponding factorized layer pair projects the temporal
and scale views at widths $d_A$ and $d_B$, for combined cost
proportional to $wSM(d_A+d_B)$. Consequently, the reduction in total
model computation is smaller than the quadratic-core ratios above.
\end{theorem}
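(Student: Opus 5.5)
The proof is a direct operation count for standard scaled dot-product attention, followed by arithmetic. For a single attention layer over $n$ tokens with query, key and value width $d$, I will write $\mathbf{Q},\mathbf{K},\mathbf{V}\in\mathbb{R}^{n\times d}$. Forming $\mathbf{Q}\mathbf{K}^{\top}$ costs $n^2 d$ multiply--adds. The row-wise softmax costs $\Theta(n^2)$. Forming $\mathbf{A}\mathbf{V}$ with $\mathbf{A}\in\mathbb{R}^{n\times n}$ costs another $n^2 d$. The quadratic core is therefore $\Theta(n^2 d)$, because the softmax term is dominated when $d\ge 1$. Splitting the width across heads does not change this: $h$ heads of width $d/h$ give $h\cdot n^2 (d/h)=n^2 d$.

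\textbf{Step 1: the two cost formulas.}
\begin{itemize}
\item Substituting $n=wS$ and $d=d_J$ gives \eqref{eq:cjoint}.
\item For the factorized module, Stream~A has $n=w$ tokens, since the reshape \eqref{eq:streamA-reshape} produces $w$ rows, at width $d_A$.
\item Stream~B has $n=S$ tokens by \eqref{eq:streamB-reshape}, at width $d_B$.
\item One layer from each stream is costed separately and the two costs are summed, which gives \eqref{eq:cfact}.
\end{itemize}

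\textbf{Step 2: the equal-width ratio.}
\begin{itemize}
\item Setting $d_J=d_A=d_B=d$ cancels $d$. Leading constants match because both sides count the same two matrix products per layer. This yields \eqref{eq:speedup}.
\item Plugging in $w=20$ and $S=8$ gives $25{,}600/464\approx 55.17$.
\end{itemize}

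\textbf{Step 3: the configured widths.}
\begin{itemize}
\item The numerator is $160^2\cdot 64=1{,}638{,}400$.
\item The denominator is $400\cdot128+64\cdot64=51{,}200+4{,}096=55{,}296$.
\item The ratio is $\approx 29.63$.
\item The joint ablation has one layer while each stream has two. The factorized cost therefore doubles, and the aggregate ratio halves to $\approx 14.8$.
\end{itemize}

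\textbf{Step 4: projections and total cost.}
\begin{itemize}
\item A linear projection of $n$ tokens from input dimension $p$ to width $d$ costs $npd$.
\item Joint: $n=wS$ and $p=M$, giving $wSMd_J$.
\item Stream~A: $n=w$ and $p=SM$, giving $wSMd_A$.
\item Stream~B: $n=S$ and $p=wM$, giving $wSMd_B$.
\item These sum to $wSM(d_A+d_B)$, which exceeds the joint projection cost whenever $d_A+d_B>d_J$. Because this linear term enters both totals, the total-cost ratio is strictly smaller than the quadratic-core ratio. That justifies the qualitative closing claim.
\end{itemize}

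\textbf{Main obstacle.} The difficulty is not mathematical. It lies in matching the $\Theta$ statements to exact counts so that a \emph{ratio} of $\Theta$ expressions is meaningful. I will handle this by proving exact leading-term counts of $2n^2d$ for the core, after which the constants cancel. The statement's explicit scoping, that the ratio covers the quadratic core only and excludes projections and softmax, removes any remaining ambiguity.
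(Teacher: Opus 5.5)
Your proposal is correct and follows the paper's proof step for step. You cost $\mathbf{Q}\mathbf{K}^{\top}$ and $\mathbf{A}\mathbf{V}$ at $\Theta(n^2d)$ each, substitute $n=wS$, $n=w$ and $n=S$, add the two stream costs, equalise the widths, and then insert the configured widths and layer counts; your exact count of $2n^2d$ multiply--adds is a modest tightening, because it makes the stated equality of ratios meaningful where the paper divides $\Theta$-expressions directly, and your Step~4 conclusion holds because the projection ratio $d_J/(d_A+d_B)$ is below $1$ and hence below the core ratio, so the combined ratio, being their mediant, is strictly smaller than the core ratio.
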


\begin{proof}
For a token sequence of length $L$ and total projected width $d$,
forming $\mathbf{Q}\mathbf{K}^{\top}$ requires
$\Theta(L^2d)$ operations. Multiplying the resulting attention
matrix by the value matrix, $\mathbf{A}\mathbf{V}$, also requires
$\Theta(L^2d)$ operations and is therefore of the same asymptotic
order, not a lower-order term. The softmax operation contributes
$\Theta(hL^2)$ for $h$ heads, while the query, key, value, output,
and feed-forward projections contribute separate non-quadratic
terms.

For joint attention, $L=wS$ and the projected width is $d_J$, which
gives \eqref{eq:cjoint}. For the factorized design, the
temporal stream has $L=w$ and width $d_A$, while the scale stream has
$L=S$ and width $d_B$. Adding their costs gives
\eqref{eq:cfact}. Setting $d_J=d_A=d_B=d$ yields
\eqref{eq:speedup}. Substitution of the configured widths
and layer counts gives the implementation-specific ratios stated
above.
\end{proof}

\begin{remark}[Trade-off, not equivalence]
\label{rem:tradeoff}
Stream A and Stream B never form a single $(t, s)$ pairwise score, so neither stream models joint cross-axis interactions on its own. The cross-stream MLP of \eqref{eq:cross} mixes the two pooled summaries and recovers part of those interactions, not all of them. The reason is structural. Both pooled vectors $\mathbf{h}^{(i)}_{\text{ts}}$ and $\mathbf{h}^{(i)}_{\text{ss}}$ pass through attention pooling that collapses the time axis and the scale axis before the MLP sees the data, and any information that pooling discards cannot be recovered downstream. The factorized path is therefore a restricted hypothesis class, not an equivalent one, and we make no universal-approximation claim. What the factorization gives up is an empirical question, answered for our datasets by the joint-STST ablation in Section~\ref{ssec:ablations}.
\end{remark}

\section{Experiments}\label{sec:experiments}
Sections~\ref{ssec:setup}--\ref{ssec:baselines} fix the data, protocol, metrics, and baselines. The remaining subsections report forecasting accuracy, seed stability, the gold probe, regime diagnostics, ablations, and a downstream allocation diagnostic.

\subsection{Setup, Data, and Reproducibility}\label{ssec:setup}
\subsubsection{Data}
We evaluate on seven equity indices spanning three continents (KOSPI, Nikkei~225, S\&P~500, DJI, DAX, NASDAQ Composite, NYSE Composite), downloaded from Yahoo Finance, and on gold, which serves as the single non-equity robustness probe. Prices are daily bars with auto-adjusted closes. We drop duplicate dates, sort chronologically, and impute nothing: days absent from an exchange's calendar are simply absent, and each series keeps its own calendar.
Gold shares the daily sampling and heavy tails of the indices while its volatility is driven by macro and supply dynamics rather than earnings flow, so it is the cleanest single-series probe of whether the causal-trajectory design remains useful once trained on a different asset class under the same fixed configuration. Gold is included on this basis because its economic role and behaviour relative to equity markets have been studied separately in the financial literature~\cite{baur2010gold}. Every series is daily, so the lookback window is $w = 20$ trading days and the forecast horizons are $\mathcal{H} = \{1, 3, 5, 10\}$ throughout, with no separate protocol for gold. The test window spans 2023-06-01 to 2024-05-31. That one-year interval contains roughly 250 trading observations per series, with the exact per-series counts in Table~\ref{tab:data}, since each exchange keeps its own calendar. Model selection and threshold calibration use a five-fold expanding-window walk-forward inside the pre-2023-06 history, each fold's validation block strictly after its training block.

\begin{table}[!t]
\caption{OHLCV Datasets, Ticker Symbols, and Observation Periods}
\label{tab:data}
\centering
\footnotesize
\setlength{\tabcolsep}{3pt}
\renewcommand{\arraystretch}{1.1}
\begin{adjustbox}{max width=\columnwidth}
\begin{tabular}{@{}llcrrr@{}}
\toprule
\textbf{Series} & \textbf{Ticker} & \textbf{Span} & \textbf{Obs.} &
\textbf{Windows} & \textbf{Test} \\
\midrule
KOSPI            & \texttt{\^{}KS11}   & \makecell{2010-01-29 to\\2024-05-30} & 3530 & 3063 & 235 \\
Nikkei 225       & \texttt{\^{}N225}   & \makecell{2010-02-01 to\\2024-05-30} & 3505 & 3037 & 236 \\
S\&P 500         & \texttt{\^{}GSPC}   & \makecell{2010-02-01 to\\2024-05-30} & 3607 & 3133 & 242 \\
DJI              & \texttt{\^{}DJI}    & \makecell{2010-02-01 to\\2024-05-30} & 3607 & 3133 & 242 \\
DAX              & \texttt{\^{}GDAXI}  & \makecell{2010-01-29 to\\2024-05-30} & 3638 & 3160 & 246 \\
NASDAQ Composite & \texttt{\^{}IXIC}   & \makecell{2010-02-01 to\\2024-05-30} & 3607 & 3133 & 242 \\
NYSE Composite   & \texttt{\^{}NYA}    & \makecell{2010-02-01 to\\2024-05-30} & 3607 & 3133 & 242 \\
Gold             & \texttt{GC=F}       & \makecell{2010-02-01 to\\2024-05-30} & 3604 & 3130 & 242 \\
\bottomrule
\end{tabular}
\end{adjustbox}
\end{table}

\subsubsection{Hardware and software}
All experiments were conducted on a single workstation equipped with one NVIDIA RTX 5060 Ti GPU (16\,GB VRAM). The implementation is in PyTorch 2.12.0+cu130 with NumPy 2.2.5 under Python 3.10.20. For reproducibility, we enable deterministic mode with \texttt{torch.backends.cudnn.deterministic = True} and \texttt{torch.backends.cudnn.benchmark = False}. Every main-text table uses seed 42, and Section~\ref{ssec:multiseed} reports a three-seed rerun.

\subsubsection{Hyperparameters}
Table~\ref{tab:hyperparams} consolidates every hyperparameter used in the implementation. Architectural choices were selected by aggregating validation performance across the seven equity-index development folds. No gold observation from either validation or test data influenced architecture selection. The resulting configuration was then frozen for every series, including gold.

\begin{table}[!t]
\caption{DSTNet Hyperparameters Used for All Eight Series}
\label{tab:hyperparams}
\centering
\footnotesize
\setlength{\tabcolsep}{3pt}
\renewcommand{\arraystretch}{1.05}
\begin{tabular}{@{}>{\raggedright\arraybackslash}p{0.25\columnwidth}>{\raggedright\arraybackslash}p{0.31\columnwidth}>{\raggedright\arraybackslash}p{0.38\columnwidth}@{}}
\toprule
\textbf{Component} & \textbf{Hyperparameter} & \textbf{Value} \\
\midrule
Residual denoiser    & wavelet, level                    & Symlet-4, $L=2$ \\
                     & threshold rule                    & MAD soft \eqref{eq:mad-threshold} \\
Indicators ($M=7$)   & set                               & RSI-10, Stoch \%K, CCI-14, OBV, ATR-14, Williams \%R, ROC-12 \\
Standardisation      & method                            & train-fold Z-score \\
\midrule
FIR kernel           & source, $\omega_0$                & Morlet-derived, $6.0$ \\
FIR scale parameters & set                               & $\{1,2,3,5,8,13,21,32\}$ \\
                     & count                             & $S = 8$ \\
Implementation       & filter                            & FIR via \texttt{lfilter} \\
Burn-in              & $B = \lfloor 6s_{\max}\rfloor$    & $192$ samples \\
\midrule
Lookback             & $w$                               & 20 (all 8 datasets) \\
Fusion dim           & $d_{\text{fus}}$                  & 64 \\
Temporal branch      & Conv1D filters, kernel            & 64, 3 \\
                     & BiLSTM units                      & 32, 64 \\
                     & dropout                           & 0.3 \\
STST Stream A        & tokens, dim                       & 20, 56 \\
                     & MHA layers, heads, $d_k$          & 2, 4, 32 \\
                     & attn dropout                      & 0.1 \\
STST Stream B        & tokens, dim                       & 8, 140 \\
                     & MHA layers, heads, $d_k$          & 2, 4, 16 \\
                     & attn dropout                      & 0.1 \\
Cross-stream MLP     & widths                            & 256, 128 \\
                     & dropout                           & 0.3 \\
HAD                  & gate activation                   & $\tanh$ \\
                     & per-horizon head                  & Dense(32, ReLU) $\to$ Dense(1) \\
                     & dropout                           & 0.15 \\
\midrule
Loss                 & Huber, $\delta$                   & 0.1 \\
Optimiser            & Adam, $\eta$                      & lr $=5\!\times\!10^{-4}$ \\
Batch size           & main / joint-STST                 & 32 / 16 \\
Epochs               & per fold                          & 50 \\
Random seed          & primary / stability               & 42 / \{42, 101, 202\} \\
\midrule
WFV folds            & all datasets                      & 5 \\
Test split           & all datasets                      & 2023-06-01 to 2024-05-31 \\
Bootstrap            & method, resamples, block          & moving block, 1000, 20 \\
\bottomrule
\end{tabular}
\end{table}

\subsection{Targets, Metrics, and Confidence Intervals}\label{ssec:metrics}
Let $P_t$ denote the adjusted closing price at the forecast origin $t$. For each horizon $h \in \mathcal{H}$ the network target is the cumulative log return
\begin{equation}
r_{t,h} \;=\; \log\!\bigl(P_{t+h} / P_t\bigr),
\label{eq:target-return}
\end{equation}
the network emits $\hat{r}_{t,h}$, and every price-level metric is computed on the reconstruction
\begin{equation}
\hat{P}_{t+h \mid t} \;=\; P_t \exp\!\bigl(\hat{r}_{t,h}\bigr).
\label{eq:price-reconstruction}
\end{equation}
Under this convention the persistence baseline is the zero-return forecast $\hat{r}_{t,h} = 0$, equivalently $\hat{P}_{t+h \mid t} = P_t$, which is why it anchors every table. The same target and reconstruction apply to every baseline, so no model gains a unit advantage.

We report three price-level metrics per horizon, all on reconstructed prices, computed over the set $\mathcal{T}_h$ of test forecast origins common to every model at horizon $h$, with $n_h = |\mathcal{T}_h|$:
\begin{equation}
\mathrm{MAE}_h = \frac{1}{n_h} \sum_{t \in \mathcal{T}_h} \bigl|P_{t+h} - \hat{P}_{t+h \mid t}\bigr|,
\label{eq:metrics-mae}
\end{equation}
\begin{equation}
\mathrm{RMSE}_h = \Bigl(\frac{1}{n_h} \sum_{t \in \mathcal{T}_h} \bigl(P_{t+h} - \hat{P}_{t+h \mid t}\bigr)^{2}\Bigr)^{1/2},
\label{eq:metrics-mae-rmse}
\end{equation}
\begin{equation}
\mathrm{MAPE}_h = \frac{100}{n_h} \sum_{t \in \mathcal{T}_h} \frac{\bigl|P_{t+h} - \hat{P}_{t+h \mid t}\bigr|}{P_{t+h}},
\label{eq:metrics-mape}
\end{equation}
where the factor of $100$ expresses MAPE in percent. Directional accuracy at horizon $h$ is the fraction of test windows on which $\operatorname{sign}(\hat{r}_{t,h}) = \operatorname{sign}(r_{t,h})$, computed on returns before reconstruction. The trading rule of Section~\ref{ssec:trading} acts on the raw return output $\hat{r}_{t,1}$ directly.

\subsubsection{Block-bootstrap confidence intervals}
At the longest horizon $h = 10$, each series supplies between 235 and 246 eligible forecast origins, as listed in Table~\ref{tab:data}, since split membership is decided by a sample's latest target date and every retained test sample therefore carries a valid ten-step target. Adjacent ten-day errors nevertheless overlap heavily, because their target intervals share observations. The test period contains only about two dozen disjoint ten-day target blocks, so the number of forecast origins overstates how much approximately independent information is available at this horizon. Point estimates of MAE, RMSE, MAPE, and DA consequently require cautious interpretation. We report 95\% confidence intervals computed via the moving-block bootstrap of~\cite{kunsch1989} with block length 20 and 1000 resamples per metric. Each resample preserves local temporal correlation and yields one realisation of the metric. The 2.5\% and 97.5\% quantiles of the resample distribution are the CI bounds. The bootstrap sample length matches the test-set length so the resampled distribution has the same coverage of the data as the original. We apply the same bootstrap protocol to every baseline for fairness.

\subsubsection{Diebold--Mariano comparison}
At the one-day horizon, irrespective of the Huber objective ($\delta=0.1$) used to train DSTNet, we conduct Diebold--Mariano comparisons of equal expected predictive loss~\cite{diebold2002comparing} using absolute reconstructed-price error, aligned with the reported MAE. For model $m$, let
\begin{equation*}
e_t^{(m)} = \left|P_{t+1}-\hat{P}_{t+1\mid t}^{(m)}\right|,
\end{equation*}
and define the loss differential as
\begin{equation}
d_t = e_t^{(b)} - e_t^{(\mathrm{DSTNet})}.
\label{eq:dm-loss}
\end{equation}
A positive statistic corresponds to a positive sample-mean loss differential and therefore favours DSTNet. We estimate the variance of the loss differential using a Bartlett-weighted estimator with the standard horizon-based truncation lag $h-1$. Because the reported comparisons use $h=1$, the truncation lag is zero. We apply the Harvey-Leybourne-Newbold small-sample correction~\cite{harvey1997testing} and calculate two-sided significance levels using a Student-$t$ reference distribution with $n_1-1$ degrees of freedom. A zero truncation lag does not by itself guarantee that the loss differential is serially independent, since adjacent forecast origins use heavily overlapping lookback windows and model errors can remain persistent. The reported $p$-values are unadjusted across the nine baseline comparisons within each series and are therefore exploratory rather than familywise-confirmatory. A non-significant comparison indicates insufficient evidence of unequal expected loss and does not establish model equivalence. Significance is denoted by $^{*}p<0.10$, $^{**}p<0.05$, and $^{***}p<0.01$.

\subsection{Baselines}\label{ssec:baselines}
We use a uniform-protocol reproduction for the empirical comparison. Every baseline is retrained under the chronological protocol of Section~\ref{ssec:setup}, using the same expanding-window validation folds and the same single final hold-out as DSTNet, so that every number in the main tables comes from one pipeline rather than from a patchwork of source papers. Finance-specific methods whose implementations and data protocols cannot be reproduced identically are positioned in Section~\ref{sec:related} but are not treated as controlled numerical baselines.

\subsubsection{Uniform-protocol baselines}
The main per-horizon tables hold nine learned baselines, a naive persistence forecaster, and DSTNet. Of the nine, three are recurrent, one is a tree ensemble, one is a vanilla attention architecture, and four are contemporary forecasters introduced below. The recurrent group is a single-layer LSTM~\cite{hochreiter1997}, a bidirectional LSTM, and a CNN-BiLSTM that prepends a convolutional stage to the recurrence. That last model is also the temporal branch of DSTNet in isolation, which makes it the sharpest ablation-style baseline in the panel. The tree ensemble is a gradient-boosted XGBoost regressor~\cite{chen2016xgboost} on flattened lookback windows, included as a non-neural reference rather than as an expected front-runner. The attention baseline is a vanilla Transformer encoder~\cite{vaswani2017}. We add four modern forecasters under the identical protocol: DLinear~\cite{zeng2023dlinear}, a per-horizon linear map, N-BEATS~\cite{nbeats2020}, a deep residual direct forecaster, PatchTST~\cite{patchtst2023}, a patch-based channel-independent transformer, and iTransformer~\cite{itransformer2024}, which attends across the variate dimension. Every baseline is trained on the same denoised OHLCV history and trailing technical indicators available to DSTNet. DSTNet additionally converts those indicators into the causal DST tensor through the proposed spectral front-end, so the uniform-protocol comparison isolates the value of the learned representation and architecture under the same information set, not under extra external data.

\subsubsection{Baseline fairness and random-walk reference}
Persistence is the zero-return forecast of Section~\ref{ssec:metrics}, equivalent to a driftless random walk on reconstructed price levels, and improving on it out of sample is the classical bar for daily equity forecasting. For reference-based comparisons we use the per-cell minimum across all ten fixed competitors rather than the best learned model alone, which keeps the comparison conservative for DSTNet. All nine learned baselines were tuned under the same validation protocol as DSTNet, with identical search methods, trial budgets, validation criteria, and selection folds. Search spaces were architecture-specific, no baseline relied on author-selected default hyperparameters, and every selected configuration was frozen before final hold-out evaluation.

\subsection{Main Multi-Horizon Forecasting Results}\label{ssec:results}
Table~\ref{tab:results-finance-mae} reports per-horizon test MAE on the eight-series panel against the uniform-protocol baselines, with 95\% moving-block bootstrap intervals. The RMSE and MAPE companions and the $H{=}1$ directional-accuracy table are Tables~\ref{tab:results-finance-rmse}, \ref{tab:results-finance-mape}, and~\ref{tab:baseline-da-h1} in Appendix~\ref{app:full-metrics}. MAE and MAPE give nearly identical patterns, so the main text reports MAE and states where RMSE differs. On the primary seed-42 run DSTNet holds the lowest point estimate in 32 of 32 MAE cells, 32 of 32 MAPE cells, and 31 of 32 RMSE cells, the exception being Nikkei~225 at $H{=}1$, where persistence is lower by under one percent. Section~\ref{ssec:multiseed} reports the three-seed result, which is weaker.

\begin{table}[!t]
\caption{Panel-Mean Error Relative to Persistence Across 32 Series-Horizon Cells for Seed 42. Positive Values Are Worse Than Persistence, With Wins Shown in Parentheses.}
\label{tab:pers-summary}
\centering
\footnotesize
\setlength{\tabcolsep}{4pt}
\begin{adjustbox}{max width=\columnwidth}
\begin{tabular}{@{}lrrr@{}}
\toprule
\textbf{Model} & \textbf{MAE (\%)} & \textbf{RMSE (\%)} & \textbf{MAPE (\%)} \\
\midrule
LSTM                & $+12.58$ (0) & $+12.30$ (0) & $+12.59$ (0) \\
BiLSTM              & $+8.53$ (0)  & $+8.32$ (0)  & $+8.54$ (0)  \\
CNN-BiLSTM          & $+5.01$ (0)  & $+4.76$ (1)  & $+5.01$ (0)  \\
XGBoost             & $+12.83$ (0) & $+12.73$ (0) & $+12.84$ (0) \\
Vanilla Transformer & $+2.88$ (1)  & $+3.03$ (4)  & $+2.88$ (1)  \\
DLinear             & $+3.21$ (0)  & $+3.08$ (2)  & $+3.21$ (0)  \\
N-BEATS             & $+5.02$ (0)  & $+4.80$ (0)  & $+5.02$ (0)  \\
PatchTST            & $+3.09$ (2)  & $+2.95$ (5)  & $+3.09$ (2)  \\
iTransformer        & $+3.98$ (0)  & $+3.54$ (4)  & $+3.98$ (0)  \\
\midrule
DSTNet              & $\mathbf{-2.27}$ (32) & $\mathbf{-2.95}$ (31) & $\mathbf{-2.27}$ (32) \\
\bottomrule
\end{tabular}
\end{adjustbox}
\end{table}

\subsubsection{Under MAE and MAPE the learned baselines do not improve on the random walk}
Table~\ref{tab:pers-summary} aggregates every cell against the persistence benchmark and is the central comparison of this study. Under MAE and MAPE, persistence is the lowest non-DSTNet point estimate in 29 of the 32 cells. Under both metrics no learned baseline is below it at $H\in\{1,3,5\}$ on any series, and the same three exceptions fall at $H{=}10$: PatchTST on DJI and NYSE, and the vanilla Transformer on DAX. The nine learned models sit between 2.88\% and 12.83\% above persistence on panel-mean MAE. Under RMSE the pattern is less uniform: 16 learned-model cells fall below persistence, 13 at $H{=}10$. We therefore restrict the stronger random-walk comparison to MAE and MAPE rather than treating it as metric-independent. DSTNet is 2.27\% below persistence on panel-mean MAE, 2.27\% on panel-mean MAPE, and 2.95\% on panel-mean RMSE, and it is the only model in the panel below persistence in every MAE and MAPE cell. Under MAE the margin grows monotonically with horizon on all eight series, from 0.67--0.92\% at $H{=}1$ to 3.43--4.49\% at $H{=}10$.

Because the random walk, rather than a learned model, is the binding competitor under these metrics, a large Diebold--Mariano statistic against a weak baseline is weak evidence of forecasting skill, and we do not treat it as more. Fig.~\ref{fig:mae-horizon-panels} shows the result per series after normalisation by persistence MAE. The DSTNet gap is small relative to its bootstrap band at $H{=}1$ and widens with the horizon.

\begin{sidewaystable*}[p]
\caption{Per-horizon test MAE across the full evaluation panel (KOSPI, Nikkei~225, S\&P 500, DJI, DAX, NASDAQ, NYSE, Gold). 95\% moving-block bootstrap CIs in brackets. Lower is better. Boldface indicates the lowest point estimate in each row and is descriptive rather than a claim of statistical significance.}
\label{tab:results-finance-mae}
\centering
\fontsize{6.3}{7.2}\selectfont
\setlength{\tabcolsep}{2pt}
\renewcommand{\arraystretch}{1.0}
\begin{adjustbox}{max totalsize={\textheight}{0.88\textwidth}}
\begin{tabular}{@{}ll@{\hspace{4pt}}ccccccccccc@{}}
\toprule
\textbf{Asset} & \textbf{H} & \textbf{Persistence} & \textbf{LSTM} & \textbf{BiLSTM} & \textbf{CNN-BiLSTM} & \textbf{XGB} & \textbf{Transformer} & \textbf{DLinear} & \textbf{N-BEATS} & \textbf{PatchTST} & \textbf{iTransformer} & \textbf{DSTNet} \\
\midrule
\multirow{4}{*}{KOSPI} & 1 & \mc{19.94}{17.88}{23.35} & \mc{23.32}{20.47}{27.30} & \mc{22.18}{19.81}{26.02} & \mc{21.09}{18.74}{24.53} & \mc{22.36}{20.16}{25.89} & \mc{21.10}{18.72}{24.50} & \mc{21.04}{18.63}{24.00} & \mc{21.26}{19.14}{24.31} & \mc{21.15}{18.69}{24.45} & \mc{21.11}{18.86}{24.38} & \mcb{19.78}{17.45}{23.11} \\
 & 3 & \mc{33.73}{29.71}{40.37} & \mc{38.67}{33.60}{45.11} & \mc{36.46}{32.30}{43.51} & \mc{35.45}{31.52}{41.56} & \mc{38.47}{33.81}{46.03} & \mc{35.24}{31.24}{41.03} & \mc{35.08}{31.22}{40.84} & \mc{35.55}{31.02}{41.43} & \mc{35.06}{30.72}{41.75} & \mc{35.79}{31.64}{42.92} & \mcb{33.25}{28.86}{38.84} \\
 & 5 & \mc{44.43}{37.28}{53.82} & \mc{50.53}{42.52}{61.68} & \mc{47.63}{40.14}{58.71} & \mc{46.56}{38.32}{57.80} & \mc{51.17}{43.70}{64.63} & \mc{45.50}{37.59}{55.63} & \mc{46.20}{38.78}{58.09} & \mc{47.16}{40.50}{57.45} & \mc{45.79}{37.98}{55.97} & \mc{46.46}{39.41}{58.31} & \mcb{43.34}{35.64}{53.20} \\
 & 10 & \mc{65.44}{54.37}{80.57} & \mc{70.34}{59.97}{83.80} & \mc{69.55}{57.90}{86.21} & \mc{67.28}{57.02}{82.53} & \mc{72.25}{62.44}{86.55} & \mc{65.79}{55.63}{80.28} & \mc{66.81}{57.69}{81.22} & \mc{67.81}{58.47}{80.87} & \mc{66.61}{57.68}{81.74} & \mc{67.29}{57.89}{81.99} & \mcb{63.08}{52.64}{75.78} \\
\midrule
\multirow{4}{*}{Nikkei~225} & 1 & \mc{303.54}{268.77}{348.33} & \mc{356.56}{321.36}{408.03} & \mc{338.82}{298.56}{393.90} & \mc{327.65}{290.19}{383.78} & \mc{347.55}{307.33}{397.59} & \mc{316.05}{281.92}{362.41} & \mc{315.60}{280.49}{368.01} & \mc{323.68}{289.91}{369.63} & \mc{318.27}{279.48}{367.11} & \mc{320.01}{282.21}{366.91} & \mcb{300.74}{271.14}{349.52} \\
 & 3 & \mc{525.17}{464.42}{625.97} & \mc{598.84}{530.47}{718.50} & \mc{586.89}{513.98}{705.54} & \mc{556.68}{495.24}{648.14} & \mc{582.40}{509.66}{681.85} & \mc{542.66}{484.82}{647.28} & \mc{541.33}{471.24}{641.28} & \mc{549.39}{490.75}{660.14} & \mc{542.04}{477.65}{639.65} & \mc{545.18}{475.02}{648.96} & \mcb{516.61}{455.73}{613.74} \\
 & 5 & \mc{698.02}{598.42}{861.80} & \mc{792.19}{654.72}{997.03} & \mc{753.95}{629.64}{911.52} & \mc{736.47}{630.41}{913.24} & \mc{782.12}{658.11}{985.52} & \mc{713.03}{595.93}{872.37} & \mc{714.61}{607.11}{898.63} & \mc{729.35}{616.21}{918.08} & \mc{714.42}{612.48}{867.14} & \mc{716.39}{601.04}{877.06} & \mcb{682.19}{580.71}{861.28} \\
 & 10 & \mc{955.25}{793.75}{1184.01} & \mc{1057.57}{902.88}{1265.30} & \mc{1018.76}{853.22}{1226.28} & \mc{984.64}{844.08}{1200.65} & \mc{1073.25}{897.73}{1280.45} & \mc{964.20}{810.18}{1173.10} & \mc{976.12}{814.36}{1195.14} & \mc{985.00}{834.83}{1192.39} & \mc{966.30}{809.67}{1169.67} & \mc{967.85}{805.05}{1156.93} & \mcb{922.44}{791.03}{1101.35} \\
\midrule
\multirow{4}{*}{S\&P 500} & 1 & \mc{26.88}{24.18}{30.68} & \mc{30.69}{27.52}{34.99} & \mc{29.60}{26.06}{34.71} & \mc{29.17}{25.73}{34.08} & \mc{31.00}{27.82}{36.17} & \mc{28.15}{25.18}{32.90} & \mc{28.30}{25.19}{32.57} & \mc{28.73}{25.79}{33.72} & \mc{28.07}{25.03}{32.11} & \mc{28.58}{25.40}{33.01} & \mcb{26.66}{23.52}{30.64} \\
 & 3 & \mc{48.94}{43.29}{58.65} & \mc{56.68}{49.36}{67.89} & \mc{54.19}{47.15}{64.90} & \mc{51.90}{45.15}{60.76} & \mc{56.19}{49.46}{65.89} & \mc{50.60}{43.79}{59.27} & \mc{50.79}{44.45}{60.47} & \mc{52.04}{45.85}{60.41} & \mc{51.37}{45.33}{61.31} & \mc{50.94}{44.14}{59.66} & \mcb{48.13}{41.67}{56.74} \\
 & 5 & \mc{62.39}{51.77}{78.46} & \mc{70.12}{59.22}{87.22} & \mc{66.70}{57.30}{83.40} & \mc{65.19}{54.11}{79.18} & \mc{69.73}{57.86}{85.17} & \mc{63.69}{53.22}{78.42} & \mc{64.56}{53.43}{78.06} & \mc{65.66}{54.38}{80.58} & \mc{63.86}{54.74}{80.71} & \mc{65.28}{55.23}{81.24} & \mcb{60.82}{50.83}{76.45} \\
 & 10 & \mc{95.36}{79.58}{114.35} & \mc{104.86}{90.34}{129.62} & \mc{100.95}{86.32}{124.79} & \mc{98.37}{82.19}{121.23} & \mc{105.46}{91.36}{126.36} & \mc{97.18}{80.81}{118.16} & \mc{97.06}{81.16}{116.62} & \mc{98.35}{82.21}{119.29} & \mc{96.61}{81.82}{119.38} & \mc{97.64}{82.72}{119.57} & \mcb{91.94}{76.91}{110.95} \\
\midrule
\multirow{4}{*}{DJI} & 1 & \mc{174.71}{154.44}{200.58} & \mc{205.97}{186.02}{241.15} & \mc{195.26}{172.93}{224.29} & \mc{186.49}{166.45}{213.97} & \mc{194.54}{174.17}{225.04} & \mc{184.62}{164.59}{211.27} & \mc{181.56}{161.59}{212.82} & \mc{183.92}{164.24}{214.93} & \mc{183.47}{163.16}{214.77} & \mc{184.15}{161.72}{212.47} & \mcb{173.14}{151.94}{198.38} \\
 & 3 & \mc{332.69}{291.41}{393.07} & \mc{376.79}{330.78}{448.58} & \mc{362.11}{320.25}{433.13} & \mc{349.20}{310.73}{411.20} & \mc{369.02}{329.72}{432.40} & \mc{345.06}{303.58}{413.53} & \mc{345.35}{300.20}{400.91} & \mc{346.70}{305.43}{413.33} & \mc{344.29}{297.77}{407.74} & \mc{348.47}{310.85}{404.98} & \mcb{327.31}{285.03}{386.09} \\
 & 5 & \mc{429.93}{363.46}{533.70} & \mc{473.47}{393.82}{590.28} & \mc{465.30}{387.41}{570.29} & \mc{452.62}{373.05}{564.45} & \mc{476.26}{401.81}{591.30} & \mc{436.53}{362.19}{531.78} & \mc{441.68}{379.45}{555.04} & \mc{448.16}{375.01}{561.32} & \mc{436.03}{368.27}{537.32} & \mc{443.40}{379.46}{551.18} & \mcb{417.93}{354.10}{509.89} \\
 & 10 & \mc{645.66}{553.90}{790.75} & \mc{697.13}{595.30}{830.92} & \mc{693.55}{593.61}{838.70} & \mc{661.67}{552.09}{807.19} & \mc{735.66}{620.45}{891.77} & \mc{650.04}{561.67}{789.67} & \mc{655.95}{564.70}{784.95} & \mc{663.78}{553.67}{803.25} & \mc{645.54}{541.80}{795.91} & \mc{660.02}{563.42}{816.94} & \mcb{620.80}{520.96}{751.92} \\
\midrule
\multirow{4}{*}{DAX} & 1 & \mc{89.75}{80.49}{103.58} & \mc{101.01}{89.91}{115.70} & \mc{99.10}{89.04}{116.15} & \mc{95.91}{84.94}{112.52} & \mc{101.81}{89.72}{117.77} & \mc{93.86}{82.88}{108.49} & \mc{93.32}{83.19}{106.37} & \mc{96.00}{85.07}{112.14} & \mc{94.59}{83.25}{108.21} & \mc{94.54}{85.02}{110.12} & \mcb{88.92}{79.95}{103.10} \\
 & 3 & \mc{162.41}{141.11}{188.79} & \mc{186.01}{162.66}{215.68} & \mc{179.06}{157.30}{207.98} & \mc{173.38}{151.92}{208.16} & \mc{181.06}{157.06}{215.75} & \mc{168.92}{147.86}{197.84} & \mc{167.69}{147.96}{199.27} & \mc{170.13}{149.36}{202.92} & \mc{168.59}{145.97}{202.69} & \mc{170.50}{147.96}{202.63} & \mcb{159.57}{138.19}{186.24} \\
 & 5 & \mc{226.43}{191.16}{281.76} & \mc{253.39}{210.68}{317.34} & \mc{240.21}{204.31}{301.90} & \mc{236.98}{195.76}{294.67} & \mc{260.38}{219.23}{318.52} & \mc{231.49}{198.41}{283.53} & \mc{233.79}{197.92}{282.74} & \mc{237.81}{195.94}{291.63} & \mc{231.91}{198.49}{292.03} & \mc{232.95}{195.97}{292.40} & \mcb{219.44}{186.57}{273.76} \\
 & 10 & \mc{332.65}{286.10}{407.19} & \mc{361.27}{311.87}{437.19} & \mc{349.51}{297.29}{429.72} & \mc{336.78}{283.25}{408.93} & \mc{381.41}{327.31}{465.73} & \mc{331.39}{284.09}{399.24} & \mc{335.53}{284.91}{413.85} & \mc{345.33}{291.48}{425.47} & \mc{336.00}{281.30}{415.28} & \mc{336.87}{282.39}{411.20} & \mcb{317.71}{275.01}{385.34} \\
\midrule
\multirow{4}{*}{NASDAQ} & 1 & \mc{115.36}{102.93}{131.98} & \mc{129.84}{115.96}{147.57} & \mc{130.60}{116.08}{151.35} & \mc{124.75}{111.35}{142.53} & \mc{128.37}{115.85}{148.95} & \mc{121.16}{107.00}{139.39} & \mc{120.88}{107.36}{138.36} & \mc{121.81}{107.97}{139.66} & \mc{120.62}{106.44}{137.12} & \mc{121.62}{108.60}{141.35} & \mcb{114.36}{100.92}{132.10} \\
 & 3 & \mc{205.51}{181.12}{246.32} & \mc{230.93}{203.98}{275.27} & \mc{228.18}{202.42}{266.70} & \mc{219.34}{193.01}{255.50} & \mc{235.53}{209.52}{277.77} & \mc{213.37}{185.99}{255.07} & \mc{213.98}{187.79}{254.31} & \mc{218.37}{192.46}{261.36} & \mc{213.45}{188.34}{256.29} & \mc{218.37}{193.22}{258.43} & \mcb{201.91}{177.52}{234.86} \\
 & 5 & \mc{267.25}{223.90}{333.56} & \mc{290.64}{240.51}{367.02} & \mc{286.05}{241.27}{346.18} & \mc{275.03}{230.01}{332.81} & \mc{305.66}{262.30}{375.99} & \mc{272.02}{231.34}{330.03} & \mc{273.92}{234.94}{344.91} & \mc{282.04}{239.99}{341.09} & \mc{275.77}{236.35}{338.27} & \mc{276.08}{229.07}{347.08} & \mcb{259.48}{217.32}{326.91} \\
 & 10 & \mc{379.31}{317.13}{465.65} & \mc{420.86}{358.54}{516.27} & \mc{395.37}{338.02}{483.62} & \mc{387.76}{330.69}{472.02} & \mc{419.78}{360.78}{505.88} & \mc{384.31}{332.97}{467.33} & \mc{389.48}{324.61}{468.09} & \mc{394.65}{339.73}{469.14} & \mc{382.21}{328.63}{465.84} & \mc{384.58}{327.69}{462.52} & \mcb{364.00}{304.05}{434.01} \\
\midrule
\multirow{4}{*}{NYSE} & 1 & \mc{93.74}{84.61}{107.05} & \mc{106.71}{95.91}{123.68} & \mc{104.38}{93.42}{121.25} & \mc{100.02}{90.07}{116.54} & \mc{104.42}{93.13}{120.10} & \mc{99.32}{88.04}{115.93} & \mc{98.37}{88.53}{112.75} & \mc{100.14}{88.08}{114.75} & \mc{99.18}{87.49}{114.78} & \mc{98.78}{88.68}{113.52} & \mcb{93.01}{81.63}{107.80} \\
 & 3 & \mc{184.52}{164.20}{219.28} & \mc{207.65}{181.03}{241.75} & \mc{202.91}{175.66}{238.73} & \mc{195.60}{169.95}{231.86} & \mc{210.62}{185.10}{247.29} & \mc{190.23}{166.07}{224.13} & \mc{192.50}{168.11}{229.10} & \mc{193.63}{168.16}{232.27} & \mc{192.97}{171.76}{226.30} & \mc{195.13}{173.18}{226.47} & \mcb{181.66}{158.89}{213.54} \\
 & 5 & \mc{250.63}{210.17}{306.01} & \mc{285.63}{236.74}{360.88} & \mc{267.08}{222.42}{323.93} & \mc{259.41}{221.91}{328.03} & \mc{279.65}{231.46}{351.06} & \mc{255.57}{212.65}{322.91} & \mc{259.75}{221.83}{324.80} & \mc{263.52}{223.29}{331.21} & \mc{258.44}{215.58}{320.38} & \mc{257.78}{212.13}{312.11} & \mcb{243.91}{204.31}{298.94} \\
 & 10 & \mc{382.76}{322.20}{469.92} & \mc{418.50}{361.25}{497.86} & \mc{400.09}{336.91}{488.32} & \mc{387.87}{325.33}{467.83} & \mc{440.51}{380.29}{545.21} & \mc{385.38}{330.40}{473.95} & \mc{386.12}{330.38}{464.36} & \mc{396.90}{341.08}{475.28} & \mc{382.60}{318.78}{467.85} & \mc{386.15}{323.52}{471.09} & \mcb{366.27}{306.97}{438.58} \\
\midrule
\multirow{4}{*}{Gold} & 1 & \mc{12.03}{10.59}{13.78} & \mc{13.90}{12.47}{16.20} & \mc{13.22}{11.91}{15.38} & \mc{12.98}{11.69}{14.86} & \mc{13.62}{12.03}{15.64} & \mc{12.58}{11.19}{14.65} & \mc{12.55}{11.25}{14.51} & \mc{12.76}{11.24}{14.79} & \mc{12.64}{11.33}{14.67} & \mc{12.78}{11.33}{14.53} & \mcb{11.95}{10.65}{13.78} \\
 & 3 & \mc{22.95}{20.20}{27.22} & \mc{26.43}{23.35}{31.65} & \mc{25.17}{22.13}{30.01} & \mc{24.39}{21.31}{28.60} & \mc{25.85}{22.57}{30.27} & \mc{23.84}{21.18}{27.87} & \mc{23.71}{20.73}{28.42} & \mc{24.33}{21.34}{29.17} & \mc{23.78}{21.07}{28.20} & \mc{24.22}{21.30}{28.15} & \mcb{22.61}{19.71}{26.83} \\
 & 5 & \mc{33.21}{28.08}{40.19} & \mc{37.39}{31.16}{45.78} & \mc{35.64}{29.72}{44.39} & \mc{34.86}{29.21}{43.07} & \mc{38.11}{32.41}{46.32} & \mc{33.85}{28.49}{42.31} & \mc{33.74}{28.24}{41.56} & \mc{34.67}{28.95}{42.59} & \mc{33.65}{28.01}{41.85} & \mc{34.05}{29.03}{41.28} & \mcb{32.20}{27.08}{40.60} \\
 & 10 & \mc{50.09}{41.89}{59.99} & \mc{53.81}{46.64}{66.56} & \mc{52.90}{44.93}{65.02} & \mc{50.73}{43.01}{61.39} & \mc{55.76}{47.97}{66.61} & \mc{50.21}{41.76}{61.02} & \mc{50.50}{43.07}{60.15} & \mc{51.46}{43.75}{61.18} & \mc{50.54}{42.07}{62.55} & \mc{50.98}{43.43}{63.05} & \mcb{48.04}{41.53}{58.61} \\
\bottomrule
\end{tabular}
\end{adjustbox}
\end{sidewaystable*}

\begin{figure*}[!t]
\centering
\includegraphics[width=\textwidth]{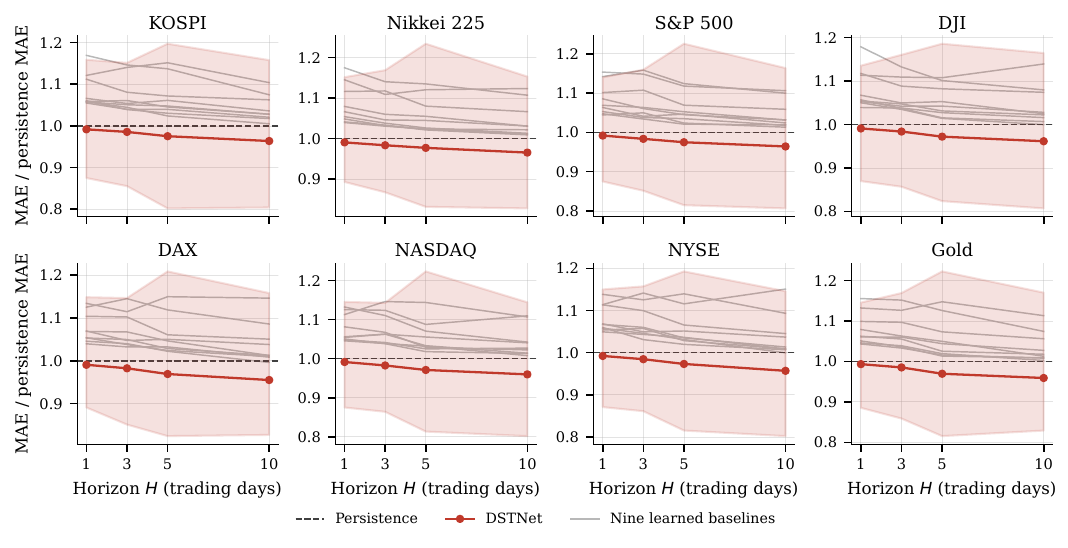}
\caption{Per-series test MAE against forecast horizon, normalised by the persistence MAE of the same series and horizon. The shaded band is DSTNet's 95\% moving-block bootstrap interval after the same normalisation. Every grey trace lies above unity at $H\in\{1,3,5\}$ on every series, and only the red trace lies below unity at all four horizons.}
\label{fig:mae-horizon-panels}
\end{figure*}

Table~\ref{tab:dm-finance} reports Diebold--Mariano tests of DSTNet against the nine learned baselines at $H=1$. Evidence is strongest against LSTM, BiLSTM, and XGBoost, CNN-BiLSTM is intermediate, and DLinear, N-BEATS, PatchTST, and iTransformer form a band in which only four of the 32 comparisons reach the unadjusted 10\% level. The pattern runs across models rather than across series, and it reproduces the ordering of Table~\ref{tab:pers-summary}: the three models with the largest statistics are the three furthest above persistence. The final row reports the same comparison against persistence. The statistics range from 0.16 to 0.24 across the eight series, so although DSTNet has a consistently lower $H{=}1$ MAE point estimate than persistence, the paired test does not reject equal predictive accuracy at this horizon, and the $H{=}1$ ordering in Table~\ref{tab:pers-summary} remains descriptive.

\begin{table*}[!t]
\caption{Diebold--Mariano tests of DSTNet against the nine learned baselines and persistence at $H=1$. Stars denote unadjusted two-sided values: *** $p<0.01$, ** $p<0.05$, and * $p<0.10$. Unstarred cells indicate insufficient evidence to reject equal expected predictive loss, not proof of equivalence.}
\label{tab:dm-finance}
\centering
\small
\setlength{\tabcolsep}{6pt}
\begin{tabular}{@{}lcccccccc@{}}
\toprule
\textbf{Model} & \textbf{KOSPI} & \textbf{Nikkei 225} & \textbf{S\&P 500} & \textbf{DJI} & \textbf{DAX} & \textbf{NASDAQ} & \textbf{NYSE} & \textbf{Gold} \\
\midrule
LSTM & 3.65*** & 4.43*** & 4.24*** & 2.43** & 3.57*** & 3.43*** & 4.18*** & 3.45*** \\
BiLSTM & 2.70*** & 2.73*** & 3.29*** & 2.18** & 2.44** & 2.87*** & 2.80*** & 1.97** \\
CNN-BiLSTM & 2.05** & 1.80* & 1.82* & 1.35 & 1.73* & 1.67* & 1.83* & 1.23 \\
XGBoost & 3.90*** & 3.09*** & 2.81*** & 3.20*** & 2.67*** & 4.83*** & 3.38*** & 2.26** \\
DLinear & 1.68* & 1.19 & 1.61 & 1.11 & 1.35 & 0.96 & 0.99 & 1.21 \\
N-BEATS & 1.56 & 1.50 & 1.94* & 1.61 & 1.93* & 1.33 & 1.44 & 1.61 \\
PatchTST & 1.46 & 1.37 & 1.28 & 1.18 & 1.50 & 1.40 & 1.75* & 1.42 \\
iTransformer & 1.49 & 1.50 & 1.23 & 1.11 & 1.30 & 1.49 & 1.46 & 1.41 \\
Vanilla Transformer & 1.56 & 1.54 & 1.18 & 1.27 & 1.37 & 1.61 & 1.06 & 0.98 \\
\midrule
Persistence & 0.19 & 0.23 & 0.21 & 0.19 & 0.24 & 0.22 & 0.16 & 0.17 \\
\bottomrule
\end{tabular}
\end{table*}

Directional accuracy (Table~\ref{tab:baseline-da-h1}) needs the same restraint. DSTNet posts the highest value on every series, but two are below chance, 49.36\% on KOSPI and 48.76\% on NYSE, and the largest is 59.09\% on DJI. No test accompanies that column, so its ordering is descriptive. Section~\ref{ssec:regime} is where the directional signal is examined for conditional information.

\subsubsection{Qualitative forecast trajectories}
Fig.~\ref{fig:forecast-traces} plots reconstructed price-level forecasts against the realised series for the S\&P~500 and for gold. At $H=1$ the forecast tracks the price closely. By $H=5$ and $H=10$ gaps open around sharp drawdowns and the recoveries after them. Each reconstruction is anchored to the observed price at the forecast origin, so visual proximity is not evidence of skill, and the figure is included only to show where the longer horizons break down.

\begin{figure*}[!t]
\centering
\includegraphics[width=0.94\textwidth]{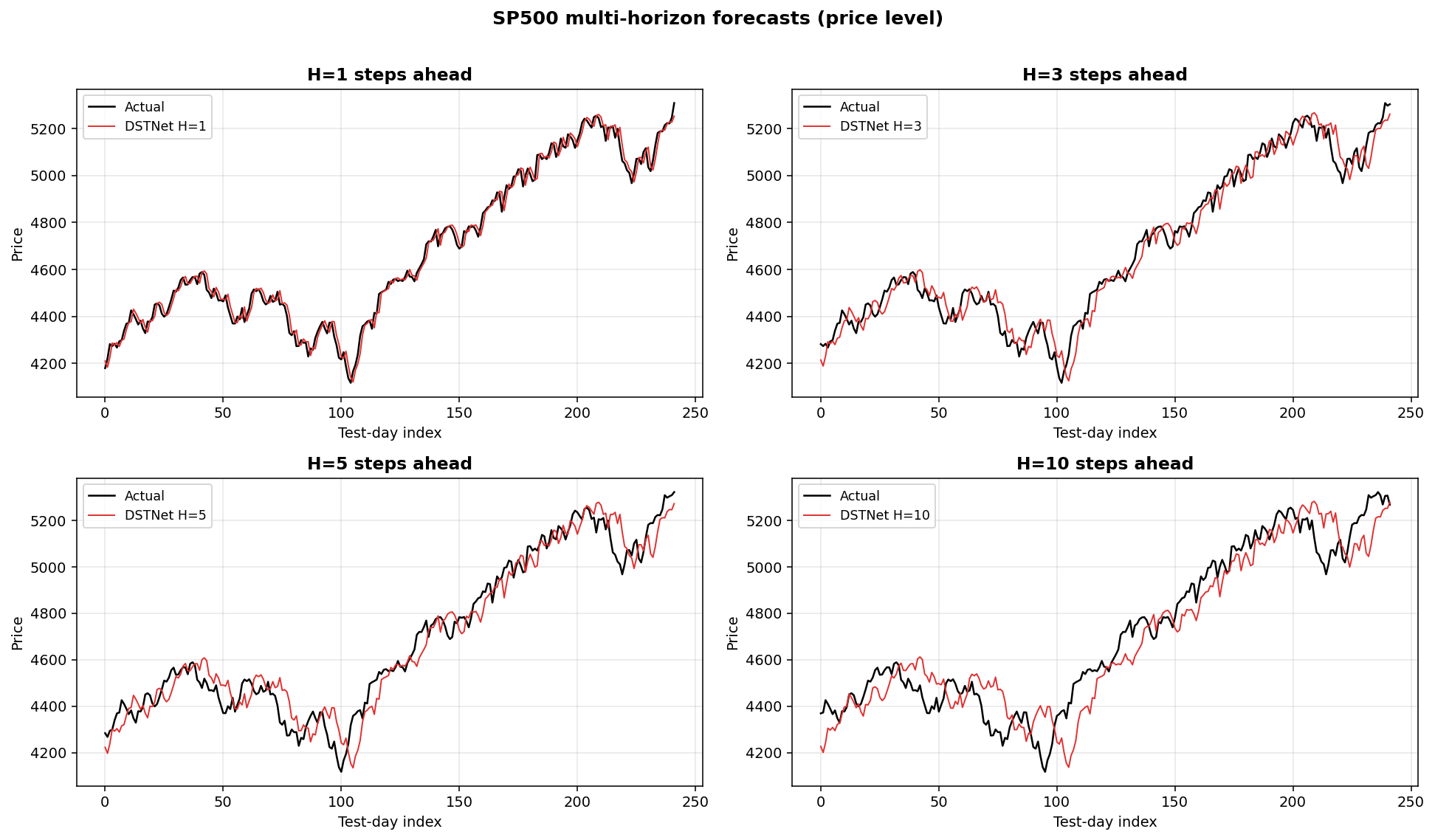}

\vspace{1.5mm}

\includegraphics[width=0.94\textwidth]{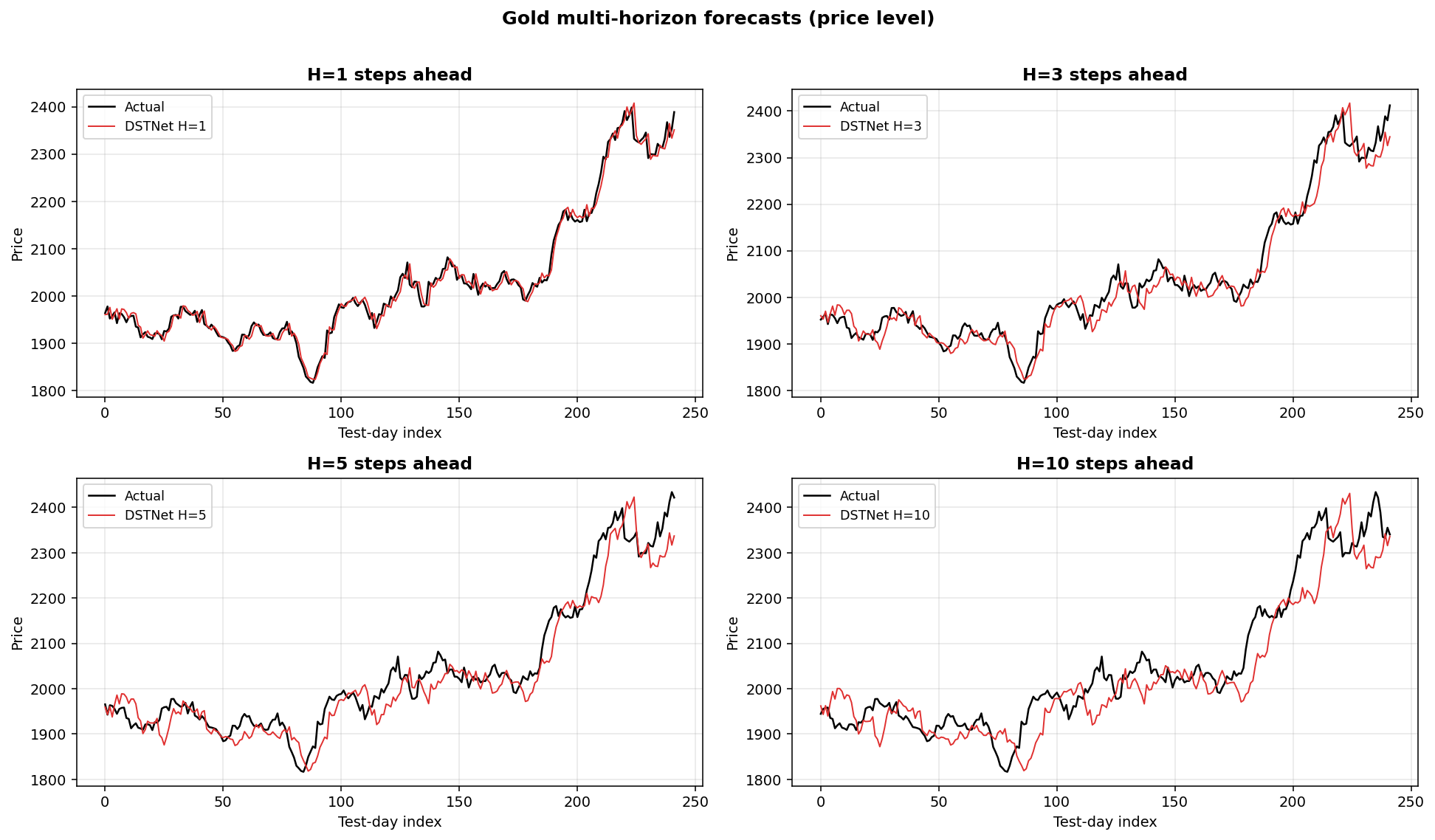}
\caption{Out-of-sample price-level forecasts from DSTNet at $H\in\{1,3,5,10\}$ trading days. Black is the realised target price and red is the reconstructed forecast. Top: S\&P~500. Bottom: gold.}
\label{fig:forecast-traces}
\end{figure*}

\subsection{Random-Seed Stability}\label{ssec:multiseed}
To test whether the primary seed-42 sweep reflects favourable initialisation, we retrain DSTNet with seeds 42, 101, and 202 while holding the data, chronological splits, preprocessing, architecture, hyperparameters, training protocol, and test period fixed. For metric $m$, series--horizon cell $c$, and seed $s$, let $e_{c,s}^{(m)}$ be the DSTNet error and $b_c^{(m)}$ the lowest fixed baseline point estimate in the corresponding primary table. We report the descriptive relative improvement $\Delta_{c,s}^{(m)}=100(b_c^{(m)}-e_{c,s}^{(m)})/b_c^{(m)}$, which is positive when DSTNet is better.

\begin{table*}[!t]
\caption{Random-seed stability of DSTNet on the primary hold-out across seeds 42, 101, and 202.}
\label{tab:multiseed-summary}
\centering
\small
\begin{tabular}{@{}lcccccc@{}}
\toprule
\textbf{Metric} & \textbf{Mean $\Delta$ (\%)} & \textbf{Median CV (\%)} & \textbf{Max CV (\%)} & \textbf{Seed-mean wins} & \textbf{All-seed wins} & \textbf{Seed-run wins} \\
\midrule
MAE  & +2.48 & 0.98 & 10.60 & 26/32 & 23/32 & 87/96 \\
MAPE & +2.43 & 0.81 & 10.05 & 26/32 & 24/32 & 88/96 \\
RMSE & +2.82 & 0.59 &  9.69 & 25/32 & 23/32 & 84/96 \\
\bottomrule
\end{tabular}
\end{table*}

Table~\ref{tab:multiseed-summary} qualifies the seed-42 headline rather than reproducing it, with cellwise means and standard deviations in Appendix~\ref{app:multiseed}. Typical initialisation variance is small, with median cellwise CV below 1\% for all three metrics, but the tail is not: maximum CVs run from 9.69\% to 10.60\%, seven or eight cells per metric exceed 5\%, and the variability concentrates in KOSPI and in longer-horizon NASDAQ and NYSE cells. The seed-mean win counts fall to 26/32 for MAE, 26/32 for MAPE, and 25/32 for RMSE, the stricter all-seed counts to 23, 24, and 23, and across the 96 seed--cell comparisons DSTNet is below the fixed reference in 87, 88, and 84. The 32/32 seed-42 result is therefore not seed-invariant.

\subsection{Non-Equity Robustness Probe on Gold}\label{ssec:results-commodity}
A separate DSTNet is trained on gold's own history with the equity-selected configuration left untouched (Section~\ref{ssec:setup}). The probe tests fixed-design robustness on one non-equity series. It is not zero-shot transfer and not cross-asset generalisation. Reading the gold rows of Tables~\ref{tab:results-finance-mae}, \ref{tab:results-finance-rmse}, and~\ref{tab:results-finance-mape}, DSTNet posts an $H=1$ MAPE of 0.582\%, which places gold fifth of the eight series. It sits just above the four strongest indices (DJI 0.481\%, DAX 0.541\%, NYSE 0.568\%, and S\&P~500 0.572\%) and below the remaining three (KOSPI 0.771\%, NASDAQ 0.788\%, and Nikkei~225 0.873\%). Gold also clears persistence at all four horizons, by 0.67\% at $H{=}1$ and 4.09\% at $H{=}10$. The error growth across horizons matches the equity pattern. The gold panel of Fig.~\ref{fig:forecast-traces} shows the same qualitative behaviour as the equity panel, with visible lag around sharp moves at the longer horizons.

\subsection{Regime-Aware and Bias-Aware Diagnostics}\label{ssec:regime}
A model can post a low panel-level MAPE while exhibiting strongly asymmetric directional behaviour. We therefore split the $H=1$ test set two ways, first into low- and high-volatility regimes by rolling realised volatility, and second into up-return and down-return days by the realised next-day return. Table~\ref{tab:regime-h1} reports the result, with MAPE split by volatility regime. Alongside the raw directional accuracy of Table~\ref{tab:baseline-da-h1}, we report DAcal, a validation-bias-adjusted directional diagnostic in percent that subtracts the validation-set mean predicted return from each test prediction before taking the sign. The adjustment never touches the forecasting metrics. Its only job is to reveal whether the raw model leans on a directional prior. With up days as positives, Up DAcal is sensitivity and Down DAcal is specificity, so the final column, Youden's $J = \text{Up DAcal} + \text{Down DAcal} - 100$ in percentage points, equals $2\,\text{BA} - 1$ with BA the balanced accuracy. Any classifier whose predicted sign is constant over the test set has $J=0$, whatever the class balance.

\begin{table*}[!t]
\caption{$H=1$ regime diagnostics.}
\label{tab:regime-h1}
\centering
\small
\setlength{\tabcolsep}{7pt}
\begin{tabular}{@{}lrrrrrrr@{}}
\toprule
\textbf{Series} & \makecell{\textbf{MAPE}\\low-vol} & \makecell{\textbf{MAPE}\\high-vol} &
\makecell{\textbf{DAcal}\\low-vol} & \makecell{\textbf{DAcal}\\high-vol} &
\makecell{\textbf{DAcal}\\up} & \makecell{\textbf{DAcal}\\down} & \textbf{$J$ (pp)} \\
\midrule
KOSPI      & 0.616 & 0.925 & 48.72 & 56.78 & 16.38 & 88.24 & $+4.62$ \\
S\&P 500   & 0.485 & 0.660 & 44.63 & 44.63 & 2.22  & 98.13 & $+0.35$ \\
Nikkei~225 & 0.803 & 0.943 & 52.54 & 50.00 & 51.61 & 50.89 & $+2.50$ \\
DJI        & 0.397 & 0.565 & 38.02 & 46.28 & 9.79  & 88.89 & $-1.32$ \\
DAX        & 0.458 & 0.624 & 52.03 & 47.97 & 21.80 & 83.19 & $+4.99$ \\
NASDAQ     & 0.624 & 0.951 & 38.84 & 51.24 & 14.49 & 85.58 & $+0.07$ \\
NYSE       & 0.497 & 0.638 & 43.80 & 47.11 & 33.58 & 60.95 & $-5.47$ \\
Gold       & 0.484 & 0.680 & 54.55 & 52.89 & 47.69 & 60.71 & $+8.40$ \\
\bottomrule
\end{tabular}
\end{table*}

The volatility split behaves as expected. High-volatility regimes carry larger MAPE on every series, which is not a failure, because volatility inflates the absolute size of the target move itself. The directional columns are where the diagnostic earns its place. After the bias adjustment the equity indices show a pronounced asymmetry: on the S\&P~500 the calibrated diagnostic is 98.13\% on down-return days and 2.22\% on up-return days, and a softer version of the same split repeats on DJI, DAX, NASDAQ, and KOSPI.

That asymmetry is easy to over-read. A constant-sign classifier has $J = 0$ however lopsided the two columns look, so the S\&P~500 pair does not describe a signal that separates down days from up days. It describes a bias-adjusted forecast whose sign is negative on almost every test day, which is necessarily right on 98\% of down days and wrong on 98\% of up days. Read through $J$, the discrimination remaining after the bias adjustment is near zero across the equity panel: $+0.35$ percentage points on the S\&P~500, $+0.07$ on NASDAQ, $+2.50$ on Nikkei~225, $+4.62$ on KOSPI, $+4.99$ on DAX, and negative on DJI ($-1.32$) and NYSE ($-5.47$). Gold is the largest at $+8.40$, and with NYSE it is one of only two series on which the allocation rule of Section~\ref{ssec:trading} changes exposure after the opening trade. This diagnostic anticipates the negative equity allocation result rather than standing apart from it, and it is why the raw directional ordering of Table~\ref{tab:baseline-da-h1} is not read as tradable skill. Each $J$ is a single-sample point estimate with no interval, so it bounds the claim rather than sizing the effect.

\subsection{Ablation Studies}\label{ssec:ablations}
We run four ablations to isolate the contribution of each architectural component. Results are summarised in Table~\ref{tab:ablations}.

\begin{enumerate}
\renewcommand{\labelenumi}{(\alph{enumi})}
\item \textbf{Burn-in exclusion}: training without burn-in trimming versus with burn-in trimming. Burn-in trimming reduces the available training set by 192 windows per index. We report whether this changes test-set performance materially.
\item \textbf{Joint vs factorized STST}: the joint variant flattens the DST tensor to $wS = 160$ tokens of dimension $M = 7$, projects to $d = 64$, and applies a single multi-head attention block over all 160 tokens, followed by the same attention pooling and cross-stream MLP. The joint variant's projected width $d_J = 64$ matches Stream~B while Stream~A uses $d_A = 128$, so the comparison is width-comparable rather than parameter-matched, and the joint variant trains at batch size 16 to keep its attention memory footprint feasible. We read this comparison as a sensitivity analysis of the factorization. At the configured widths and depths, Theorem~\ref{thm:complexity} puts the aggregate quadratic-core ratio between the two variants at roughly $15\times$, with $55.2\times$ reserved for the equal-width per-layer idealisation.
\item \textbf{No-DST}: DST tensor replaced by the static-snapshot spectral-magnitude slice. Tests whether retaining the trajectory history provides empirical forecasting value beyond the final spectral slice.
\item \textbf{No-VAF}: VAF replaced by fixed equal-weight mean fusion ($\alpha_{\text{temp}} = \alpha_{\text{spec}} = 0.5$). This ablation measures the joint value of learned weighting and sample-varying reallocation. It does not separate the two, and Fig.~\ref{fig:vaf-dynamics} shows that their balance differs across series.
\end{enumerate}

\begin{table*}[!t]
\caption{Ablation study on the primary equity benchmark (S\&P 500, Panel A) and the non-equity probe (Gold, Panel B). Test MAE with 95\% moving-block bootstrap CIs in brackets.}
\label{tab:ablations}
\centering
\footnotesize
\setlength{\tabcolsep}{5pt}
\begin{tabular}{@{}lcccc@{}}
\toprule
& \multicolumn{2}{c}{\textbf{Panel A: S\&P 500}} & \multicolumn{2}{c}{\textbf{Panel B: Gold}} \\
\cmidrule(lr){2-3}\cmidrule(lr){4-5}
\textbf{Variant} & \textbf{H=1} & \textbf{H=10} & \textbf{H=1} & \textbf{H=10} \\
\midrule
DSTNet full                               & 26.66 [23.52, 30.64] & 91.94 [76.91, 110.95]  & 11.95 [10.65, 13.78] & 48.04 [41.53, 58.61] \\
\hspace{1em}(a) Without burn-in exclusion & 26.89 [23.69, 30.58] & 93.13 [79.84, 113.18]  & 12.03 [10.64, 13.83] & 48.78 [41.13, 60.00] \\
\hspace{1em}(b) Joint STST                & 26.66 [23.68, 30.46] & 92.62 [77.99, 110.46]  & 11.90 [10.59, 13.77] & 47.79 [41.41, 57.44] \\
\hspace{1em}(c) No-DST (static snapshot)  & 27.98 [24.61, 32.82] & 99.81 [84.85, 123.05]  & 12.42 [11.16, 14.32] & 51.15 [42.83, 62.86] \\
\hspace{1em}(d) No-VAF (mean fusion)      & 27.61 [24.61, 32.24] & 95.06 [82.07, 117.78]  & 12.27 [10.85, 14.09] & 49.26 [42.10, 60.03] \\
\bottomrule
\end{tabular}
\end{table*}

Removing the trajectory produces the largest degradation. No-DST raises panel-mean MAE by 5.79\% (Table~\ref{tab:ablations-appendix}) and is worse than the full model on all eight series at both reported horizons, although its bootstrap intervals overlap those of the full model. Joint STST changes panel-mean MAE by only 0.05\% in point estimate, which indicates little empirical difference at the evaluated sizes, but overlapping intervals do not establish equivalence. Dropping burn-in exclusion and replacing learned fusion by a mean produce intermediate degradations of 0.92\% and 2.94\%, respectively.

Fig.~\ref{fig:vaf-dynamics} shows the fusion weights over the test window on four series chosen to span the observed range of behaviour. The gate is not uniformly dynamic across assets. On KOSPI it remains near equal weighting for roughly the first hundred test observations and then switches sharply toward the temporal branch in episodes that approach the upper bound. Gold oscillates around equal weighting and crosses repeatedly. On NASDAQ the weights remain close to 0.5 throughout most of the window, while on the S\&P~500 they hold a fairly stable tilt of roughly 0.6 to 0.4 toward the temporal branch. Where the gate is close to static, the No-VAF degradation in Table~\ref{tab:ablations} reflects the value of a learned asymmetric weighting rather than clear evidence of strong per-sample reallocation. We therefore do not claim that VAF performs regime switching on every series.

\begin{figure*}[!t]
\centering
\includegraphics[width=\textwidth]{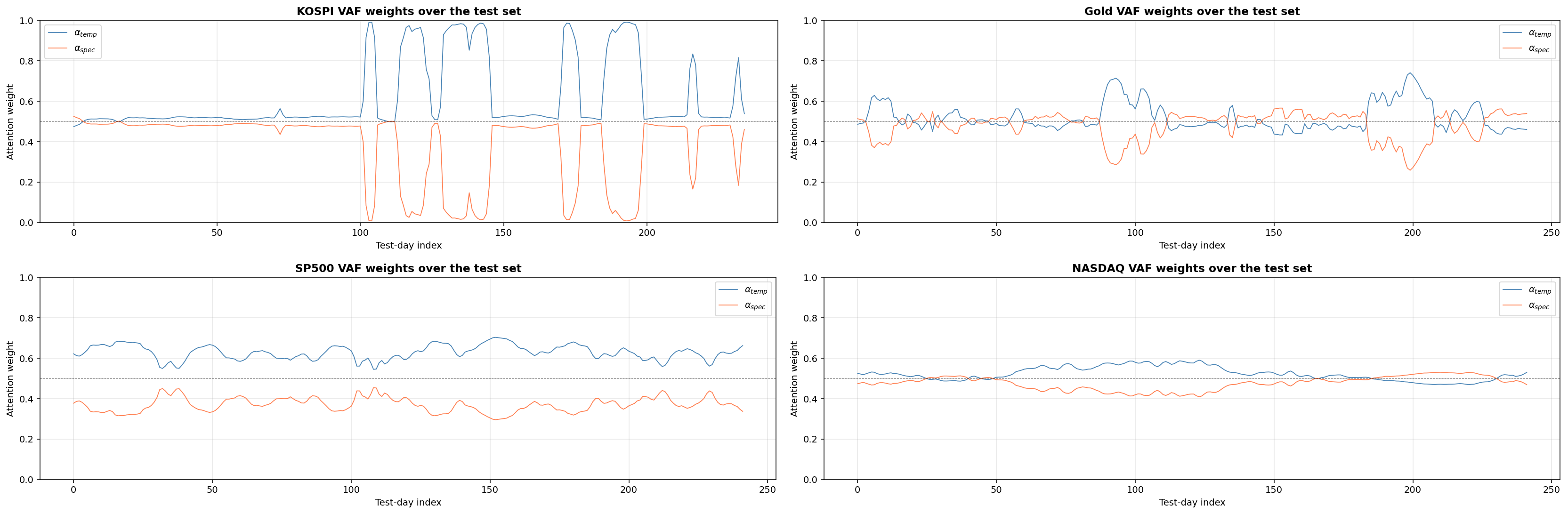}
\caption{Vertical Attention Fusion weights $\alpha_{\text{temp}}$ and
$\alpha_{\text{spec}}$ across the 2023-06-01 to 2024-05-31 hold-out for KOSPI, Gold, S\&P~500, and NASDAQ. The dashed line marks equal weighting. These four series span the range of gate behaviour observed on the evaluation panel, from episodic switching on KOSPI to near-static weighting on NASDAQ.}
\label{fig:vaf-dynamics}
\end{figure*}

\subsubsection{Centred HAD gate variation}
For each horizon we stack the learned gate vectors over the test split, subtract that horizon's sample-mean gate, take the SVD of the centred matrix, and record both the number of right-singular directions holding 95\% of the squared singular-value mass and the principal angles between adjacent horizons. Table~\ref{tab:had-subspace} in Appendix~\ref{app:full-metrics} reports the result: one to four directions per horizon, and mean adjacent angles from $76.89^\circ$ to $84.58^\circ$. This is descriptive only. Compact rank can equally reflect correlated fused inputs, $\tanh$ saturation, or limited test-sample variation, and unrelated low-dimensional directions in $\mathbb{R}^{64}$ are themselves usually near-orthogonal, so neither statistic establishes purposeful horizon specialization. Appendix~\ref{app:ablations-appendix} reports the panel-mean degradation and the per-series ablation results for the remaining six indices.

\subsection{Downstream Allocation Diagnostic: A Negative Result on Equities}
\label{ssec:trading}
Forecasting accuracy does not automatically imply economic value. We therefore use allocation as a falsification-oriented diagnostic, not as a claimed source of alpha and not as a further contribution. It asks whether the out-of-sample forecasts change exposure often enough, and well enough, to improve on static and volatility-scaled benchmarks after costs. Because the bias-adjusted equity signal of Section~\ref{ssec:regime} is almost entirely a negative-sign prior, the equity rule responds only to sufficiently negative forecasts and otherwise holds a fixed base exposure. Every threshold and exposure level is selected on validation data and then frozen, and the test set is not used for calibration. The diagnostic uses an idealised close-to-close execution convention. The signal includes the completed close at day $t$, and the resulting position is treated as effective from that same close. This abstraction omits the latency required to observe the completed bar, construct the input, compute the forecast, and execute the position. The allocation results are therefore interpreted as a diagnostic of forecast-conditioned behaviour rather than implementable trading performance. Algorithm~\ref{alg:trading} states the procedure.

For gold we additionally evaluate a long-short allocation, because gold carries the panel's largest $J$ and a two-sided rule is only defensible where the signal is two-sided. Every strategy pays a flat 10 bps transaction cost on turnover, and every strategy is compared against three fair benchmarks, buy-and-hold, a constant 50\% buy-and-hold, and a volatility-targeted buy-and-hold. Table~\ref{tab:trading-10bps} reports the results on the full eight-series panel. Sharpe is annualised by $\sqrt{252}$. MaxDD, cumulative return, and average exposure are percentages. TotTO is cumulative turnover over the test window (sum of $|\pi_t - \pi_{t-1}|$), and AvgTO is the same quantity per trading day expressed in percent, so a row with TotTO $=1.00$ over 242 test observations reports AvgTO $=0.413$. Boldface marks a descriptive point-estimate improvement over all three benchmarks on the same series and carries no inference or trading-advantage claim.

\begin{table}[!t]
\caption{Downstream Allocation Diagnostic at 10 bps per Unit of Turnover. Strategies: A, Buy-and-Hold. B, 50\% Buy-and-Hold. C, Volatility-Targeted Buy-and-Hold. D, DSTNet Asymmetric Risk-Off. E, DSTNet Long-Only. F, DSTNet Long-Short.}
\label{tab:trading-10bps}
\centering
\footnotesize
\setlength{\tabcolsep}{2.6pt}
\renewcommand{\arraystretch}{0.98}
\begin{adjustbox}{max width=\columnwidth}
\begin{tabular}{@{}llrrrrrr@{}}
\toprule
\textbf{Series} & \textbf{Str.} & \textbf{Sharpe} & \textbf{MaxDD} & \textbf{CumRet} & \textbf{Exp.} & \textbf{TotTO} & \textbf{AvgTO} \\
\midrule
\multirow{4}{*}{KOSPI}
& A & 0.48 & 14.59 & 6.37 & 100.00 & 1.00 & 0.426 \\
& B & 0.48 & 7.51 & 3.46 & 50.00 & 0.50 & 0.213 \\
& C & 0.28 & 12.17 & 2.59 & 79.33 & 6.30 & 2.681 \\
& D & 0.48 & 9.68 & 4.39 & 65.00 & 0.65 & 0.277 \\
\midrule
\multirow{4}{*}{S\&P 500}
& A & 2.15 & 10.28 & 26.09 & 100.00 & 1.00 & 0.413 \\
& B & 2.15 & 5.23 & 12.47 & 50.00 & 0.50 & 0.207 \\
& C & 1.95 & 10.03 & 22.21 & 95.80 & 3.40 & 1.406 \\
& D & 2.15 & 6.76 & 16.44 & 65.00 & 0.65 & 0.269 \\
\midrule
\multirow{4}{*}{Nikkei 225}
& A & 1.40 & 9.56 & 24.11 & 100.00 & 1.00 & 0.424 \\
& B & 1.40 & 4.82 & 11.81 & 50.00 & 0.50 & 0.212 \\
& C & 1.28 & 7.67 & 15.24 & 69.42 & 6.19 & 2.624 \\
& D & 1.40 & 6.25 & 15.46 & 65.00 & 0.65 & 0.275 \\
\midrule
\multirow{4}{*}{DJI}
& A & 2.03 & 9.02 & 20.66 & 100.00 & 1.00 & 0.413 \\
& B & 2.03 & 4.59 & 9.97 & 50.00 & 0.50 & 0.207 \\
& C & 1.95 & 9.02 & 19.25 & 98.48 & 1.60 & 0.659 \\
& D & 2.03 & 5.93 & 13.10 & 65.00 & 0.65 & 0.269 \\
\midrule
\multirow{4}{*}{DAX}
& A & 1.56 & 10.82 & 17.67 & 100.00 & 1.00 & 0.407 \\
& B & 1.56 & 5.52 & 8.64 & 50.00 & 0.50 & 0.203 \\
& C & 1.54 & 9.82 & 16.49 & 95.33 & 3.50 & 1.422 \\
& D & 1.56 & 7.14 & 11.31 & 65.00 & 0.65 & 0.264 \\
\midrule
\multirow{4}{*}{NASDAQ}
& A & 1.72 & 12.27 & 28.49 & 100.00 & 1.00 & 0.413 \\
& B & 1.72 & 6.24 & 13.70 & 50.00 & 0.50 & 0.207 \\
& C & 1.37 & 9.47 & 16.70 & 76.60 & 7.12 & 2.941 \\
& D & 1.72 & 8.08 & 18.02 & 65.00 & 0.65 & 0.269 \\
\midrule
\multirow{4}{*}{NYSE}
& A & 2.02 & 10.66 & 22.44 & 100.00 & 1.00 & 0.413 \\
& B & 2.02 & 5.44 & 10.81 & 50.00 & 0.50 & 0.207 \\
& C & 1.88 & 10.40 & 19.80 & 97.02 & 2.20 & 0.908 \\
& D & 1.80 & 5.65 & 11.31 & 56.77 & 12.21 & 5.045 \\
\midrule
\multirow{5}{*}{Gold}
& A & 1.74 & 8.16 & 21.88 & 100.00 & 1.00 & 0.413 \\
& B & 1.74 & 4.13 & 10.60 & 50.00 & 0.50 & 0.207 \\
& C & 1.71 & 8.14 & 20.00 & 93.36 & 4.10 & 1.696 \\
& E & \textbf{1.99} & \textbf{3.34} & 12.51 & 47.83 & 6.11 & 2.523 \\
& F & \textbf{1.80} & \textbf{0.58} & 2.11 & 7.65 & 4.76 & 1.965 \\
\bottomrule
\end{tabular}
\end{adjustbox}
\end{table}

The turnover columns show where the gate is active. On KOSPI, S\&P~500, Nikkei~225, DJI, DAX, and NASDAQ, DSTNet's total turnover sits at 0.65, a single opening trade and nothing after it, essentially matching the near-zero-adjustment character of the buy-and-hold and 50\% buy-and-hold rows above them and standing in clear contrast to the vol-target baseline's turnover of 1.60 to 7.12 on those same series. NYSE and both gold strategies are the opposite case, with turnover an order of magnitude higher (4.76 to 12.21), which is the table's own evidence that the risk-off or risk-on gate is active there and only there.

Gold is the only series on which the long-only diagnostic improves the reported point-estimate Sharpe over all three benchmarks. At 10 bps its Sharpe is 1.99, compared with 1.74 for buy-and-hold and 1.71 for the volatility-targeted variant, while its maximum drawdown is 3.34\%, compared with 8.16\% for buy-and-hold. The long-short variant reduces maximum drawdown further, to 0.58\%, but its mean exposure is only 7.65\% and its cumulative return falls to 2.11\%. These are descriptive outcomes from one series and one hold-out period, without paired inference. They motivate further testing but do not establish an allocation advantage.

The equity result is negative. DSTNet improves on buy-and-hold Sharpe on none of the seven indices. On six of them the gate never fires after the opening trade, so the strategy is a static 65\% position and its Sharpe equality with buy-and-hold is a scaling identity rather than risk management. NYSE is the only equity series with active changes, where exposure falls to 56.77\% and drawdown from 10.66\% to 5.65\%, but Sharpe falls from 2.02 to 1.80. We keep the diagnostic because the failed translation from price error to economic utility is the informative part, and because reporting it stops the forecasting results being read as tradable alpha.
Fig.~\ref{fig:allocation-curves} in Appendix~\ref{app:full-metrics} contrasts the two outcomes as growth curves.

\subsection{Limitations and Threats to Validity}\label{ssec:limitations}
Several limitations bound the conclusions of this study. First, one hold-out year supplies only about two dozen disjoint ten-day target blocks (Section~\ref{ssec:metrics}). The moving-block bootstrap preserves local dependence but cannot manufacture additional market regimes, so overlapping marginal intervals lead us to withhold a statistical-win claim at $H{=}10$ without treating the overlap as evidence of equivalence.

Second, three seeds do not characterise the full training distribution, and only DSTNet is rerun across them. Baseline estimates stay fixed at their primary runs, so Table~\ref{tab:multiseed-summary} measures DSTNet initialisation sensitivity against fixed references rather than a seed-matched distribution of model differences.

Third, paired inference is confined to the one-day horizon, where the persistence comparison does not reject equal predictive accuracy. The $H\in\{3,5,10\}$ improvements, which are the largest, therefore remain descriptive. The $H{=}1$ tests use a lag-zero variance estimator and unadjusted $p$-values (Section~\ref{ssec:metrics}), so residual dependence from overlapping input windows and multiplicity both remain, and a test for superior predictive ability~\cite{hansen2005spa} would be the appropriate instrument for the within-series comparison set. Directional accuracy is reported without a Pesaran--Timmermann test~\cite{pesaran1992simple} or an interval.

The persistence comparison is also metric-dependent. The claim that the learned baselines fail to beat the random walk holds under MAE and MAPE (Section~\ref{ssec:results}), not under squared loss at long horizons.

Fourth, the factorized attention path is a strictly smaller hypothesis class than joint attention. The joint-STST comparison of Table~\ref{tab:ablations} is a sensitivity analysis of that restriction, and it yields similar point estimates at our sizes, which is not an equivalence result. The efficiency argument matters most in the regime where $wS$ grows.

Fifth, the current ablations isolate DST history, VAF, burn-in exclusion, and the factorized-versus-joint attention choice, but they do not independently isolate the causal Symlet-4 denoiser or HAD. The denoiser is part of the common DSTNet preprocessing pipeline and is not ablated against an otherwise identical raw-input configuration, and no otherwise identical ungated multi-horizon decoder is evaluated. The reported gains therefore support the complete DSTNet pipeline and the components directly tested by the ablation study. No separate accuracy gain is attributed to denoising or horizon gating alone.

Sixth, VAF mixes the branches before HAD sees them, so horizon-specific branch reliance is not identifiable (Section~\ref{ssec:had}).

Finally, the panel is fixed rather than searched. The seven equity indices cover Asia, Europe, and North America, and no scale set, lookback, width, or decoder is tuned per asset. A single non-equity series cannot establish cross-asset generalisation. The allocation diagnostic of Section~\ref{ssec:trading} uses validation-selected thresholds evaluated on one hold-out period, simplified transaction costs, and an idealised close-to-close execution convention. The \texttt{GC=F} proxy further abstracts from contract rolls, margin requirements, roll yield, market impact, and capacity. Its negative equity result and isolated gold outcome do not establish a general trading advantage.

\section{Conclusion}
\label{sec:conclusion}
We presented DSTNet, a dual-branch architecture for causal multi-horizon financial forecasting. It carries the evolution of filter-bank magnitudes as a per-sample Dynamic Spectral Trajectory tensor, encodes that tensor with a factorized Scale-Temporal Spectral Transformer, and emits four horizons in one forward pass. The front end is causal end to end: Proposition~\ref{thm:causality} shows the Morlet-derived FIR bank reads no observation after the forecast origin, and Corollary~\ref{lem:burnin} fixes the 192-sample exclusion that removes the left-boundary transient.

The comparison that decides the paper is against the random walk. Under MAE and MAPE none of the nine learned baselines beats persistence at one, three, or five days on any series, while DSTNet is below it in every cell, by 0.7 to 0.9 percent at one day and 3.4 to 4.5 percent at ten days. That margin is a point estimate. The one-day paired test against persistence is inconclusive, the three-seed rerun keeps DSTNet ahead of the strongest fixed competitor in 25 or 26 of 32 cells rather than all 32, and the random-walk claim does not extend to RMSE at long horizons. Gold shows the same qualitative forecasting pattern.

The allocation diagnostic does not support an equity-timing claim, and the regime diagnostics say why. Once the validation bias is removed, the equity directional signal keeps almost no conditional discrimination, so the rule holds a static scaled position on six of seven indices and reduces drawdown at the cost of Sharpe on NYSE. The gold improvement rests on one hold-out without paired inference.

The immediate next step is extending the persistence comparison to $H\in\{3,5,10\}$, which the stored per-origin errors already permit without retraining. Beyond that: rolling origins across more assets and periods, more seeds with the stochastic baselines rerun under the same set, an ungated-decoder ablation to test whether horizon gating earns its place, and any further allocation work with realistic futures implementation and turnover-dependent costs.

\section*{Data and Code Availability}
All eight daily OHLCV series are publicly available from Yahoo Finance and were retrieved programmatically using the ticker symbols and date ranges listed in Table~\ref{tab:data}. The preprocessing, training, and evaluation code will be made publicly available upon acceptance.

\bibliographystyle{IEEEtran}
\bibliography{references}

@article{hochreiter1997,
  author    = {Sepp Hochreiter and J{\"u}rgen Schmidhuber},
  title     = {Long short-term memory},
  journal   = {Neural Computation},
  volume    = {9},
  number    = {8},
  pages     = {1735--1780},
  year      = {1997},
  publisher = {MIT Press}
}

@article{mallat1989,
  author    = {St{\'e}phane G. Mallat},
  title     = {A theory for multiresolution signal decomposition: the wavelet representation},
  journal   = {IEEE Transactions on Pattern Analysis and Machine Intelligence},
  volume    = {11},
  number    = {7},
  pages     = {674--693},
  year      = {1989}
}

@book{daubechies1992,
  author    = {Ingrid Daubechies},
  title     = {Ten Lectures on Wavelets},
  series    = {CBMS-NSF Regional Conference Series in Applied Mathematics},
  volume    = {61},
  publisher = {SIAM},
  year      = {1992}
}

@article{donoho1994ideal,
  author    = {David L. Donoho and Iain M. Johnstone},
  title     = {Ideal spatial adaptation by wavelet shrinkage},
  journal   = {Biometrika},
  volume    = {81},
  number    = {3},
  pages     = {425--455},
  year      = {1994}
}

@article{donoho1995,
  author    = {David L. Donoho},
  title     = {De-noising by soft-thresholding},
  journal   = {IEEE Transactions on Information Theory},
  volume    = {41},
  number    = {3},
  pages     = {613--627},
  year      = {1995}
}

@article{torrence1998,
  author    = {Christopher Torrence and Gilbert P. Compo},
  title     = {A practical guide to wavelet analysis},
  journal   = {Bulletin of the American Meteorological Society},
  volume    = {79},
  number    = {1},
  pages     = {61--78},
  year      = {1998}
}

@article{minu2010,
  author    = {K. K. Minu and Manesh C. Lineesh and C. Jessy John},
  title     = {Wavelet neural networks for nonlinear time series analysis},
  journal   = {Applied Mathematical Sciences},
  volume    = {4},
  number    = {50},
  pages     = {2485--2495},
  year      = {2010}
}

@article{sezer2020,
  author    = {Omer Berat Sezer and Mehmet Ugur Gudelek and Ahmet Murat Ozbayoglu},
  title     = {Financial time series forecasting with deep learning: A systematic literature
               review: 2005--2019},
  journal   = {Applied Soft Computing},
  volume    = {90},
  pages     = {106181},
  year      = {2020}
}

@book{cover2006,
  author    = {Thomas M. Cover and Joy A. Thomas},
  title     = {Elements of Information Theory},
  publisher = {Wiley-Interscience},
  edition   = {2nd},
  year      = {2006}
}

@article{kunsch1989,
  author    = {Hans R. K{\"u}nsch},
  title     = {The jackknife and the bootstrap for general stationary observations},
  journal   = {The Annals of Statistics},
  volume    = {17},
  number    = {3},
  pages     = {1217--1241},
  year      = {1989}
}

@inproceedings{chen2016xgboost,
  author    = {Tianqi Chen and Carlos Guestrin},
  title     = {{XGBoost}: A scalable tree boosting system},
  booktitle = {Proc. 22nd ACM SIGKDD International Conference on Knowledge Discovery
               and Data Mining (KDD)},
  pages     = {785--794},
  year      = {2016}
}

@inproceedings{vaswani2017,
  author    = {Ashish Vaswani and Noam Shazeer and Niki Parmar and Jakob Uszkoreit and
               Llion Jones and Aidan N. Gomez and {\L}ukasz Kaiser and Illia Polosukhin},
  title     = {Attention is all you need},
  booktitle = {Advances in Neural Information Processing Systems (NeurIPS)},
  volume    = {30},
  year      = {2017}
}

@inproceedings{informer2021,
  author    = {Haoyi Zhou and Shanghang Zhang and Jieqi Peng and Shuai Zhang and
               Jianxin Li and Hui Xiong and Wancai Zhang},
  title     = {{Informer}: Beyond efficient transformer for long sequence time-series
               forecasting},
  booktitle = {Proc. AAAI Conference on Artificial Intelligence},
  volume    = {35},
  number    = {12},
  pages     = {11106--11115},
  year      = {2021}
}

@inproceedings{autoformer2021,
  author    = {Haixu Wu and Jiehui Xu and Jianmin Wang and Mingsheng Long},
  title     = {{Autoformer}: Decomposition transformers with auto-correlation for
               long-term series forecasting},
  booktitle = {Advances in Neural Information Processing Systems (NeurIPS)},
  volume    = {34},
  pages     = {22419--22430},
  year      = {2021}
}

@inproceedings{zhou2022fedformer,
  title={Fedformer: Frequency enhanced decomposed transformer for long-term series forecasting},
  author={Zhou, Tian and Ma, Ziqing and Wen, Qingsong and Wang, Xue and Sun, Liang and Jin, Rong},
  booktitle={International conference on machine learning},
  pages={27268--27286},
  year={2022},
  organization={PMLR}
}

@inproceedings{patchtst2023,
  author    = {Yuqi Nie and Nam H. Nguyen and Phanwadee Sinthong and Jayant Kalagnanam},
  title     = {A time series is worth 64 words: Long-term forecasting with transformers},
  booktitle = {Proc. International Conference on Learning Representations (ICLR)},
  year      = {2023}
}

@inproceedings{itransformer2024,
  author    = {Yong Liu and Tengge Hu and Haoran Zhang and Haixu Wu and Shiyu Wang and
               Lintao Ma and Mingsheng Long},
  title     = {{iTransformer}: Inverted transformers are effective for time series
               forecasting},
  booktitle = {Proc. International Conference on Learning Representations (ICLR)},
  year      = {2024},
  note      = {Spotlight}
}

@inproceedings{timemixer2024,
  author    = {Shiyu Wang and Haixu Wu and Xiaoming Shi and Tengge Hu and Huakun Luo
               and Lintao Ma and James Y. Zhang and Jun Zhou},
  title     = {{TimeMixer}: Decomposable multiscale mixing for time series forecasting},
  booktitle = {Proc. International Conference on Learning Representations (ICLR)},
  year      = {2024}
}

@inproceedings{master2024,
  author    = {Tong Li and Zhaoyang Liu and Yanyan Shen and Xue Wang and Haokun Chen
               and Sen Huang},
  title     = {{MASTER}: Market-guided stock transformer for stock price forecasting},
  booktitle = {Proc. AAAI Conference on Artificial Intelligence},
  volume    = {38},
  number    = {1},
  pages     = {162--170},
  year      = {2024}
}

@inproceedings{nbeats2020,
  author    = {Boris N. Oreshkin and Dmitri Carpov and Nicolas Chapados and Yoshua Bengio},
  title     = {{N-BEATS}: Neural basis expansion analysis for interpretable time series
               forecasting},
  booktitle = {Proc. International Conference on Learning Representations (ICLR)},
  year      = {2020}
}

@article{lim2021tft,
  author    = {Bryan Lim and Sercan {\"O}. Ar{\i}k and Nicolas Loeff and Tomas Pfister},
  title     = {Temporal fusion transformers for interpretable multi-horizon time series
               forecasting},
  journal   = {International Journal of Forecasting},
  volume    = {37},
  number    = {4},
  pages     = {1748--1764},
  year      = {2021}
}

@inproceedings{zeng2023dlinear,
  author    = {Ailing Zeng and Muxi Chen and Lei Zhang and Qiang Xu},
  title     = {Are transformers effective for time series forecasting?},
  booktitle = {Proc. AAAI Conference on Artificial Intelligence},
  volume    = {37},
  number    = {9},
  pages     = {11121--11128},
  year      = {2023}
}

@article{hansen2005spa,
   author={Hansen, Peter R.}, title={A Test for Superior Predictive Ability},
   journal={Journal of Business \& Economic Statistics}, volume={23},
   number={4}, pages={365--380}, year={2005}}

@article{wang2025multifactor,
  title={Multifactor prediction model for stock market analysis based on deep learning techniques},
  author={Wang, Kangyi},
  journal={Scientific Reports},
  volume={15},
  number={1},
  pages={5121},
  year={2025},
  publisher={Nature Publishing Group UK London}
}

@article{beniwal2024forecasting,
  title={Forecasting multistep daily stock prices for long-term investment decisions: A study of deep learning models on global indices},
  author={Beniwal, Mohit and Singh, Archana and Kumar, Nand},
  journal={Engineering Applications of Artificial Intelligence},
  volume={129},
  pages={107617},
  year={2024},
  publisher={Elsevier}
}

@article{diebold2002comparing,
  title={Comparing predictive accuracy},
  author={Diebold, Francis X and Mariano, Robert S},
  journal={Journal of Business \& economic statistics},
  volume={20},
  number={1},
  pages={134--144},
  year={2002},
  publisher={Taylor \& Francis}
}

@article{tian2025bimt,
  title={BiMT-TCN: A cutting-edge hybrid model for enhanced stock price prediction},
  author={Tian, Guangyang and Huang, Tingwen and Peng, Chengyu and Yang, Yin and Wen, Shiping},
  journal={Knowledge-Based Systems},
  pages={114263},
  year={2025},
  publisher={Elsevier}
}

@article{ji2024galformer,
  title={Galformer: a transformer with generative decoding and a hybrid loss function for multi-step stock market index prediction},
  author={Ji, Yi and Luo, Yuxuan and Lu, Aixia and Xia, Duanyang and Yang, Lixia and Wee-Chung Liew, Alan},
  journal={Scientific Reports},
  volume={14},
  number={1},
  pages={23762},
  year={2024},
  publisher={Nature Publishing Group UK London}
}

@article{rezaei2021stock,
  title={Stock price prediction using deep learning and frequency decomposition},
  author={Rezaei, Hadi and Faaljou, Hamidreza and Mansourfar, Gholamreza},
  journal={Expert Systems with Applications},
  volume={169},
  pages={114332},
  year={2021},
  publisher={Elsevier}
}

@article{thakkar2022information,
  title={Information fusion-based genetic algorithm with long short-term memory for stock price and trend prediction},
  author={Thakkar, Ankit and Chaudhari, Kinjal},
  journal={Applied Soft Computing},
  volume={128},
  pages={109428},
  year={2022},
  publisher={Elsevier}
}

@article{bhandari2022predicting,
  title={Predicting stock market index using LSTM},
  author={Bhandari, Hum Nath and Rimal, Binod and Pokhrel, Nawa Raj and Rimal, Ramchandra and Dahal, Keshab R and Khatri, Rajendra KC},
  journal={Machine Learning with Applications},
  volume={9},
  pages={100320},
  year={2022},
  publisher={Elsevier}
}

@article{ge2025enhancing,
  title={Enhancing stock market Forecasting: A hybrid model for accurate prediction of S\&P 500 and CSI 300 future prices},
  author={Ge, Qing},
  journal={Expert Systems with Applications},
  volume={260},
  pages={125380},
  year={2025},
  publisher={Elsevier}
}

@article{yu2025intelligent,
  title={An intelligent framework based on optimized variational mode decomposition and temporal convolutional network: Applications to stock index multi-step forecasting},
  author={Yu, Yuanyuan and Dai, Dongsheng and Yang, Qu and Zeng, Qing and Lin, Yu and Chen, Yanxiang},
  journal={Expert Systems with Applications},
  volume={268},
  pages={126222},
  year={2025},
  publisher={Elsevier}
}

@article{gong2024predicting,
  title={Predicting the highest and lowest stock price indices: A combined BiLSTM-SAM-TCN deep learning model based on re-decomposition},
  author={Gong, Hao and Xing, Haiyang},
  journal={Applied Soft Computing},
  volume={167},
  pages={112393},
  year={2024},
  publisher={Elsevier}
}

@article{li2026importance,
  title={The importance of data noise reduction--wavelet transformation in stock price forecasting with Bidirectional Long Short-Term Memory network},
  author={Li, Honglei and Marudhasalam, Yukesh and Huang, Zhao and Yu, Jiuhong},
  journal={Engineering Applications of Artificial Intelligence},
  volume={165},
  pages={113390},
  year={2026},
  publisher={Elsevier}
}

@article{zubair2026enhanced,
  title={Enhanced financial market forecasting using a hybrid deep learning prediction model with encoder-decoder architecture},
  author={Zubair, Muhammad and Huang, Zhensheng},
  journal={Engineering Applications of Artificial Intelligence},
  volume={166},
  pages={113565},
  year={2026},
  publisher={Elsevier}
}

@misc{genccay2002introduction,
  title={An introduction to wavelets and other filtering methods in finance and economics},
  author={Gen{\c{c}}ay, Ramazan and Sel{\c{c}}uk, Faruk and Whitcher, Brandon},
  year={2002},
  publisher={IOP Publishing}
}

@article{ang2012regime,
  title={Regime changes and financial markets},
  author={Ang, Andrew and Timmermann, Allan},
  journal={Annu. Rev. Financ. Econ.},
  volume={4},
  number={1},
  pages={313--337},
  year={2012},
  publisher={Annual Reviews}
}

@article{pesaran1992simple,
  title={A simple nonparametric test of predictive performance},
  author={Pesaran, M Hashem and Timmermann, Allan},
  journal={Journal of Business \& Economic Statistics},
  volume={10},
  number={4},
  pages={461--465},
  year={1992},
  publisher={Taylor \& Francis}
}

@article{ziolkowski2026enhancing,
  title={Enhancing financial time series forecasting: a comparative study of discrete wavelet transform and LSTM models for selected global indices: K. Zi{\'o}{\l}kowski, P. Wieczy{\'n}ski},
  author={Zi{\'o}{\l}kowski, Krzysztof and Wieczy{\'n}ski, Pawe{\l}},
  journal={Quality \& Quantity},
  volume={60},
  number={1},
  pages={1995--2015},
  year={2026},
  publisher={Springer}
}

@article{yi2023frequency,
  title={Frequency-domain MLPs are more effective learners in time series forecasting},
  author={Yi, Kun and Zhang, Qi and Fan, Wei and Wang, Shoujin and Wang, Pengyang and He, Hui and An, Ning and Lian, Defu and Cao, Longbing and Niu, Zhendong},
  journal={Advances in neural information processing systems},
  volume={36},
  pages={76656--76679},
  year={2023}
}

@article{yi2024filternet,
  title={Filternet: Harnessing frequency filters for time series forecasting},
  author={Yi, Kun and Fei, Jingru and Zhang, Qi and He, Hui and Hao, Shufeng and Lian, Defu and Fan, Wei},
  journal={Advances in neural information processing systems},
  volume={37},
  pages={55115--55140},
  year={2024}
}

@inproceedings{challu2023nhits,
  title={Nhits: Neural hierarchical interpolation for time series forecasting},
  author={Challu, Cristian and Olivares, Kin G and Oreshkin, Boris N and Ramirez, Federico Garza and Canseco, Max Mergenthaler and Dubrawski, Artur},
  booktitle={Proceedings of the AAAI conference on artificial intelligence},
  volume={37},
  number={6},
  pages={6989--6997},
  year={2023}
}

@inproceedings{zhang2024multi,
  title={Multi-resolution time-series transformer for long-term forecasting},
  author={Zhang, Yitian and Ma, Liheng and Pal, Soumyasundar and Zhang, Yingxue and Coates, Mark},
  booktitle={International conference on artificial intelligence and statistics},
  pages={4222--4230},
  year={2024},
  organization={PMLR}
}

@inproceedings{chen2024pathformer,
  title={Pathformer: Multi-scale transformers with adaptive pathways for time series forecasting},
  author={Chen, Peng and Zhang, Yingying and Cheng, Yunyao and Shu, Yang and Wang, Yihang and Wen, Qingsong and Yang, Bin and Guo, Chenjuan},
  booktitle={International conference on learning representations},
  volume={2024},
  pages={10527--10545},
  year={2024}
}

@inproceedings{zhang2025waveletmixer,
  title={WaveletMixer: a multi-resolution wavelets based MLP-mixer for multivariate long-term time series forecasting},
  author={Zhang, Zichi and Pham, Tuan Dung and An, Yimeng and Doan, Ngoc Phu and Alsharari, Majed and Tran, Viet-Hung and Hoang, Anh-Tuan and Vandierendonck, Hans and Mai, Son T},
  booktitle={Proceedings of the AAAI Conference on Artificial Intelligence},
  volume={39},
  number={21},
  pages={22741--22749},
  year={2025}
}

@article{liu2022non,
  title={Non-stationary transformers: Exploring the stationarity in time series forecasting},
  author={Liu, Yong and Wu, Haixu and Wang, Jianmin and Long, Mingsheng},
  journal={Advances in neural information processing systems},
  volume={35},
  pages={9881--9893},
  year={2022}
}

@inproceedings{ma2024u,
  title={U-mixer: An unet-mixer architecture with stationarity correction for time series forecasting},
  author={Ma, Xiang and Li, Xuemei and Fang, Lexin and Zhao, Tianlong and Zhang, Caiming},
  booktitle={Proceedings of the AAAI conference on artificial intelligence},
  volume={38},
  number={13},
  pages={14255--14262},
  year={2024}
}

@article{fischer2018deep,
  title={Deep learning with long short-term memory networks for financial market predictions},
  author={Fischer, Thomas and Krauss, Christopher},
  journal={European journal of operational research},
  volume={270},
  number={2},
  pages={654--669},
  year={2018},
  publisher={Elsevier}
}

@article{cheng2022financial,
  title={Financial time series forecasting with multi-modality graph neural network},
  author={Cheng, Dawei and Yang, Fangzhou and Xiang, Sheng and Liu, Jin},
  journal={Pattern Recognition},
  volume={121},
  pages={108218},
  year={2022},
  publisher={Elsevier}
}

@article{gu2020empirical,
  title={Empirical asset pricing via machine learning},
  author={Gu, Shihao and Kelly, Bryan and Xiu, Dacheng},
  journal={The review of financial studies},
  volume={33},
  number={5},
  pages={2223--2273},
  year={2020},
  publisher={Oxford University Press}
}

@article{baur2010gold,
  title={Is gold a safe haven? International evidence},
  author={Baur, Dirk G and McDermott, Thomas K},
  journal={Journal of banking \& finance},
  volume={34},
  number={8},
  pages={1886--1898},
  year={2010},
  publisher={Elsevier}
}

@article{harvey1997testing,
  title={Testing the equality of prediction mean squared errors},
  author={Harvey, David and Leybourne, Stephen and Newbold, Paul},
  journal={International Journal of forecasting},
  volume={13},
  number={2},
  pages={281--291},
  year={1997},
  publisher={Elsevier}
}

@article{bohra2026vertifusex,
  title={VertiFuseX: Generalizable Financial Forecasting via Multi-Stream Temporal Fusion},
  author={Bohra, Aashish and Vijay, Vivek},
  journal={arXiv preprint arXiv:2609.12793},
  year={2026}
}

@article{bohra2026wavefuse,
  title={WaVeFuse: Regime-Adaptive Equity Index Forecasting via Channel-Wise Wavelet Denoising and Vertical Attention Fusion},
  author={Bohra, Aashish and Vijay, Vivek},
  journal={arXiv preprint arXiv:2609.14733},
  year={2026}
}

@article{corsi2009simple,
  title={A simple approximate long-memory model of realized volatility},
  author={Corsi, Fulvio},
  journal={Journal of financial econometrics},
  volume={7},
  number={2},
  pages={174--196},
  year={2009},
  publisher={Oxford University Press}
}

@article{rua2009international,
  title={International comovement of stock market returns: A wavelet analysis},
  author={Rua, Ant{\'o}nio and Nunes, Lu{\'\i}s C},
  journal={Journal of Empirical Finance},
  volume={16},
  number={4},
  pages={632--639},
  year={2009},
  publisher={Elsevier}
}
\appendices

\section{Algorithms}
\label{app:algorithms}
We collect the seven algorithms that, together with the equations of Section~\ref{sec:methodology}, allow a careful reader to reproduce DSTNet without the codebase. Notation matches the main text. Output shapes are stated after each major step.

\begin{algorithm*}[!t]
\caption{Causal preprocessing and residual denoising (per series)}
\label{alg:preproc}
\begin{algorithmic}[1]
\Require Raw OHLCV $\mathbf{X}\in\mathbb{R}^{T\times5}$, causal approximation filters $g_\ell$, level $L=2$, training index set $\mathcal{T}_{\mathrm{tr}}$
\Ensure Causally denoised standardised indicator matrix $\mathbf{Z} \in \mathbb{R}^{T \times 7}$
\Statex \textit{$\operatorname{CausalFIR}(f,x)$ convolves $x$ with $f$ using left zero-padding only, so output index $t$ reads inputs at indices not exceeding $t$.}
\For{each channel $c \in \{O, H, L, C, V\}$}
  \State $\mathbf{a}_0 \gets \mathbf{X}[:, c]$
  \For{$\ell \in \{1, \ldots, L\}$}
    \State $\mathbf{a}_\ell \gets \operatorname{CausalFIR}(g_\ell,\,\mathbf{a}_{\ell-1})$ \Comment{causal approximation}
    \State $\mathbf{d}_\ell \gets \mathbf{a}_{\ell-1}-\mathbf{a}_\ell$ \Comment{residual detail, exact additive identity}
  \EndFor
  \For{$\ell \in \{1, \ldots, L\}$}
    \State $\sigma_\ell \gets \operatorname{median}\!\big(|\mathbf{d}_\ell[\mathcal{T}_{\text{tr}}] - \operatorname{median}(\mathbf{d}_\ell[\mathcal{T}_{\text{tr}}])|\big)/0.6745$ \Comment{train split only}
    \State $\tau_\ell \gets \sigma_\ell \sqrt{2 \log |\mathcal{T}_{\text{tr}}|}$ \Comment{fit once, then frozen}
    \State $\mathbf{d}_\ell \gets \operatorname{sign}(\mathbf{d}_\ell) \cdot \max(|\mathbf{d}_\ell| - \tau_\ell, 0)$
  \EndFor
  \State $\widetilde{\mathbf{X}}[:, c] \gets \mathbf{a}_L + \sum_{\ell=1}^{L} \mathbf{d}_\ell$ \Comment{additive reconstruction, causal}
\EndFor
\State Compute seven trailing-window indicators on $\widetilde{\mathbf{X}}$: RSI-10, Stoch \%K, CCI-14, OBV, ATR-14, Williams \%R, ROC-12. Stack into $\mathbf{Z}_{\text{raw}} \in \mathbb{R}^{T \times 7}$.
\State Z-score each column of $\mathbf{Z}_{\text{raw}}$ using train-split mean and standard deviation. Output $\mathbf{Z} \in \mathbb{R}^{T \times 7}$.
\end{algorithmic}
\end{algorithm*}

\begin{algorithm*}[!t]
\caption{Causal channel-wise Morlet-derived complex FIR filter bank}
\label{alg:cwwt}
\begin{algorithmic}[1]
\Require $\mathbf{Z} \in \mathbb{R}^{T \times 7}$, scale set $\mathcal{S} = \{1,2,3,5,8,13,21,32\}$, $\omega_0 = 6$
\Ensure $\mathbf{W} \in \mathbb{R}^{T \times 8 \times 7}_{\geq 0}$ \Comment{burn-in $B = 192$ marked at sequence-build time}
\For{$j \in \{1, \ldots, 8\}$ with $s = \mathcal{S}_j$}
  \State $K_s \gets \lfloor 6 s \rfloor$
  \State $h_s[k] \gets \frac{\pi^{-1/4}}{\sqrt{s}} e^{i \omega_0 k / s} e^{-k^2/(2 s^2)}$ for $k \in \{0, \ldots, K_s\}$
  \For{$m \in \{1, \ldots, 7\}$}
    \State $\mathbf{c}_{s,m} \gets \texttt{lfilter}(h_s,[1],\mathbf{Z}[:,m])$ \Comment{$c_{s,m}[t]=\sum_{k=0}^{K_s}h_s[k]Z[t-k,m]$}
    \State $\mathbf{W}[:,j,m] \gets |\mathbf{c}_{s,m}|$ \Comment{coefficient magnitude, causal by Proposition~\ref{thm:causality}}
  \EndFor
\EndFor
\end{algorithmic}
\end{algorithm*}

\begin{algorithm*}[!t]
\caption{DST sequence builder (with burn-in exclusion)}
\label{alg:dst-builder}
\begin{algorithmic}[1]
\Require $\widetilde{\mathbf{X}} \in \mathbb{R}^{T \times 5}$, $\mathbf{W} \in \mathbb{R}^{T \times 8 \times 7}$, adjusted close $\mathbf{P} \in \mathbb{R}^T$, $w = 20$, $\mathcal{H} = \{1, 3, 5, 10\}$, $B = 192$
\Ensure $(\mathbf{X}_{\text{temp}}, \mathbf{D}, \mathbf{Y}) \in \mathbb{R}^{N \times w \times 5} \times \mathbb{R}^{N \times w \times 8 \times 7} \times \mathbb{R}^{N \times |\mathcal{H}|}$
\State $i_{\min} \gets w + B$ \Comment{Corollary~\ref{lem:burnin}}
\State $i_{\max} \gets T - \max(\mathcal{H}) - 1$ \Comment{zero-based indexing, ensures $t_i + h_{\max} \leq T-1$}
\State Initialise empty lists $\mathcal{X}_T, \mathcal{X}_D, \mathcal{Y}$
\For{$i \in \{i_{\min}, \ldots, i_{\max}\}$}
  \State Append $\widetilde{\mathbf{X}}[i - w : i, :]$ to $\mathcal{X}_T$ \Comment{shape $(w, 5)$}
  \State Append $\mathbf{W}[i - w : i, :, :]$ to $\mathcal{X}_D$ \Comment{shape $(w, 8, 7)$}
  \State $t_i \gets i - 1$ \Comment{forecast origin, the input window ends here}
  \State Append $\bigl(\log(P[t_i + h] / P[t_i]) : h \in \mathcal{H}\bigr)$ to $\mathcal{Y}$ \Comment{Eq.~\eqref{eq:target-return}}
\EndFor
\State Stack into tensors of the target shapes.
\end{algorithmic}
\end{algorithm*}

\begin{algorithm*}[!t]
\caption{Temporal branch forward pass}
\label{alg:temporal}
\begin{algorithmic}[1]
\Require $\mathbf{X}_{\text{temp}} \in \mathbb{R}^{N \times 20 \times 5}$, dropout rate $p = 0.3$, $d_{\text{fus}} = 64$
\Ensure $\mathbf{f}_1 \in \mathbb{R}^{N \times 64}$
\State $\mathbf{u} \gets \operatorname{Conv1D}_{64,3,\text{ReLU,same}}(\mathbf{X}_{\text{temp}})$ \Comment{$(N, 20, 64)$}
\State $\mathbf{u} \gets \operatorname{BiLSTM}_{32,\text{return\_seq}}(\mathbf{u})$ \Comment{$(N, 20, 64)$}
\State $\mathbf{u} \gets \operatorname{BiLSTM}_{64}(\mathbf{u})$ \Comment{$(N, 128)$}
\State $\mathbf{u} \gets \operatorname{Dropout}_{p}(\mathbf{u})$
\State $\mathbf{f}_1 \gets \operatorname{Dense}_{64,\text{ReLU}}(\mathbf{u})$ \Comment{$(N, 64)$}
\end{algorithmic}
\end{algorithm*}

\begin{algorithm*}[!t]
\caption{Factorized STST forward pass (Streams A and B + cross-stream MLP)}
\label{alg:stst}
\begin{algorithmic}[1]
\Require $\mathbf{D} \in \mathbb{R}^{N \times 20 \times 8 \times 7}$, dropout $p = 0.3$, $d_{\text{fus}} = 64$
\Ensure $\mathbf{f}_2 \in \mathbb{R}^{N \times 64}$
\Statex \textit{Stream A: temporal-spectral attention}
\State $\mathbf{T}_A \gets \operatorname{Reshape}(\mathbf{D}, (N, 20, 56))$
\For{$\ell \in \{1, 2\}$}
  \State $\mathbf{T}_A \gets \mathbf{T}_A + \operatorname{MHA}_{4,32,p_\text{a}=0.1}(\operatorname{LN}(\mathbf{T}_A))$
\EndFor
\State $\bm{\alpha}_A \gets \operatorname{Softmax}(\operatorname{Dense}_{1}(\mathbf{T}_A))$ over the 20-token axis
\State $\mathbf{h}_{\text{ts}} \gets \sum_{t=1}^{20} \alpha_{A,t}\, \mathbf{T}_A[:, t, :]$ \Comment{$(N, 56)$}
\Statex \textit{Stream B: scale-spectral attention}
\State $\mathbf{T}_B \gets \operatorname{Reshape}(\operatorname{Permute}_{(2,1,3)}(\mathbf{D}), (N, 8, 140))$
\For{$\ell \in \{1, 2\}$}
  \State $\mathbf{T}_B \gets \mathbf{T}_B + \operatorname{MHA}_{4,16,p_\text{a}=0.1}(\operatorname{LN}(\mathbf{T}_B))$
\EndFor
\State $\bm{\alpha}_B \gets \operatorname{Softmax}(\operatorname{Dense}_{1}(\mathbf{T}_B))$ over the 8-token axis
\State $\mathbf{h}_{\text{ss}} \gets \sum_{s=1}^{8} \alpha_{B,s}\, \mathbf{T}_B[:, s, :]$ \Comment{$(N, 140)$}
\Statex \textit{Cross-stream MLP fusion}
\State $\mathbf{c} \gets \operatorname{Dropout}_{p}(\operatorname{Dense}_{128,\text{ReLU}}(\operatorname{Dropout}_{p}(\operatorname{Dense}_{256,\text{ReLU}}([\mathbf{h}_{\text{ts}}, \mathbf{h}_{\text{ss}}]))))$
\State $\mathbf{f}_2 \gets \operatorname{Dense}_{64,\text{ReLU}}(\mathbf{c})$ \Comment{$(N, 64)$}
\end{algorithmic}
\end{algorithm*}

\begin{algorithm*}[!t]
\caption{Vertical Attention Fusion + Horizon-Adaptive Decoder}
\label{alg:vaf-had}
\begin{algorithmic}[1]
\Require $\mathbf{f}_1, \mathbf{f}_2 \in \mathbb{R}^{N \times 64}$, horizons $\mathcal{H} = \{1, 3, 5, 10\}$, dropout $p_h = 0.15$
\Ensure $\hat{\mathbf{Y}} \in \mathbb{R}^{N \times |\mathcal{H}|}$
\Statex \textit{VAF}
\State $\mathbf{S} \gets \operatorname{Stack}([\mathbf{f}_1, \mathbf{f}_2])$ \Comment{$(N, 2, 64)$}
\State $\mathbf{a} \gets \operatorname{Softmax}(\operatorname{Dense}_{1}(\mathbf{S}))$ over the 2-token axis
\State $\mathbf{h}_{\text{fus}} \gets a_1 \mathbf{f}_1 + a_2 \mathbf{f}_2$ \Comment{$(N, 64)$}
\Statex \textit{HAD}
\For{$h \in \mathcal{H}$}
  \State $\mathbf{g}_h \gets \tanh(\operatorname{Dense}_{64}(\mathbf{h}_{\text{fus}}))$ \Comment{$(N, 64)$}
  \State $\mathbf{s}_h \gets \mathbf{h}_{\text{fus}} \odot \mathbf{g}_h$
  \State $\hat{y}_h \gets \operatorname{Dense}_{1}(\operatorname{Dropout}_{p_h}(\operatorname{Dense}_{32,\text{ReLU}}(\mathbf{s}_h)))$
\EndFor
\State $\hat{\mathbf{Y}} \gets \operatorname{Concat}([\hat{y}_h]_{h \in \mathcal{H}})$
\end{algorithmic}
\end{algorithm*}

\begin{algorithm*}[!t]
\caption{Validation-calibrated asymmetric tilt allocator with 10 bps costs}
\label{alg:trading}
\begin{algorithmic}[1]
\Require Validation predictions and returns, test predictions $\hat{y}^{(t)}_1$, prices $\mathbf{p}$, cost $\kappa$
\Require Calibrated on validation data only: no-trade band $\eta$, dead-zone threshold $\theta$, risk-off trigger $\theta_{\text{off}}$, risk-on trigger $\theta_{\text{on}}$, tilt strength $\lambda$, base exposure $e_0$, risk-off floor $e_{\text{floor}}$, position bounds $[e_{\min}, e_{\max}]$
\Ensure Net daily P\&L series $\mathcal{P}$, Sharpe, max drawdown, cumulative return, mean exposure, total turnover
\State $b \gets$ mean predicted return on the validation set \Comment{bias term, Section~\ref{ssec:regime}}
\State $(\eta^\star, \theta^\star, \theta_{\text{off}}^\star, \theta_{\text{on}}^\star, \lambda^\star, e_0^\star, e_{\text{floor}}^\star) \gets \arg\max$ validation Sharpe over the parameter grid \Comment{fit once on validation, then frozen}
\State $\pi_{-1} \gets 0$
\For{$t \in$ test days}
  \State $z_t \gets \hat{y}^{(t)}_1 - b$ \Comment{bias-adjusted signal}
  \If{$|z_t| < \eta^\star$}
    \State $\pi_t \gets \pi_{t-1}$ \Comment{no-trade band, suppresses churn from noise near zero}
  \ElsIf{$z_t < -\theta_{\text{off}}^\star$}
    \State $\pi_t \gets \max\bigl(e_0^\star - \lambda^\star \cdot |z_t| / \theta^\star,\; e_{\text{floor}}^\star\bigr)$ \Comment{risk-off tilt, only engages when the down-signal clears the trigger}
  \ElsIf{$\theta_{\text{on}}^\star > 0$ \textbf{and} $z_t > \theta_{\text{on}}^\star$}
    \State $\pi_t \gets \min\bigl(e_0^\star + \lambda^\star \cdot |z_t| / \theta^\star,\; e_{\max}\bigr)$ \Comment{risk-on tilt, only defined for strategies with a two-sided rule}
  \Else
    \State $\pi_t \gets e_0^\star$ \Comment{signal inside the dead zone, hold the base exposure}
  \EndIf
  \State $\pi_t \gets \operatorname{clip}(\pi_t,\, e_{\min},\, e_{\max})$
  \State Append $\pi_t \,(p_{t+1} - p_t)/p_t \;-\; \kappa\,|\pi_t - \pi_{t-1}|$ to $\mathcal{P}$
\EndFor
\State Report annualised Sharpe, max drawdown, cumulative return, mean exposure, and $\sum_t |\pi_t - \pi_{t-1}|$
\Statex \textit{Long-only equities and gold long-only}: $e_{\min} = 0$. \textit{Gold long-short}: $e_{\min} = -1$, $e_{\text{floor}} = -1$, $\theta_{\text{off}} = \theta_{\text{on}} = 0$, so the rule collapses to a signed tilt around zero rather than a floor-based cut.
\Statex \textit{Execution timing}: The position $\pi_t$ is chosen from the signal observed at the close of day $t$ and earns the realised return from $t$ to $t+1$, so no return is booked before the position that earns it is set.
\end{algorithmic}
\end{algorithm*}

\section{Full Metric Tables and Secondary Diagnostics}
\label{app:full-metrics}
Tables~\ref{tab:results-finance-rmse} and~\ref{tab:results-finance-mape} give the RMSE and MAPE companions to Table~\ref{tab:results-finance-mae} under the identical protocol. Table~\ref{tab:baseline-da-h1} gives $H{=}1$ directional accuracy, Table~\ref{tab:had-subspace} the centred HAD gate-variation statistics, and Fig.~\ref{fig:allocation-curves} the allocation growth curves discussed in Section~\ref{ssec:trading}.

\begin{sidewaystable*}[p]
\caption{Per-horizon test RMSE across the full evaluation panel (KOSPI, Nikkei~225, S\&P 500, DJI, DAX, NASDAQ, NYSE, Gold). 95\% moving-block bootstrap CIs in brackets. Lower is better. Boldface indicates the lowest point estimate in each row and is descriptive rather than a claim of statistical significance.}
\label{tab:results-finance-rmse}
\centering
\fontsize{6.3}{7.2}\selectfont
\setlength{\tabcolsep}{2pt}
\renewcommand{\arraystretch}{1.0}
\begin{adjustbox}{max totalsize={\textheight}{0.88\textwidth}}
\begin{tabular}{@{}ll@{\hspace{4pt}}ccccccccccc@{}}
\toprule
\textbf{Asset} & \textbf{H} & \textbf{Persistence} & \textbf{LSTM} & \textbf{BiLSTM} & \textbf{CNN-BiLSTM} & \textbf{XGB} & \textbf{Transformer} & \textbf{DLinear} & \textbf{N-BEATS} & \textbf{PatchTST} & \textbf{iTransformer} & \textbf{DSTNet} \\
\midrule
\multirow{4}{*}{KOSPI} & 1 & \mc{27.28}{24.24}{31.71} & \mc{31.85}{28.36}{37.17} & \mc{30.03}{26.64}{35.01} & \mc{28.52}{25.05}{32.63} & \mc{29.48}{26.08}{34.27} & \mc{28.40}{24.97}{33.15} & \mc{27.74}{24.44}{31.56} & \mc{28.68}{25.90}{32.94} & \mc{28.25}{25.43}{32.94} & \mc{28.29}{24.82}{32.18} & \mcb{26.30}{23.12}{30.16} \\
 & 3 & \mc{43.21}{37.52}{51.01} & \mc{50.23}{44.77}{59.60} & \mc{46.94}{41.23}{55.33} & \mc{45.61}{39.88}{53.19} & \mc{50.10}{43.77}{60.09} & \mc{45.01}{40.00}{52.78} & \mc{46.83}{41.70}{56.17} & \mc{45.76}{40.19}{53.41} & \mc{45.26}{40.10}{53.98} & \mc{46.01}{39.87}{53.75} & \mcb{43.18}{37.70}{50.34} \\
 & 5 & \mc{57.63}{48.31}{71.19} & \mc{62.92}{53.76}{78.06} & \mc{60.49}{50.04}{76.51} & \mc{58.34}{50.01}{70.51} & \mc{66.10}{55.28}{82.37} & \mc{58.67}{49.19}{71.87} & \mc{58.66}{48.92}{72.99} & \mc{59.40}{49.70}{72.01} & \mc{58.84}{50.06}{72.95} & \mc{57.75}{49.40}{72.63} & \mcb{54.93}{47.11}{69.13} \\
 & 10 & \mc{81.79}{70.38}{97.41} & \mc{87.32}{72.57}{104.30} & \mc{86.35}{72.93}{105.31} & \mc{84.14}{71.15}{101.24} & \mc{92.53}{78.81}{112.54} & \mc{80.94}{68.16}{99.09} & \mc{82.17}{69.67}{98.78} & \mc{86.58}{73.15}{105.17} & \mc{84.61}{72.60}{104.50} & \mc{86.15}{72.31}{102.57} & \mcb{78.68}{67.64}{95.53} \\
\midrule
\multirow{4}{*}{Nikkei~225} & 1 & \mcb{381.66}{335.02}{445.58} & \mc{467.11}{419.65}{540.99} & \mc{430.28}{377.89}{501.99} & \mc{421.41}{376.02}{481.83} & \mc{437.67}{384.06}{507.28} & \mc{412.34}{367.00}{479.57} & \mc{414.29}{369.94}{484.19} & \mc{413.44}{368.84}{484.36} & \mc{399.54}{360.70}{456.90} & \mc{409.42}{362.39}{479.17} & \mc{384.06}{344.50}{439.70} \\
 & 3 & \mc{678.83}{587.81}{816.43} & \mc{776.35}{682.60}{903.08} & \mc{757.67}{666.99}{888.71} & \mc{711.96}{620.51}{844.70} & \mc{740.28}{658.05}{875.91} & \mc{699.14}{619.14}{835.85} & \mc{687.11}{596.18}{805.10} & \mc{707.07}{617.84}{822.76} & \mc{687.50}{611.59}{808.44} & \mc{715.37}{638.65}{857.00} & \mcb{661.97}{590.09}{775.44} \\
 & 5 & \mc{884.08}{751.29}{1070.49} & \mc{987.30}{836.60}{1230.85} & \mc{938.47}{772.42}{1155.77} & \mc{936.81}{798.45}{1161.45} & \mc{984.73}{843.45}{1245.69} & \mc{916.81}{780.00}{1136.77} & \mc{910.56}{748.65}{1146.98} & \mc{937.83}{797.60}{1176.32} & \mc{886.54}{733.08}{1121.20} & \mc{880.98}{756.03}{1087.82} & \mcb{852.42}{714.60}{1061.71} \\
 & 10 & \mc{1200.34}{1020.37}{1466.55} & \mc{1335.54}{1142.11}{1634.63} & \mc{1264.95}{1051.57}{1545.90} & \mc{1247.69}{1062.09}{1527.40} & \mc{1359.00}{1155.69}{1623.12} & \mc{1189.41}{991.62}{1444.44} & \mc{1206.73}{1030.75}{1439.75} & \mc{1239.98}{1068.35}{1492.79} & \mc{1170.60}{974.38}{1440.27} & \mc{1184.83}{1003.48}{1461.78} & \mcb{1138.83}{970.09}{1374.26} \\
\midrule
\multirow{4}{*}{S\&P 500} & 1 & \mc{33.90}{30.35}{39.27} & \mc{38.52}{34.08}{44.59} & \mc{38.64}{34.39}{43.96} & \mc{37.47}{33.69}{43.24} & \mc{39.96}{35.29}{46.70} & \mc{36.56}{32.59}{42.49} & \mc{36.77}{32.28}{42.66} & \mc{36.31}{32.01}{41.71} & \mc{36.31}{32.27}{41.88} & \mc{36.22}{32.55}{41.74} & \mcb{33.82}{30.10}{39.64} \\
 & 3 & \mc{62.57}{55.02}{72.94} & \mc{69.77}{60.77}{81.02} & \mc{67.61}{59.25}{80.94} & \mc{66.25}{58.55}{78.80} & \mc{68.48}{61.05}{81.27} & \mc{64.39}{57.44}{76.87} & \mc{64.76}{57.02}{76.48} & \mc{66.39}{57.81}{77.79} & \mc{65.41}{57.50}{78.15} & \mc{64.74}{55.99}{75.69} & \mcb{59.75}{51.85}{71.45} \\
 & 5 & \mc{79.84}{67.62}{99.00} & \mc{90.65}{75.52}{113.74} & \mc{86.77}{73.17}{107.46} & \mc{83.88}{69.42}{105.44} & \mc{90.93}{74.74}{113.75} & \mc{80.51}{68.10}{97.97} & \mc{83.75}{69.82}{105.58} & \mc{82.60}{70.19}{103.56} & \mc{82.71}{68.02}{100.47} & \mc{84.17}{70.71}{102.50} & \mcb{77.29}{66.36}{94.00} \\
 & 10 & \mc{116.64}{99.22}{140.53} & \mc{127.26}{107.29}{156.59} & \mc{122.95}{103.39}{148.69} & \mc{122.20}{104.28}{147.21} & \mc{128.29}{110.79}{156.63} & \mc{118.79}{100.57}{146.00} & \mc{118.47}{99.36}{146.50} & \mc{119.02}{99.12}{145.38} & \mc{119.30}{100.40}{146.14} & \mc{116.40}{97.70}{143.96} & \mcb{111.65}{95.56}{132.90} \\
\midrule
\multirow{4}{*}{DJI} & 1 & \mc{223.69}{200.53}{254.28} & \mc{263.66}{232.26}{301.58} & \mc{250.26}{222.30}{290.78} & \mc{243.37}{215.83}{282.71} & \mc{255.09}{229.79}{292.69} & \mc{243.17}{216.54}{283.64} & \mc{239.03}{213.22}{278.75} & \mc{233.91}{209.72}{273.76} & \mc{238.13}{210.92}{274.25} & \mc{240.43}{216.45}{282.13} & \mcb{223.31}{198.53}{256.08} \\
 & 3 & \mc{419.83}{364.14}{499.05} & \mc{465.70}{409.57}{544.67} & \mc{455.61}{406.45}{540.18} & \mc{438.86}{383.73}{524.74} & \mc{464.37}{412.55}{541.00} & \mc{434.20}{383.70}{505.54} & \mc{431.02}{377.84}{515.82} & \mc{424.28}{377.69}{505.73} & \mc{426.41}{372.31}{503.14} & \mc{431.91}{380.19}{509.94} & \mcb{404.85}{359.42}{484.36} \\
 & 5 & \mc{525.75}{443.32}{656.95} & \mc{591.13}{495.45}{747.79} & \mc{578.75}{477.81}{708.88} & \mc{566.67}{467.84}{705.68} & \mc{593.65}{488.58}{735.38} & \mc{555.56}{474.84}{701.97} & \mc{546.71}{455.29}{671.50} & \mc{557.51}{462.11}{679.34} & \mc{549.99}{468.38}{690.75} & \mc{554.94}{460.60}{674.11} & \mcb{517.33}{427.26}{646.86} \\
 & 10 & \mc{805.00}{679.39}{974.14} & \mc{856.60}{712.41}{1046.87} & \mc{873.76}{738.43}{1054.12} & \mc{813.57}{692.47}{991.00} & \mc{932.02}{780.13}{1119.84} & \mc{821.41}{690.71}{1017.44} & \mc{827.95}{703.23}{984.90} & \mc{843.56}{730.64}{1014.50} & \mc{785.05}{658.36}{948.01} & \mc{835.97}{715.79}{1010.68} & \mcb{766.82}{642.39}{935.32} \\
\midrule
\multirow{4}{*}{DAX} & 1 & \mc{117.52}{104.33}{137.15} & \mc{133.24}{117.15}{152.92} & \mc{129.74}{114.64}{152.01} & \mc{123.71}{110.29}{144.96} & \mc{134.46}{119.83}{156.12} & \mc{120.08}{107.77}{140.35} & \mc{119.90}{105.73}{137.86} & \mc{122.20}{107.68}{139.09} & \mc{124.96}{112.80}{143.89} & \mc{122.49}{109.92}{142.09} & \mcb{114.86}{102.54}{131.30} \\
 & 3 & \mc{211.36}{188.19}{247.17} & \mc{235.79}{209.73}{283.04} & \mc{227.12}{198.04}{264.82} & \mc{224.70}{200.13}{267.19} & \mc{232.20}{203.45}{269.40} & \mc{213.28}{189.66}{255.60} & \mc{211.63}{183.02}{251.28} & \mc{214.57}{188.49}{257.91} & \mc{210.82}{185.14}{246.24} & \mc{221.30}{195.83}{257.19} & \mcb{202.20}{176.95}{236.51} \\
 & 5 & \mc{273.90}{226.34}{339.77} & \mc{310.89}{261.54}{377.44} & \mc{296.35}{253.76}{366.40} & \mc{294.26}{242.11}{361.64} & \mc{327.20}{278.38}{397.78} & \mc{279.71}{231.41}{340.35} & \mc{291.04}{247.16}{355.36} & \mc{296.45}{253.14}{363.52} & \mc{290.56}{240.15}{359.13} & \mc{282.27}{238.00}{353.55} & \mcb{270.53}{222.75}{340.76} \\
 & 10 & \mc{412.84}{354.67}{491.92} & \mc{447.75}{386.05}{542.50} & \mc{423.05}{352.24}{503.43} & \mc{407.43}{350.49}{486.71} & \mc{465.27}{402.29}{571.17} & \mc{395.54}{339.38}{473.57} & \mc{402.63}{339.13}{488.57} & \mc{425.66}{354.41}{510.29} & \mc{404.82}{344.48}{482.46} & \mc{399.96}{335.39}{493.89} & \mcb{384.40}{324.40}{473.12} \\
\midrule
\multirow{4}{*}{NASDAQ} & 1 & \mc{150.71}{134.03}{175.44} & \mc{167.78}{149.51}{196.68} & \mc{164.95}{147.79}{191.33} & \mc{162.25}{143.25}{188.73} & \mc{164.67}{148.07}{192.79} & \mc{155.08}{139.57}{178.27} & \mc{156.16}{140.97}{177.70} & \mc{157.71}{140.98}{181.32} & \mc{154.94}{137.44}{181.08} & \mc{158.97}{143.48}{184.61} & \mcb{146.89}{132.22}{170.48} \\
 & 3 & \mc{255.66}{222.66}{302.08} & \mc{281.33}{247.46}{331.05} & \mc{284.88}{251.85}{342.56} & \mc{269.32}{237.11}{314.89} & \mc{287.03}{249.54}{333.91} & \mc{267.94}{233.57}{320.72} & \mc{265.23}{234.92}{317.76} & \mc{270.51}{236.29}{316.88} & \mc{262.20}{230.52}{313.25} & \mc{273.49}{240.89}{319.02} & \mcb{250.22}{221.24}{293.20} \\
 & 5 & \mc{326.49}{274.53}{412.14} & \mc{365.46}{307.49}{460.59} & \mc{356.93}{300.81}{443.66} & \mc{339.91}{286.47}{418.84} & \mc{373.17}{310.44}{468.23} & \mc{342.64}{282.04}{425.80} & \mc{337.19}{277.58}{422.79} & \mc{359.52}{302.06}{454.36} & \mc{346.56}{292.91}{420.52} & \mc{336.85}{283.63}{407.26} & \mcb{321.68}{274.33}{399.50} \\
 & 10 & \mc{456.24}{382.67}{562.48} & \mc{507.00}{438.14}{618.31} & \mc{476.92}{401.42}{587.62} & \mc{467.64}{398.26}{571.68} & \mc{518.11}{438.25}{630.62} & \mc{469.52}{397.29}{563.06} & \mc{483.60}{412.15}{599.46} & \mc{495.27}{415.56}{613.18} & \mc{478.33}{401.43}{589.06} & \mc{470.80}{406.45}{579.68} & \mcb{445.78}{378.37}{544.01} \\
\midrule
\multirow{4}{*}{NYSE} & 1 & \mc{119.92}{105.22}{139.92} & \mc{133.14}{120.23}{155.61} & \mc{130.43}{115.32}{151.99} & \mc{125.43}{112.39}{146.06} & \mc{128.11}{114.69}{146.59} & \mc{126.98}{111.99}{144.64} & \mc{123.52}{111.54}{143.08} & \mc{128.73}{115.17}{149.31} & \mc{126.67}{111.69}{145.82} & \mc{121.67}{109.19}{142.01} & \mcb{116.40}{104.32}{136.06} \\
 & 3 & \mc{234.34}{205.01}{271.93} & \mc{263.08}{232.01}{306.61} & \mc{257.10}{229.09}{298.64} & \mc{242.27}{213.44}{283.83} & \mc{261.18}{231.43}{306.50} & \mc{244.55}{218.21}{283.70} & \mc{237.35}{210.01}{280.70} & \mc{244.34}{215.19}{287.04} & \mc{249.26}{216.20}{290.41} & \mc{244.42}{218.31}{288.58} & \mcb{228.26}{202.17}{269.49} \\
 & 5 & \mc{319.67}{264.82}{391.20} & \mc{359.06}{308.40}{449.18} & \mc{327.83}{274.23}{408.77} & \mc{326.82}{275.02}{407.39} & \mc{340.67}{283.29}{415.13} & \mc{325.84}{269.53}{404.47} & \mc{325.31}{277.29}{408.97} & \mc{324.70}{272.16}{407.42} & \mc{319.33}{269.49}{393.26} & \mc{324.12}{266.89}{394.15} & \mcb{303.07}{256.21}{366.63} \\
 & 10 & \mc{444.46}{374.08}{535.21} & \mc{486.80}{414.71}{587.00} & \mc{466.72}{393.51}{566.26} & \mc{447.31}{381.47}{545.32} & \mc{516.68}{434.85}{627.26} & \mc{456.39}{384.01}{552.12} & \mc{445.56}{385.70}{541.21} & \mc{458.17}{386.37}{562.33} & \mc{453.05}{391.42}{540.36} & \mc{459.70}{388.58}{556.53} & \mcb{430.02}{371.31}{512.22} \\
\midrule
\multirow{4}{*}{Gold} & 1 & \mc{16.69}{14.97}{19.33} & \mc{19.34}{17.29}{22.50} & \mc{18.28}{16.27}{21.28} & \mc{17.32}{15.44}{19.88} & \mc{18.90}{16.76}{21.83} & \mc{16.86}{15.20}{19.23} & \mc{17.30}{15.34}{19.86} & \mc{17.06}{15.08}{19.70} & \mc{17.27}{15.23}{20.21} & \mc{17.33}{15.48}{19.97} & \mcb{16.21}{14.24}{18.86} \\
 & 3 & \mc{29.60}{26.11}{34.72} & \mc{34.59}{30.09}{40.46} & \mc{32.85}{28.92}{39.40} & \mc{31.07}{27.72}{36.71} & \mc{33.24}{29.17}{39.68} & \mc{31.33}{27.68}{37.29} & \mc{30.40}{26.63}{35.47} & \mc{31.05}{27.66}{37.05} & \mc{30.19}{26.91}{35.50} & \mc{30.59}{26.81}{36.45} & \mcb{28.98}{25.55}{34.21} \\
 & 5 & \mc{43.27}{36.67}{52.71} & \mc{48.25}{41.13}{59.96} & \mc{46.96}{39.25}{58.43} & \mc{45.35}{38.23}{55.72} & \mc{51.04}{43.02}{64.03} & \mc{44.47}{37.96}{54.37} & \mc{45.22}{38.44}{55.95} & \mc{44.50}{36.81}{54.08} & \mc{43.60}{36.67}{53.75} & \mc{44.13}{36.67}{53.90} & \mcb{42.08}{35.30}{51.78} \\
 & 10 & \mc{65.10}{56.18}{79.06} & \mc{70.81}{59.72}{88.81} & \mc{69.97}{59.08}{83.68} & \mc{66.99}{57.90}{82.58} & \mc{72.99}{62.44}{88.52} & \mc{65.02}{55.35}{79.01} & \mc{64.02}{54.90}{77.88} & \mc{67.94}{57.29}{83.47} & \mc{66.71}{57.76}{80.78} & \mc{67.56}{57.83}{83.20} & \mcb{62.09}{53.38}{76.42} \\
\bottomrule
\end{tabular}
\end{adjustbox}
\end{sidewaystable*}

\begin{sidewaystable*}[p]
\caption{Per-horizon test MAPE (\%) across the full evaluation panel (KOSPI, Nikkei~225, S\&P 500, DJI, DAX, NASDAQ, NYSE, Gold). 95\% moving-block bootstrap CIs in brackets. Lower is better. Boldface indicates the lowest point estimate in each row and is descriptive rather than a claim of statistical significance.}
\label{tab:results-finance-mape}
\centering
\fontsize{6.3}{7.2}\selectfont
\setlength{\tabcolsep}{2pt}
\renewcommand{\arraystretch}{1.0}
\begin{adjustbox}{max totalsize={\textheight}{0.88\textwidth}}
\begin{tabular}{@{}ll@{\hspace{4pt}}ccccccccccc@{}}
\toprule
\textbf{Asset} & \textbf{H} & \textbf{Persistence} & \textbf{LSTM} & \textbf{BiLSTM} & \textbf{CNN-BiLSTM} & \textbf{XGB} & \textbf{Transformer} & \textbf{DLinear} & \textbf{N-BEATS} & \textbf{PatchTST} & \textbf{iTransformer} & \textbf{DSTNet} \\
\midrule
\multirow{4}{*}{KOSPI} & 1 & \mc{0.777}{0.701}{0.886} & \mc{0.909}{0.813}{1.053} & \mc{0.865}{0.760}{0.993} & \mc{0.822}{0.728}{0.950} & \mc{0.872}{0.765}{1.006} & \mc{0.822}{0.725}{0.962} & \mc{0.820}{0.731}{0.959} & \mc{0.829}{0.742}{0.958} & \mc{0.824}{0.727}{0.955} & \mc{0.823}{0.734}{0.941} & \mcb{0.771}{0.677}{0.903} \\
 & 3 & \mc{1.321}{1.154}{1.548} & \mc{1.515}{1.347}{1.779} & \mc{1.428}{1.260}{1.704} & \mc{1.388}{1.214}{1.635} & \mc{1.507}{1.336}{1.756} & \mc{1.380}{1.231}{1.620} & \mc{1.374}{1.207}{1.643} & \mc{1.392}{1.213}{1.619} & \mc{1.373}{1.188}{1.610} & \mc{1.402}{1.232}{1.651} & \mcb{1.302}{1.145}{1.544} \\
 & 5 & \mc{1.747}{1.454}{2.176} & \mc{1.986}{1.635}{2.480} & \mc{1.872}{1.552}{2.334} & \mc{1.830}{1.549}{2.250} & \mc{2.012}{1.724}{2.528} & \mc{1.789}{1.486}{2.175} & \mc{1.816}{1.540}{2.255} & \mc{1.854}{1.533}{2.339} & \mc{1.800}{1.480}{2.229} & \mc{1.826}{1.561}{2.302} & \mcb{1.704}{1.440}{2.122} \\
 & 10 & \mc{2.561}{2.137}{3.159} & \mc{2.753}{2.348}{3.356} & \mc{2.723}{2.279}{3.339} & \mc{2.634}{2.266}{3.159} & \mc{2.828}{2.442}{3.473} & \mc{2.575}{2.198}{3.082} & \mc{2.615}{2.198}{3.166} & \mc{2.654}{2.297}{3.167} & \mc{2.607}{2.208}{3.181} & \mc{2.634}{2.247}{3.170} & \mcb{2.469}{2.063}{2.964} \\
\midrule
\multirow{4}{*}{Nikkei~225} & 1 & \mc{0.881}{0.786}{1.011} & \mc{1.035}{0.912}{1.190} & \mc{0.984}{0.863}{1.148} & \mc{0.951}{0.856}{1.111} & \mc{1.009}{0.900}{1.182} & \mc{0.917}{0.823}{1.077} & \mc{0.916}{0.805}{1.067} & \mc{0.940}{0.827}{1.094} & \mc{0.924}{0.811}{1.052} & \mc{0.929}{0.835}{1.060} & \mcb{0.873}{0.777}{1.000} \\
 & 3 & \mc{1.516}{1.321}{1.787} & \mc{1.728}{1.497}{2.078} & \mc{1.694}{1.474}{1.974} & \mc{1.607}{1.396}{1.882} & \mc{1.681}{1.491}{1.982} & \mc{1.566}{1.382}{1.818} & \mc{1.562}{1.394}{1.831} & \mc{1.586}{1.377}{1.850} & \mc{1.564}{1.394}{1.881} & \mc{1.573}{1.375}{1.889} & \mcb{1.491}{1.321}{1.741} \\
 & 5 & \mc{2.004}{1.683}{2.458} & \mc{2.275}{1.917}{2.786} & \mc{2.165}{1.841}{2.696} & \mc{2.115}{1.757}{2.589} & \mc{2.246}{1.902}{2.718} & \mc{2.048}{1.688}{2.572} & \mc{2.052}{1.729}{2.559} & \mc{2.094}{1.770}{2.607} & \mc{2.052}{1.758}{2.551} & \mc{2.057}{1.746}{2.583} & \mcb{1.959}{1.634}{2.379} \\
 & 10 & \mc{2.730}{2.287}{3.310} & \mc{3.022}{2.579}{3.673} & \mc{2.911}{2.432}{3.542} & \mc{2.814}{2.348}{3.365} & \mc{3.067}{2.653}{3.724} & \mc{2.755}{2.324}{3.386} & \mc{2.789}{2.356}{3.446} & \mc{2.815}{2.349}{3.423} & \mc{2.761}{2.306}{3.352} & \mc{2.766}{2.304}{3.335} & \mcb{2.636}{2.264}{3.255} \\
\midrule
\multirow{4}{*}{S\&P 500} & 1 & \mc{0.577}{0.520}{0.661} & \mc{0.658}{0.589}{0.749} & \mc{0.635}{0.573}{0.734} & \mc{0.626}{0.554}{0.713} & \mc{0.665}{0.594}{0.768} & \mc{0.604}{0.534}{0.701} & \mc{0.607}{0.538}{0.697} & \mc{0.616}{0.556}{0.713} & \mc{0.602}{0.532}{0.688} & \mc{0.613}{0.551}{0.708} & \mcb{0.572}{0.504}{0.656} \\
 & 3 & \mc{1.054}{0.917}{1.247} & \mc{1.221}{1.076}{1.458} & \mc{1.167}{1.035}{1.361} & \mc{1.118}{0.967}{1.299} & \mc{1.211}{1.059}{1.455} & \mc{1.090}{0.950}{1.298} & \mc{1.094}{0.966}{1.288} & \mc{1.121}{0.981}{1.333} & \mc{1.107}{0.962}{1.323} & \mc{1.097}{0.957}{1.276} & \mcb{1.037}{0.926}{1.230} \\
 & 5 & \mc{1.349}{1.123}{1.671} & \mc{1.516}{1.278}{1.842} & \mc{1.442}{1.217}{1.793} & \mc{1.409}{1.184}{1.744} & \mc{1.508}{1.282}{1.824} & \mc{1.377}{1.167}{1.668} & \mc{1.396}{1.156}{1.754} & \mc{1.420}{1.220}{1.779} & \mc{1.381}{1.146}{1.711} & \mc{1.411}{1.163}{1.751} & \mcb{1.315}{1.107}{1.654} \\
 & 10 & \mc{2.052}{1.771}{2.468} & \mc{2.256}{1.882}{2.750} & \mc{2.172}{1.864}{2.627} & \mc{2.116}{1.759}{2.575} & \mc{2.269}{1.903}{2.736} & \mc{2.091}{1.772}{2.572} & \mc{2.088}{1.761}{2.542} & \mc{2.116}{1.759}{2.619} & \mc{2.079}{1.776}{2.475} & \mc{2.101}{1.764}{2.596} & \mcb{1.978}{1.711}{2.429} \\
\midrule
\multirow{4}{*}{DJI} & 1 & \mc{0.485}{0.433}{0.557} & \mc{0.572}{0.509}{0.656} & \mc{0.542}{0.488}{0.624} & \mc{0.518}{0.461}{0.607} & \mc{0.540}{0.480}{0.621} & \mc{0.513}{0.458}{0.584} & \mc{0.504}{0.454}{0.584} & \mc{0.511}{0.460}{0.583} & \mc{0.510}{0.447}{0.587} & \mc{0.512}{0.458}{0.595} & \mcb{0.481}{0.433}{0.558} \\
 & 3 & \mc{0.924}{0.824}{1.079} & \mc{1.046}{0.912}{1.222} & \mc{1.006}{0.873}{1.198} & \mc{0.970}{0.852}{1.158} & \mc{1.025}{0.913}{1.217} & \mc{0.958}{0.832}{1.126} & \mc{0.959}{0.835}{1.126} & \mc{0.963}{0.859}{1.138} & \mc{0.956}{0.850}{1.119} & \mc{0.968}{0.853}{1.135} & \mcb{0.909}{0.803}{1.085} \\
 & 5 & \mc{1.196}{1.027}{1.509} & \mc{1.318}{1.115}{1.651} & \mc{1.295}{1.069}{1.593} & \mc{1.260}{1.048}{1.528} & \mc{1.325}{1.124}{1.647} & \mc{1.215}{0.999}{1.487} & \mc{1.229}{1.028}{1.524} & \mc{1.247}{1.059}{1.576} & \mc{1.213}{1.014}{1.505} & \mc{1.234}{1.034}{1.534} & \mcb{1.163}{0.970}{1.469} \\
 & 10 & \mc{1.787}{1.516}{2.214} & \mc{1.929}{1.612}{2.325} & \mc{1.919}{1.597}{2.330} & \mc{1.831}{1.548}{2.253} & \mc{2.036}{1.734}{2.491} & \mc{1.799}{1.496}{2.225} & \mc{1.815}{1.547}{2.217} & \mc{1.837}{1.583}{2.253} & \mc{1.786}{1.501}{2.125} & \mc{1.827}{1.544}{2.255} & \mcb{1.718}{1.481}{2.095} \\
\midrule
\multirow{4}{*}{DAX} & 1 & \mc{0.546}{0.482}{0.621} & \mc{0.615}{0.544}{0.720} & \mc{0.603}{0.541}{0.688} & \mc{0.584}{0.520}{0.685} & \mc{0.619}{0.545}{0.714} & \mc{0.571}{0.508}{0.668} & \mc{0.568}{0.501}{0.649} & \mc{0.584}{0.524}{0.672} & \mc{0.575}{0.512}{0.671} & \mc{0.575}{0.510}{0.671} & \mcb{0.541}{0.487}{0.619} \\
 & 3 & \mc{0.989}{0.882}{1.180} & \mc{1.133}{1.000}{1.351} & \mc{1.091}{0.951}{1.288} & \mc{1.056}{0.935}{1.266} & \mc{1.103}{0.983}{1.311} & \mc{1.029}{0.894}{1.225} & \mc{1.021}{0.911}{1.198} & \mc{1.036}{0.905}{1.234} & \mc{1.027}{0.904}{1.210} & \mc{1.039}{0.912}{1.239} & \mcb{0.972}{0.865}{1.160} \\
 & 5 & \mc{1.369}{1.158}{1.685} & \mc{1.532}{1.284}{1.898} & \mc{1.453}{1.228}{1.802} & \mc{1.433}{1.206}{1.742} & \mc{1.575}{1.303}{1.932} & \mc{1.400}{1.154}{1.743} & \mc{1.414}{1.188}{1.759} & \mc{1.438}{1.225}{1.799} & \mc{1.402}{1.171}{1.719} & \mc{1.409}{1.180}{1.750} & \mcb{1.327}{1.140}{1.646} \\
 & 10 & \mc{1.998}{1.691}{2.436} & \mc{2.170}{1.850}{2.651} & \mc{2.099}{1.800}{2.522} & \mc{2.023}{1.691}{2.450} & \mc{2.291}{1.957}{2.730} & \mc{1.990}{1.668}{2.417} & \mc{2.015}{1.696}{2.489} & \mc{2.074}{1.750}{2.477} & \mc{2.018}{1.712}{2.469} & \mc{2.023}{1.732}{2.452} & \mcb{1.908}{1.629}{2.346} \\
\midrule
\multirow{4}{*}{NASDAQ} & 1 & \mc{0.795}{0.706}{0.921} & \mc{0.895}{0.789}{1.019} & \mc{0.900}{0.794}{1.044} & \mc{0.860}{0.773}{0.982} & \mc{0.885}{0.793}{1.026} & \mc{0.835}{0.733}{0.954} & \mc{0.833}{0.750}{0.968} & \mc{0.839}{0.738}{0.972} & \mc{0.831}{0.737}{0.973} & \mc{0.838}{0.737}{0.953} & \mcb{0.788}{0.711}{0.897} \\
 & 3 & \mc{1.423}{1.268}{1.669} & \mc{1.599}{1.384}{1.898} & \mc{1.580}{1.403}{1.898} & \mc{1.519}{1.343}{1.807} & \mc{1.631}{1.431}{1.899} & \mc{1.477}{1.292}{1.759} & \mc{1.482}{1.301}{1.753} & \mc{1.512}{1.350}{1.810} & \mc{1.478}{1.296}{1.742} & \mc{1.512}{1.350}{1.773} & \mcb{1.398}{1.223}{1.656} \\
 & 5 & \mc{1.853}{1.523}{2.315} & \mc{2.015}{1.698}{2.513} & \mc{1.983}{1.673}{2.500} & \mc{1.907}{1.615}{2.406} & \mc{2.119}{1.798}{2.625} & \mc{1.886}{1.579}{2.370} & \mc{1.899}{1.592}{2.312} & \mc{1.955}{1.679}{2.453} & \mc{1.912}{1.583}{2.414} & \mc{1.914}{1.584}{2.392} & \mcb{1.799}{1.503}{2.253} \\
 & 10 & \mc{2.623}{2.224}{3.220} & \mc{2.910}{2.487}{3.589} & \mc{2.734}{2.365}{3.266} & \mc{2.681}{2.238}{3.303} & \mc{2.903}{2.464}{3.474} & \mc{2.657}{2.250}{3.271} & \mc{2.693}{2.324}{3.206} & \mc{2.729}{2.308}{3.308} & \mc{2.643}{2.215}{3.153} & \mc{2.659}{2.240}{3.172} & \mcb{2.517}{2.145}{3.074} \\
\midrule
\multirow{4}{*}{NYSE} & 1 & \mc{0.572}{0.509}{0.664} & \mc{0.652}{0.587}{0.742} & \mc{0.637}{0.563}{0.745} & \mc{0.611}{0.547}{0.700} & \mc{0.638}{0.573}{0.732} & \mc{0.606}{0.543}{0.690} & \mc{0.601}{0.537}{0.704} & \mc{0.612}{0.549}{0.712} & \mc{0.606}{0.543}{0.705} & \mc{0.603}{0.539}{0.703} & \mcb{0.568}{0.506}{0.664} \\
 & 3 & \mc{1.125}{0.996}{1.352} & \mc{1.266}{1.115}{1.495} & \mc{1.238}{1.071}{1.442} & \mc{1.193}{1.063}{1.398} & \mc{1.285}{1.126}{1.507} & \mc{1.160}{1.005}{1.371} & \mc{1.174}{1.046}{1.398} & \mc{1.181}{1.023}{1.371} & \mc{1.177}{1.022}{1.375} & \mc{1.190}{1.033}{1.408} & \mcb{1.108}{0.969}{1.307} \\
 & 5 & \mc{1.525}{1.283}{1.913} & \mc{1.738}{1.481}{2.160} & \mc{1.625}{1.375}{2.016} & \mc{1.578}{1.334}{1.963} & \mc{1.701}{1.414}{2.105} & \mc{1.555}{1.280}{1.898} & \mc{1.580}{1.356}{1.983} & \mc{1.603}{1.340}{2.009} & \mc{1.572}{1.326}{1.983} & \mc{1.568}{1.300}{1.962} & \mcb{1.484}{1.246}{1.865} \\
 & 10 & \mc{2.315}{1.950}{2.786} & \mc{2.531}{2.166}{3.101} & \mc{2.420}{2.070}{2.933} & \mc{2.346}{2.013}{2.892} & \mc{2.664}{2.260}{3.234} & \mc{2.331}{1.991}{2.776} & \mc{2.335}{2.011}{2.893} & \mc{2.400}{2.033}{2.877} & \mc{2.314}{1.971}{2.816} & \mc{2.335}{1.989}{2.875} & \mcb{2.215}{1.886}{2.733} \\
\midrule
\multirow{4}{*}{Gold} & 1 & \mc{0.586}{0.518}{0.683} & \mc{0.677}{0.603}{0.789} & \mc{0.644}{0.568}{0.740} & \mc{0.632}{0.569}{0.726} & \mc{0.663}{0.596}{0.778} & \mc{0.613}{0.549}{0.715} & \mc{0.611}{0.537}{0.708} & \mc{0.622}{0.547}{0.709} & \mc{0.616}{0.554}{0.700} & \mc{0.622}{0.558}{0.716} & \mcb{0.582}{0.515}{0.662} \\
 & 3 & \mc{1.111}{0.981}{1.307} & \mc{1.279}{1.132}{1.538} & \mc{1.218}{1.055}{1.452} & \mc{1.180}{1.039}{1.392} & \mc{1.251}{1.091}{1.466} & \mc{1.154}{1.019}{1.371} & \mc{1.148}{1.018}{1.338} & \mc{1.177}{1.025}{1.401} & \mc{1.151}{0.999}{1.345} & \mc{1.172}{1.021}{1.387} & \mcb{1.094}{0.962}{1.314} \\
 & 5 & \mc{1.600}{1.341}{1.951} & \mc{1.801}{1.502}{2.189} & \mc{1.717}{1.464}{2.125} & \mc{1.679}{1.405}{2.079} & \mc{1.836}{1.555}{2.305} & \mc{1.631}{1.341}{2.025} & \mc{1.625}{1.396}{2.010} & \mc{1.670}{1.381}{2.072} & \mc{1.621}{1.388}{1.980} & \mc{1.640}{1.391}{2.010} & \mcb{1.551}{1.288}{1.943} \\
 & 10 & \mc{2.400}{2.033}{2.861} & \mc{2.579}{2.145}{3.117} & \mc{2.535}{2.152}{3.018} & \mc{2.431}{2.079}{2.966} & \mc{2.672}{2.266}{3.273} & \mc{2.406}{2.017}{2.949} & \mc{2.420}{2.024}{2.904} & \mc{2.466}{2.069}{2.947} & \mc{2.422}{2.053}{2.886} & \mc{2.443}{2.088}{2.936} & \mcb{2.302}{1.948}{2.821} \\
\bottomrule
\end{tabular}
\end{adjustbox}
\end{sidewaystable*}

\begin{table*}[!t]
\caption{$H=1$ directional accuracy (\%).}
\label{tab:baseline-da-h1}
\centering
\small
\setlength{\tabcolsep}{6pt}
\begin{tabular}{@{}lrrrrrrrr@{}}
\toprule
\textbf{Model} & \textbf{KOSPI} & \textbf{Nikkei~225} & \textbf{S\&P 500} & \textbf{DJI} & \textbf{DAX} & \textbf{NASDAQ} & \textbf{NYSE} & \textbf{Gold} \\
\midrule
LSTM & 46.38 & 48.98 & 52.87 & 56.16 & 51.45 & 53.43 & 45.91 & 49.55 \\
BiLSTM & 47.69 & 49.69 & 52.82 & 56.53 & 51.28 & 54.99 & 46.33 & 51.39 \\
CNN-BiLSTM & 48.45 & 51.56 & 54.62 & 56.98 & 52.06 & 55.11 & 47.02 & 52.04 \\
XGBoost & 46.70 & 49.58 & 52.63 & 56.97 & 51.15 & 54.12 & 46.44 & 50.92 \\
DLinear & 48.65 & 51.80 & 55.14 & 57.81 & 53.45 & 55.66 & 47.29 & 52.76 \\
N-BEATS & 48.40 & 50.67 & 54.48 & 57.54 & 52.32 & 56.17 & 47.54 & 51.37 \\
PatchTST & 48.19 & 51.59 & 55.08 & 58.87 & 53.67 & 55.87 & 47.17 & 52.18 \\
iTransformer & 48.48 & 51.93 & 55.07 & 57.32 & 53.49 & 55.87 & 47.94 & 51.58 \\
Vanilla Transformer & 48.66 & 51.91 & 55.04 & 57.76 & 52.82 & 56.20 & 47.47 & 53.07 \\
\midrule
DSTNet & \textbf{49.36} & \textbf{52.54} & \textbf{55.79} & \textbf{59.09} & \textbf{54.07} & \textbf{57.02} & \textbf{48.76} & \textbf{53.31} \\
\bottomrule
\end{tabular}
\end{table*}

\begin{table*}[!t]
\caption{Centred HAD gate-variation analysis. $r_H$ is the number of right-singular directions needed to explain 95\% of the squared singular-value mass of the centred gate matrix on the evaluated test sample. Angles are mean principal angles in degrees between adjacent estimated variation subspaces. Neither compact rank nor large angles alone establishes purposeful horizon specialization or stability beyond the evaluated sample.}
\label{tab:had-subspace}
\centering
\small
\setlength{\tabcolsep}{7pt}
\begin{tabular}{@{}lrrrrrrrr@{}}
\toprule
\textbf{Series} & $r_1$ & $r_3$ & $r_5$ & $r_{10}$ &
$\angle_{1,3}$ & $\angle_{3,5}$ & $\angle_{5,10}$ & \textbf{Mean} \\
\midrule
KOSPI      & 3 & 3 & 3 & 2 & 76.02 & 77.72 & 81.13 & 78.29 \\
S\&P 500   & 3 & 2 & 2 & 2 & 82.85 & 83.10 & 84.87 & 83.61 \\
Nikkei~225 & 3 & 4 & 4 & 3 & 77.92 & 78.99 & 73.77 & 76.89 \\
DJI        & 3 & 2 & 4 & 2 & 79.59 & 79.58 & 77.61 & 78.93 \\
DAX        & 2 & 2 & 2 & 2 & 86.03 & 83.67 & 84.05 & 84.58 \\
NASDAQ     & 1 & 1 & 1 & 1 & 85.79 & 76.46 & 88.91 & 83.72 \\
NYSE       & 2 & 2 & 2 & 2 & 80.72 & 80.78 & 75.66 & 79.05 \\
Gold       & 2 & 2 & 2 & 2 & 82.79 & 76.84 & 82.18 & 80.60 \\
\bottomrule
\end{tabular}
\end{table*}

\begin{figure*}[!t]
\centering
\includegraphics[width=0.88\textwidth]{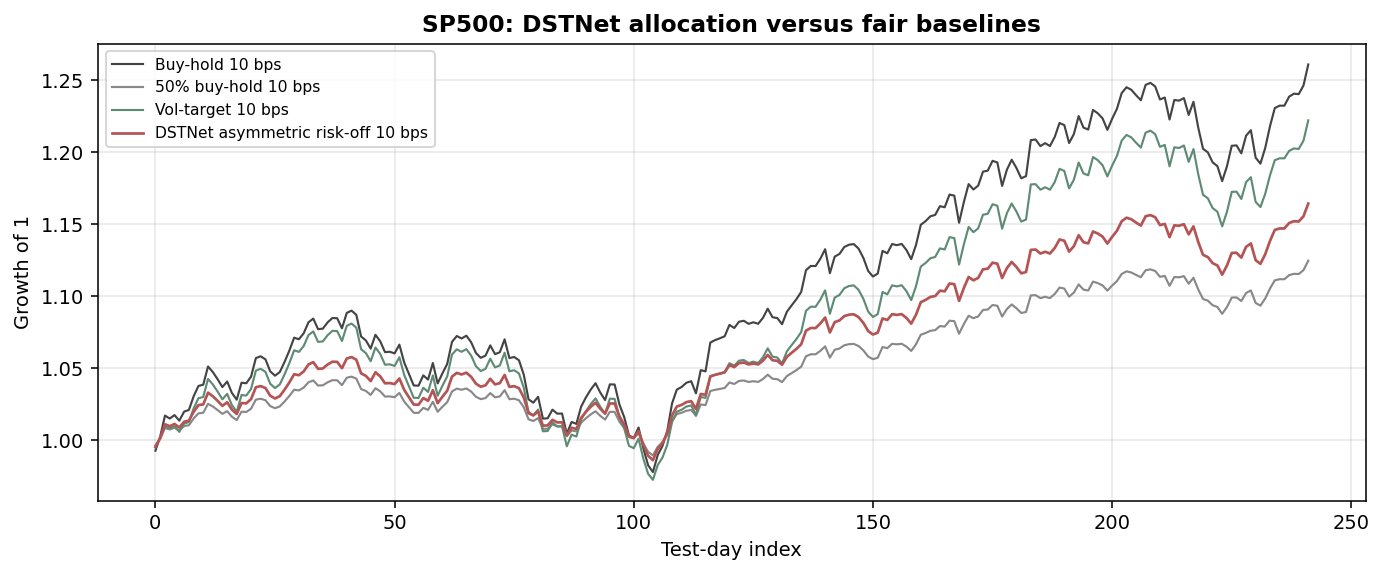}

\vspace{2mm}

\includegraphics[width=0.88\textwidth]{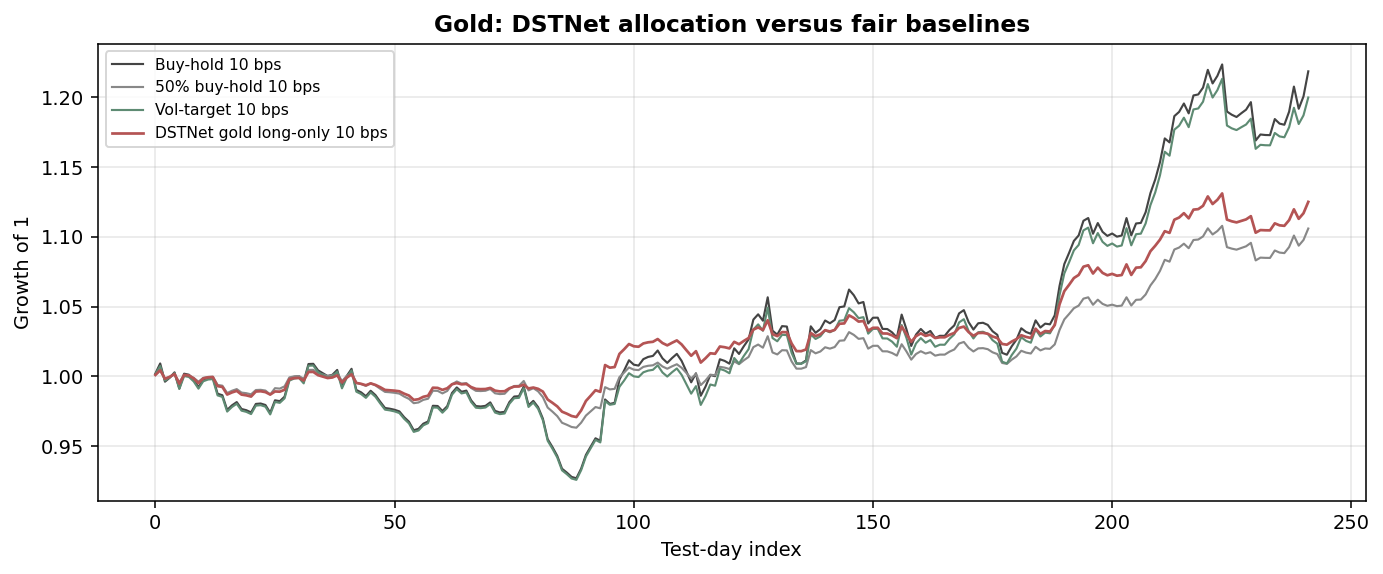}
\caption{Growth of one unit for two representative allocation diagnostics at 10 bps. Top: S\&P~500, where the DSTNet risk-off rule makes only the initial allocation and thereafter behaves as a static 65\% exposure. Its lower drawdown relative to buy-and-hold therefore reflects exposure scaling rather than forecast-conditioned timing. Bottom: Gold, where the DSTNet long-only rule changes exposure during the hold-out and experiences a shallower maximum drawdown than buy-and-hold. Its Sharpe ratio is 1.99 versus 1.74 for buy-and-hold, while cumulative return is lower at 12.51\% versus 21.88\%. The contrasting panels illustrate why the allocation experiment is treated as a diagnostic rather than evidence of a general trading advantage.}
\label{fig:allocation-curves}
\end{figure*}

\section{Random-Seed Stability: Cellwise Results}
\label{app:multiseed}
Tables~\ref{tab:multiseed-mae-appendix}--\ref{tab:multiseed-rmse-appendix} report the three-seed mean and sample standard deviation for every series--horizon cell. These values underlie the coefficient-of-variation and win-count summaries in Table~\ref{tab:multiseed-summary}.

\begin{table*}[!t]
\caption{DSTNet MAE across seeds 42, 101, and 202 on the primary hold-out. Values are mean $\pm$ sample standard deviation.}
\label{tab:multiseed-mae-appendix}
\centering
\small
\begin{tabular}{@{}lcccc@{}}
\toprule
\textbf{Series} & \textbf{$H=1$} & \textbf{$H=3$} & \textbf{$H=5$} & \textbf{$H=10$} \\
\midrule
KOSPI      & 20.17 $\pm$ 0.72 & 35.17 $\pm$ 3.69 & 46.01 $\pm$ 4.88 & 66.04 $\pm$ 5.35 \\
Nikkei 225 & 300.99 $\pm$ 0.22 & 516.20 $\pm$ 1.43 & 682.22 $\pm$ 0.46 & 924.19 $\pm$ 1.58 \\
S\&P 500   & 26.82 $\pm$ 0.14 & 48.32 $\pm$ 0.17 & 61.27 $\pm$ 0.39 & 91.43 $\pm$ 1.07 \\
DJI        & 174.18 $\pm$ 1.04 & 329.09 $\pm$ 3.14 & 420.07 $\pm$ 3.41 & 619.99 $\pm$ 1.84 \\
DAX        & 88.95 $\pm$ 0.25 & 159.15 $\pm$ 0.38 & 217.07 $\pm$ 2.25 & 314.20 $\pm$ 3.05 \\
NASDAQ     & 114.47 $\pm$ 0.39 & 208.46 $\pm$ 10.93 & 274.12 $\pm$ 23.94 & 366.24 $\pm$ 3.60 \\
NYSE       & 89.15 $\pm$ 3.35 & 168.78 $\pm$ 11.21 & 220.57 $\pm$ 20.23 & 331.38 $\pm$ 31.46 \\
Gold       & 11.91 $\pm$ 0.04 & 22.24 $\pm$ 0.33 & 31.52 $\pm$ 0.59 & 47.20 $\pm$ 0.72 \\
\bottomrule
\end{tabular}
\end{table*}

\begin{table*}[!t]
\caption{DSTNet MAPE across seeds 42, 101, and 202 on the primary hold-out. Values are mean $\pm$ sample standard deviation.}
\label{tab:multiseed-mape-appendix}
\centering
\small
\begin{tabular}{@{}lcccc@{}}
\toprule
\textbf{Series} & \textbf{$H=1$} & \textbf{$H=3$} & \textbf{$H=5$} & \textbf{$H=10$} \\
\midrule
KOSPI      & 0.786 $\pm$ 0.027 & 1.373 $\pm$ 0.138 & 1.802 $\pm$ 0.180 & 2.575 $\pm$ 0.196 \\
Nikkei 225 & 0.874 $\pm$ 0.001 & 1.490 $\pm$ 0.005 & 1.958 $\pm$ 0.002 & 2.641 $\pm$ 0.005 \\
S\&P 500   & 0.576 $\pm$ 0.003 & 1.041 $\pm$ 0.004 & 1.325 $\pm$ 0.009 & 1.970 $\pm$ 0.023 \\
DJI        & 0.483 $\pm$ 0.003 & 0.913 $\pm$ 0.008 & 1.169 $\pm$ 0.008 & 1.717 $\pm$ 0.004 \\
DAX        & 0.541 $\pm$ 0.001 & 0.971 $\pm$ 0.002 & 1.318 $\pm$ 0.009 & 1.899 $\pm$ 0.008 \\
NASDAQ     & 0.788 $\pm$ 0.003 & 1.440 $\pm$ 0.070 & 1.893 $\pm$ 0.154 & 2.533 $\pm$ 0.025 \\
NYSE       & 0.544 $\pm$ 0.020 & 1.031 $\pm$ 0.066 & 1.348 $\pm$ 0.118 & 2.013 $\pm$ 0.182 \\
Gold       & 0.580 $\pm$ 0.002 & 1.081 $\pm$ 0.012 & 1.525 $\pm$ 0.022 & 2.271 $\pm$ 0.027 \\
\bottomrule
\end{tabular}
\end{table*}

\begin{table*}[!t]
\caption{DSTNet RMSE across seeds 42, 101, and 202 on the primary hold-out. Values are mean $\pm$ sample standard deviation.}
\label{tab:multiseed-rmse-appendix}
\centering
\small
\begin{tabular}{@{}lcccc@{}}
\toprule
\textbf{Series} & \textbf{$H=1$} & \textbf{$H=3$} & \textbf{$H=5$} & \textbf{$H=10$} \\
\midrule
KOSPI      & 26.76 $\pm$ 0.78 & 45.64 $\pm$ 4.42 & 57.96 $\pm$ 5.60 & 82.05 $\pm$ 5.92 \\
Nikkei 225 & 384.05 $\pm$ 0.05 & 661.63 $\pm$ 0.38 & 852.16 $\pm$ 1.12 & 1140.73 $\pm$ 1.79 \\
S\&P 500   & 33.90 $\pm$ 0.07 & 59.95 $\pm$ 0.17 & 77.61 $\pm$ 0.29 & 111.79 $\pm$ 0.54 \\
DJI        & 223.58 $\pm$ 0.44 & 406.47 $\pm$ 2.47 & 519.45 $\pm$ 3.50 & 767.55 $\pm$ 1.56 \\
DAX        & 114.96 $\pm$ 0.10 & 201.80 $\pm$ 0.84 & 268.77 $\pm$ 1.84 & 379.93 $\pm$ 3.88 \\
NASDAQ     & 146.81 $\pm$ 0.72 & 257.07 $\pm$ 10.95 & 337.41 $\pm$ 25.85 & 448.04 $\pm$ 2.55 \\
NYSE       & 112.16 $\pm$ 3.67 & 211.24 $\pm$ 14.82 & 275.76 $\pm$ 23.70 & 391.78 $\pm$ 33.65 \\
Gold       & 16.17 $\pm$ 0.05 & 28.47 $\pm$ 0.46 & 40.64 $\pm$ 1.25 & 62.15 $\pm$ 0.06 \\
\bottomrule
\end{tabular}
\end{table*}

\section{Ablation Appendix: Full Panel Summary}
\label{app:ablations-appendix}
Table~\ref{tab:ablations-appendix} reports the macro-averaged MAE degradation across all eight series for the four ablation variants, and Table~\ref{tab:ablations-full-panel} gives the per-series MAE at $H=1$ and $H=10$ for the six indices not shown in the main text. The horizon-dependent point-estimate penalty for removing the DST tensor has the same direction across the panel, which is consistent with value beyond a single market's microstructure but does not by itself establish a significant per-series effect.

\begin{table*}[!t]
\caption{Panel-Mean Ablation Summary Across All Eight Series (MAE Degradation Relative to DSTNet Full). Values Are the Macro-Average Percentage Degradation Across the Seven Equity Indices and Gold, Averaged Over All Four Horizons ($H \in \{1, 3, 5, 10\}$).}
\label{tab:ablations-appendix}
\centering
\footnotesize
\begin{tabular}{@{}lc@{}}
\toprule
\textbf{Variant} & \makecell{\textbf{Mean MAE}\\\textbf{degradation (\%)}} \\
\midrule
DSTNet full (factorized)                  & 0.00 (reference) \\
\hspace{1em}(a) Without burn-in exclusion & +0.92 \\
\hspace{1em}(b) Joint STST                & +0.05 \\
\hspace{1em}(c) No-DST (static snapshot)  & +5.79 \\
\hspace{1em}(d) No-VAF (mean fusion)      & +2.94 \\
\bottomrule
\end{tabular}
\end{table*}

\begin{table*}[!t]
\caption{Full panel ablation study for KOSPI, Nikkei 225, DJI, DAX, NASDAQ, and NYSE. We report MAE at the shortest ($H=1$) and longest ($H=10$) horizons, with 95\% moving-block bootstrap CIs in brackets.}
\label{tab:ablations-full-panel}
\centering
\footnotesize
\setlength{\tabcolsep}{4pt}
\renewcommand{\arraystretch}{1.15}
\begin{tabular}{@{}lcccccc@{}}
\toprule
& \multicolumn{2}{c}{\textbf{Panel A: KOSPI}} & \multicolumn{2}{c}{\textbf{Panel B: Nikkei 225}} & \multicolumn{2}{c}{\textbf{Panel C: DJI}} \\
\cmidrule(lr){2-3}\cmidrule(lr){4-5}\cmidrule(lr){6-7}
\textbf{Variant} & \textbf{H=1} & \textbf{H=10} & \textbf{H=1} & \textbf{H=10} & \textbf{H=1} & \textbf{H=10} \\
\midrule
DSTNet full                   & \mc{19.78}{17.45}{23.11} & \mc{63.08}{52.64}{75.78} & \mc{300.74}{271.14}{349.52} & \mc{922.44}{791.03}{1101.35} & \mc{173.14}{151.94}{198.38} & \mc{620.80}{520.96}{751.92} \\
(a) Without burn-in exclusion & \mc{20.09}{17.68}{23.39} & \mc{63.52}{54.98}{78.52} & \mc{302.02}{268.84}{347.81} & \mc{926.32}{794.39}{1103.50} & \mc{173.83}{154.98}{202.31} & \mc{630.48}{534.29}{763.76} \\
(b) Joint STST                & \mc{19.73}{17.53}{22.54} & \mc{63.44}{54.51}{76.11} & \mc{300.61}{267.98}{347.30} & \mc{920.07}{765.86}{1106.02} & \mc{172.40}{153.66}{202.11} & \mc{621.41}{538.19}{745.48} \\
(c) No-DST (static snapshot)  & \mc{20.57}{18.31}{23.59} & \mc{67.58}{56.42}{82.86} & \mc{308.76}{273.34}{355.87} & \mc{979.92}{833.01}{1202.23} & \mc{180.09}{158.67}{210.16} & \mc{670.08}{561.06}{800.12} \\
(d) No-VAF (mean fusion)      & \mc{20.26}{17.93}{23.06} & \mc{65.01}{54.69}{77.50} & \mc{306.50}{273.22}{352.18} & \mc{949.03}{812.06}{1145.93} & \mc{177.39}{156.05}{206.98} & \mc{643.43}{538.82}{796.62} \\
\midrule
& \multicolumn{2}{c}{\textbf{Panel D: DAX}} & \multicolumn{2}{c}{\textbf{Panel E: NASDAQ}} & \multicolumn{2}{c}{\textbf{Panel F: NYSE}} \\
\cmidrule(lr){2-3}\cmidrule(lr){4-5}\cmidrule(lr){6-7}
\textbf{Variant} & \textbf{H=1} & \textbf{H=10} & \textbf{H=1} & \textbf{H=10} & \textbf{H=1} & \textbf{H=10} \\
\midrule
DSTNet full                   & \mc{88.92}{79.95}{103.10} & \mc{317.71}{275.01}{385.34} & \mc{114.36}{100.92}{132.10} & \mc{364.00}{304.05}{434.01} & \mc{93.01}{81.63}{107.80} & \mc{366.27}{306.97}{438.58} \\
(a) Without burn-in exclusion & \mc{89.87}{79.55}{103.50} & \mc{319.16}{267.40}{385.10} & \mc{115.36}{103.49}{134.52} & \mc{366.45}{313.14}{440.38} & \mc{93.54}{82.57}{106.45} & \mc{372.15}{317.27}{444.57} \\
(b) Joint STST                & \mc{89.44}{80.52}{102.09} & \mc{317.45}{264.88}{385.30} & \mc{114.02}{101.40}{132.51} & \mc{366.42}{308.43}{438.22} & \mc{93.24}{82.63}{108.78} & \mc{364.87}{313.07}{436.52} \\
(c) No-DST (static snapshot)  & \mc{92.42}{82.01}{105.36} & \mc{343.23}{285.64}{412.66} & \mc{118.67}{106.11}{138.88} & \mc{389.52}{332.76}{471.15} & \mc{97.16}{86.70}{113.60} & \mc{392.57}{333.24}{480.15} \\
(d) No-VAF (mean fusion)      & \mc{91.96}{81.49}{105.75} & \mc{326.87}{275.37}{404.14} & \mc{117.61}{105.18}{133.75} & \mc{378.16}{318.54}{458.26} & \mc{95.37}{84.67}{108.75} & \mc{375.35}{323.88}{454.89} \\
\bottomrule
\end{tabular}
\end{table*}


\end{document}